\numberwithin{equation}{section}
\numberwithin{theorem}{section}
\newcommand*{\rom}[1]{\expandafter\@slowromancap\romannumeral #1@}
\begin{document}
\title{\bf A Diffusion Approximation Theory of Momentum SGD in Nonconvex Optimization\footnote{Working in progress.}}
\author{Tianyi Liu,~~Zhehui Chen,~~Enlu Zhou,~~Tuo Zhao\thanks{T. Liu, Z. Chen, E. Zhou, and T. Zhao are affiliated with School of Industrial and Systems Engineering at Georgia Tech; Tuo Zhao is the corresponding author; Email: tourzhao@gatech.edu.}}
\date{}
\maketitle
\begin{abstract}
Momentum Stochastic Gradient Descent (MSGD) algorithm has been widely applied to many nonconvex optimization problems in machine learning, e.g., training deep neural networks, variational Bayesian inference, and etc. Despite its empirical success, there is still a lack of theoretical understanding of convergence properties of MSGD. To fill this gap, we propose to analyze the algorithmic behavior of MSGD by diffusion approximations for  nonconvex optimization problems with strict saddle points and isolated local optima.  Our study shows that the momentum \emph{helps escape from saddle points}, but \emph{hurts the convergence within the neighborhood of optima} (if without the step size annealing or momentum annealing). Our theoretical discovery partially corroborates the empirical success of MSGD in training deep neural networks. 
\end{abstract}

\section{Introduction} \label{Intro}
Nonconvex stochastic optimization naturally arises in many machine learning problems. Taking training deep neural networks as an example, given $n$ samples denoted by $\{(x_i, y_i)\}_{i=1}^{n}$, where $x_i$ is the $i$-th input feature and $y_i$ is the response, we solve the following optimization problem,
\begin{align}\label{obj-general}
\min_{\theta}\cF(\theta):=\frac{1}{n}\sum_{i=1}^n\ell(y_i,f(x_i,\theta)),
\end{align}
where $\ell$ is a loss function,  $f$ denotes the decision function based on the neural network, and $\theta$ denotes the  parameter associated with $f$.
Stochastic Gradient Descent (SGD), which has been known for a long time as stochastic approximation in the control and simulation literature \citep{robbins1951stochastic, borkar2000ode, kushner2003stochastic,borkar2009stochastic,fu2015handbook}, has been applied to solve machine learning problems such as \eqref{obj-general} \citep{newton2018recent}. Momentum Stochastic Gradient Descent (MSGD, \cite{polyak1964some}) is one of the most popular 
   variants of SGD.  
   {Specifically, at the $t$-th iteration, we uniformly sample $i$ from $(1,...,n)$. Then, we take
\begin{align}\label{SGD_momentum}
\theta^{(t+1)} =\theta^{(t)}-\eta\nabla \ell(y_i,f(x_i,\theta^{(t)}))+ \mu(\theta^{(t)}-\theta^{(t-1)}),
\end{align}
where  $\eta$ is the step size parameter and $\mu\in[0,1)$ is the parameter for controlling the momentum.  Note that when $\mu=0$, \eqref{SGD_momentum} is reduced to Vanilla Stochastic Gradient Descent (VSGD). }

Although SGD-type algorithms have demonstrated significant empirical success for training deep neural networks, their convergence properties for nonconvex optimization are still largely unknown. For VSGD, existing literature \citep{ghadimi2013stochastic} shows that it is guaranteed to converge to a first-order optimal solution (i.e., $\nabla \cF(\theta)=0$) under general smooth nonconvex optimization.

The theoretical investigation of MSGD is even more limited than that of VSGD. The momentum in \eqref{SGD_momentum}  has been observed to significantly accelerate computation in practice. To the best of our knowledge, we are only aware of \cite{ghadimi2016accelerated} in existing literature, which shows that MSGD is guaranteed to converge to a first-order optimal solution for smooth nonconvex problems. Their analysis, however, does not justify the advantage of the momentum in MSGD over VSGD.

To fill the gap between the significant empirical success and the lack of theoretical understanding of MSGD, we are interested in answering a  natural and fundamental question in this paper: 

\begin{center}\textbf{\emph{What is the role of the momentum in nonconvex stochastic optimization?}}\end{center}

The major technical bottleneck in analyzing MSGD and answering the above question comes from the nonconvex optimization landscape of these highly complicated problems, e.g., training large recommendation systems and deep neural networks. We propose to analyze MSGD for nonconvex optimization problems under the assumption of isolated local optima and strict saddle points. This allows us to make progress toward understanding MSGD and gaining new insights on more general problems. Specifically, we consider the following problem:
\begin{align*}
\min_{x\in\RR^d} \cF(x)=\EE[f(x,\xi)],	
\end{align*}
where $\xi$ is a random variable representing the noise, and $f(x,\xi)$ is nonconvex in $x$ given any realization of $\xi$. We assume that the nonconvex landscape has the following structures: (1) every local optimum has positive curvatures along all directions; (2) there always exist negative curvatures around saddle points (strict saddle property). 

The strict saddle property is shared by several popular nonconvex optimization problems arising in machine learning and signal processing, including streaming principle component analysis (PCA), matrix regression/completion/sensing, independent component analysis, partial least square multiview learning, and phase retrieval \citep{ge2016matrix,li2016symmetry,sun2016geometric}. Moreover, since there is a significant lack of understanding the optimization landscape of general nonconvex problems, many researchers suggest that analyzing strict saddle optimization problems should be considered as the first and important step towards understanding the algorithmic behaviors in general nonconvex optimization. We also want to remark that our analysis can be extended to connected local optima cases. Doing so requires more technical machinery instead of fundamental insights. Therefore, we present the analysis for the isolated optima case for readability and simplicity, and it has already conveyed our core ideas on the effect of momentum.

By making use of the diffusion approximation of stochastic optimization, we provide global and local analysis of MSGD. Specifically, to study the global dynamics, we transfer the discrete time trajectory to a continuous time one by interpolation and prove that asymptotically this continuous time solution trajectory of MSGD  converges weakly to the solution of an appropriately constructed ODE. This ODE approximation shows that {\it the momentum helps traverse among stationary points in the non-stationary region}, where the variance of the stochastic gradient can be neglected compared with the large magnitude of the gradient. Intuitively, with the help of the momentum, the algorithm makes more progress along the descent direction. Thus, the momentum can accelerate the algorithm in this region  by a factor of $\frac{1}{1-\mu}$.

ODE approximation, however, cannot justify how momentum works in the stationary area where the variance of the stochastic gradient dominates the update. To highlight the effect of the variance, we consider the asymptotic  behavior of the normalized estimation error obtained by MSGD around the stationary points.  We show that the continuous time interpolation of the normalized error sequence converges weakly to a solution of an approximately constructed SDE. By analyzing this SDE solution, we find that the momentum can play different but important roles around saddle points and local optima.

\begin{itemize}
	
	\item \emph{The momentum helps escape from the neighborhood of saddle points}:  In this region, since the gradient diminishes, the  variance of the stochastic gradient  dominates the algorithmic behavior. Our analysis indicates that the momentum greatly increases  the variance and perturbs the algorithm more aggressively. Thus, it becomes harder for the algorithm to stay around saddle points. In addition, the momentum also encourages more aggressive  exploitation, and in each iteration, the algorithm makes more progress along the descent direction by a factor of $\frac{1}{1-\mu}$, where $\mu$ is the momentum parameter, compared with the VSGD. 
	
	
	\item  \emph{The momentum hurts the convergence within the  neighborhood of  local optima}: Similar to the neighborhood of saddle points, the gradient dies out, and the  variance of the stochastic gradient dominates. Since the momentum increases the variance, it is harder for the algorithm to enter the small neighborhood. To this respect, the momentum hurts in this region. We suggest to apply a step size annealing scheme to neutralize the large variance introduced by momentum within the neighborhood of local optima. 

\end{itemize}

{

Our ODE/SDE approximation analysis justifies the role of momentum in both stationary and non-stationary areas. However,  given the complicated nonconvex landscape,  our diffusion approximation analysis cannot establish the second order convergence guarantee and the asymptotic convergence rate of MSGD. Therefore, we further provide a simple but highly non-trival example, streaming PCA, to illustrate our characterization of the effect of momentum and also establish the asymptotic convergence results. 

Streaming PCA is a nonconvex problem with only one global optimum and $d-1$ strict saddle points up to sign change, where $d$ is the dimension. Its optimization landscape contains the following three regions:
\begin{itemize}
\item {$\boldsymbol{\cR_1}$}: The region containing the neighborhood of strict saddle points with negative curvatures;

\item $\boldsymbol{\cR_2}$: The region including the set of points whose gradient has sufficiently large magnitude;

\item $\boldsymbol{\cR_3}$: The region containing the neighborhood of all local optima  with positive curvatures along all directions. 
\end{itemize}

 By studying the corresponding mean ODE and SDE, we show that {\it with arbitrary initialization, MSGD can converge to the global optimum.} We provide asymptotic convergence rates of MSGD which precisely quantify the acceleration by momentum in $\boldsymbol{\cR_1}$ and $\boldsymbol{\cR_2}$. Meanwhile, we also show that with proper step size annealing scheme implemented,  MSGD can achieve the same convergence rate as VSGD in $\boldsymbol{\cR_3}.$
 
{Our characterization helps explain some phenomena observed when training deep neural networks.} There have been some  empirical observations and theoretical results \citep{choromanska2015loss} showing that saddle points are the major computation bottleneck, and VSGD  usually spends most of the time traveling along saddle and non-stationary regions. Since the momentum helps in both regions, we can find in practice MSGD performs better than VSGD. In addition, from our analysis, the momentum hurts convergence within the neighborhood of the optima. However, we can address this problem by decreasing the step size or the momentum parameter.  
 }
 
We further verify our theoretical findings through numerical experiments on training a residual network~\citep{he2016deep}, using both CIFAR-10 and CIFAR-100 datasets. The experimental results show that the algorithmic behavior of MSGD is consistent with our analysis. Moreover, we observe that with a proper initial step size and a proper step size annealing process, MSGD eventually achieves better generalization accuracy than that of VSGD in training neural networks.

To the best of our knowledge, our proposed theory is the first attempt towards understanding the role of momentum in nonconvex stochastic optimization beyond the convergence to stationary solutions. Taking our results as an initial start, we expect more sophisticated and stronger follow-up work for analyzing momentum SGD, e.g., extending our asymptotic theory to its nonasymptotic counterpart. Please refer to Section \ref{discussion} for more detailed discussions.

The rest of the paper is organized as follows: Section  \ref{Algorithm} introduces our nonconvex optimization problem settings and MSGD for solving the problem. Sections \ref{Section_ODE} and \ref{Section_SDE} analyze the global and local dynamics of MSGD based on diffusion approximations, respectively; Section \ref{section_pca} studies Streaming PCA and provides an asymptotic convergence rate analysis; Section \ref{Numerical} presents the numerical experiments on both streaming PCA and training deep neural networks to demonstrate our theoretical results; Section \ref{discussion} makes some further discussions on the related literature, our theoretical and experimental results and future work; The Appendix includes all technical details. 

\section{Momentum SGD}\label{Algorithm}

Recall that we study MSGD for a general nonconvex optimization problem as follows,
\begin{align}\label{general_problem}
\min_{x\in\RR^d} \cF(x)=\EE[f(x,\xi)],	
\end{align}
where $\xi$ is a random variable representing the noise and $f(x,\xi)$ is nonconvex in $x$ given any realization of $\xi.$ We assume that there is a stochastic gradient oracle taking $x'\in \RR^d$ as input and outputting $\nabla f(x',\xi'),$ where $\xi'$ is a realization of the noise $\xi,$ such that
$$\EE[\nabla f(x',\xi')] = \nabla \cF(x'), ~~~ \Cov[\nabla f(x',\xi')] = \Sigma.$$  

Given a vector $ v = (v^{(1)}, \ldots, v^{(d)})^\top\in\RR^{d}$, we define the vector norm: $\norm{v}^2_2 = \sum_j(v^{(j)})^2$.  We impose the following standard assumptions on the objective $\cF(x)$ and $f(x,\xi).$

\begin{assumption}\label{assumption_general}
~
\begin{itemize}
	\item Uniform Boundedness: There exists a constant $C$ such that 
	$\norm{\nabla f(x,\xi)}_2\leq C,~~ \forall x, \xi. $
	\item Lipschitz Continuous: There exists a constant $L$ such that 
	   $$\norm{\nabla \cF(x_1)-\nabla \cF(x_2)}_2\leq L\norm{x_1-x_2}_2, \quad\forall x_1,x_2\in \RR^d .$$
\end{itemize}
\end{assumption}

In general, the optimization landscape of \eqref{general_problem} can be very complicated with numerous local optima and saddle points. Here, we consider the case where all the saddle points satisfy the strict saddle property and every local optimum is isolated as stated in Assumption \ref{ass_isolated}. 
\begin{assumption}[Isolated  Optima and Strict Saddle Points]\label{ass_isolated}~

Denote $S=\{x\in\RR^d|\nabla\cF(x)=0\}$ as the set of all stationary points.  For  $x'\in S,$ $x'$ must be one of the following:
\begin{itemize}
\item A strict saddle point such that  $\lambda_{\min}(\nabla^2\cF(x'))<0.$
\item An isolated local optimum such that  $\lambda_{\min}(\nabla^2\cF(x'))>0.$
\end{itemize}
\end{assumption}
We want to remark that our analysis can also be extended to study the case of connected local optima. However, the proof will be much more involved. Please refer to Section \ref{discussion} for detailed discussion.

We apply SGD with Polyak's momentum \citep{polyak1964some} (MSGD for short) to solve  \eqref{general_problem}. At the $k$-th iteration, MSGD takes the following update
\begin{align}\label{alg:0}
	x_{k+1}=x_k-\eta\nabla f(x_k,\xi_k)+\mu(x_k-x_{k-1}),
\end{align}
where $\eta >0$ is the step size, and $\mu(x_k-x_{k-1})$ is the momentum with the momentum parameter $\mu \in [0,1)$. When $\mu=0$, \eqref{alg:0} is reduced to SGD. We remark that though we focus on Polyak's momentum, extending our theoretical analysis to Nesterov's momentum  \citep{nesterov1983method} is straightforward.

\section{Analyzing Global Dynamics by ODE}\label{Section_ODE}
We first analyze the global dynamics of Momentum SGD (MSGD) by taking a diffusion approximation approach. Roughly speaking, by taking the step size $\eta\rightarrow 0,$ the continuous-time interpolation of the iterations $\{x_k\}_{k=0}^\infty$, which can be treated as a stochastic process with { C\`adl\`ag paths (right continuous with left-hand limits), becomes a continuous stochastic process. For MSGD, this continuous process follows an ODE with a unique solution. This ODE helps us understand how the momentum affects the global dynamics. We remark that the momentum parameter $\mu$ is a \emph{fixed constant} in our analysis.

More precisely, we define the continuous-time interpolation  $X^\eta(\cdot)$ of the solution trajectory of the algorithm as follows: 
 for $t\geq 0$, set $X^\eta(t)=x^{\eta}_k$ on the time interval $[k\eta,k\eta+\eta).$  Throughout our analysis, similar notations are applied to other interpolations (e.g. $H^\eta(t)$, $U^\eta(t)$).  We then answer the following question:
\begin{center}
	\textbf{\emph{Does the solution trajectory sequence  $\{X^\eta(\cdot)\}_\eta$ converge weakly as $\eta$ goes to zero? \\ If so, what is the limit?}}
\end{center} 

This question has been studied for VSGD in existing literature for special nonconvex optimization problems, such as streaming PCA in  \cite{chen2017online}.  The Infinitesimal Perturbation Analysis (IPA) technique is widely used to show that under some regularity conditions,
$X^\eta(\cdot)$ converges weakly to a solution of the following ODE:
\begin{equation}\label{ODE_SGD}
\dot{X}(t) = -\nabla \cF(X(t)).
\end{equation}
This method, however, cannot be applied to analyze MSGD due to the additional momentum term. Here, we explain why this method fails. Rewrite the algorithm \eqref{alg:0} as
\begin{align*}
\delta_{k+1}=\mu \delta_k-\eta\nabla f(x_k,\xi_k),\quad x_{k+1}=x_{k}+\delta_{k+1}.
\end{align*}
One can easily check $(\delta_k, x_k)$ is Markovian.  To apply IPA, the Infinitesimal Conditional Expectation (ICE)  must converge to a constant. However, the ICE for MSGD, which can be calculated as follows:
\begin{align*}
\frac{\EE[\delta_{k+1}-\delta_k|\delta_k,x_k]}{\eta}=\frac{(\mu-1)\delta_k}{\eta}-\nabla\cF(x_k),
\end{align*}
goes to infinity (blows up). Thus, IPA is not applicable here.

To address this challenge, we provide a new technique to prove the weak convergence and find the desired ODE. In a nutshell, we first prove rigorously the weak convergence of the trajectory sequence, and then using martingale theory to find the ODE. For self-containedness, we provide a summary on the pre-requisite weak convergence theory  in Appendix \ref{summary}.

{
Under Assumption \ref{assumption_general}, we characterize the global behavior of MSGD as follows.
\begin{theorem}\label{Thm1}
	Let $D^d[0,\infty)$ be the space of $\RR^d$-valued operators  which are right continuous and have left-hand limits for each dimension.  Suppose $x_0=x_1\in\RR^d$. Then for each subsequence of $\{X^{\eta}(\cdot)\}_{\eta>0}$, there exists a further subsequence  and a process $X(\cdot)$ such that
	$X^{\eta}(\cdot)\Rightarrow X(\cdot)$ in the weak sense as $\eta\rightarrow0$ through the convergent subsequence in the space $D^d[0,\infty)$, where $X(\cdot)$ satisfies the following ODE:
\begin{align}\label{ODE0}
\dot{X}=-\frac{1}{1-\mu}\nabla\cF(X),\quad X(0)=x_0.
\end{align}
Moreover,  for any $\delta>0,$ there exists a sequence
$T^{\eta,\delta}\rightarrow \infty$ such that
$$\limsup_{\eta\rightarrow 0} \PP\left(\exists t\leq T^{\eta,\delta},~ X^{\eta}(t)\neq N_\delta(S)\right)= 0,$$
where $N_\delta(S)$ is the $\delta$-neighborhood of the stationary points.

\end{theorem}

\begin{proof}[ Proof Sketch.]
To prove this theorem, we first show  the trajectory sequence  $\{X^\eta(\cdot)\}_\eta$ converges weakly. By Prokhorov's Theorem \ref{Thm_Prohorov} (in Appendix \ref{summary}), we need to prove tightness, which means   $\{X^\eta(\cdot)\}_\eta$ is bounded in probability in space $D^d[0,\infty)$. This can be proved by Theorem \ref{Thm_Tight} (in Appendix \ref{summary}), which requires the following two conditions: (1) $x_k$ must be bounded in probability for any $k$ uniformly in step size $\eta$; (2) The maximal discontinuity (the largest difference between two iterations,  i.e., $\max_k\{x_{k+1}-x_{k}\}$) must go to zero as $\eta$ goes to $0.$ This can be shown by using the bounded gradient assumption.

We next compute the weak limit.  For simplicity, we define
\begin{align*}
\textstyle\beta_k^\eta=-\sum_{i=0}^{k-1}\mu^{k-i}[\nabla f(x_i^\eta,\xi_i)-\nabla \cF(x_i^\eta)]~~\text{and}~~\epsilon_k^\eta=-(\nabla f(x_k^\eta,\xi_k)-\nabla \cF(x_k^\eta)).
\end{align*}
We then rewrite the algorithm as follows:
\begin{align}
m_{k+1}^\eta=m_k^\eta+(1-\mu)[-m_k^\eta+\widetilde{M}(x_k^\eta)],~~x_{k+1}^\eta=x_{k}^\eta+\eta (m^\eta_{k+1}+\beta^\eta_k+\epsilon^\eta_k),\label{2Tsystem2}
\end{align}
where $\displaystyle \widetilde{M}(x)=-(1-\mu)^{-1}\nabla\cF(x).$ 
 The basic idea of the proof is to  view  \eqref{2Tsystem2} as a two-time-scale algorithm \citep{borkar1997stochastic,borkar2009stochastic}, where  $m_k$ is updated with a larger step size $(1-\mu)$  and thus under a faster time-scale, and $v_k$ is under a slower one. Then we can treat the slower time-scale iterate $v$ as static and replace the faster time-scale iterate $m$ by its stable point in term of this fixed $v$ in \eqref{2Tsystem2}. This stable point can be shown to be $\widetilde{M}(x)$.

We then show that the continuous-time interpolation of the error $[m^\eta_{k+1}-\widetilde{M}(x^\eta_k)]+\beta^\eta_k+\epsilon^\eta_k$ converges weakly to a Lipschitz continuous martingale with zero initialization. From the martingale theory, we know such kind of martingales must be a constant. Thus, the error sequence converges weakly to zero, and what is left  is actually the discretization of ODE \eqref{ODE0}. Please refer to Appendix \ref{proofODE} for the detailed proof. 
\end{proof}

Note that for any solution  $X(t)$ to \eqref{ODE_SGD}, i.e., the mean ODE of SGD, $X(\frac{1}{1-\mu}t)$ is a solution to \eqref{ODE0}. This implies that asymptotically, MSGD is $\frac{1}{1-\mu}$ faster than SGD to converge to the neighborhood of a stationary point given the same initialization. Intuitively, with the help of the momentum, the algorithm makes more progress along the descent direction, and therefore momentum can accelerate the algorithm asymptotically.

However, since the noise of the stochastic gradient diminishes as $\eta\rightarrow 0,$ such a deterministic ODE-based approach is insufficient to analyze the local behavior of MSGD around stationary points where the noise plays a dominant role over the vanishing gradient. Thus, we resort to the following SDE-based approach for a more precise characterization.

\section{Analyzing Local Dynamics by SDE}\label{Section_SDE}

To characterize the local algorithmic behavior, we need to rescale the influence of the noise. For this purpose, we consider the \emph{normalized error} $\frac{x_k-x^*}{\sqrt{\eta}}$ under the diffusion approximation framework, where $x^*\in S$ is a stationary point. Different from the previous ODE-based approach, we obtain an SDE approximation here. Intuitively, the previous ODE-based approach is analogous to the Law of Large Number for random variables, while the SDE-based approach serves the same role as Central Limit Theorem. 

Recall that under Assumption \ref{ass_isolated}, the optimization problem \eqref{general_problem} has strict saddle points and isolated local optima. We remark that the assumption on isolated local optima helps avoid the cases where the normalization error explodes when the iterate wanders along the connected local optima. Our analysis can be further extended to handle connected local optima. However, the analysis will be much more complicated. Please refer to Section \ref{discussion} for detailed discussion. For consistency, we first study the algorithmic behavior around the local optimum.
{
\begin{remark}
 The $\sqrt{\eta}$ normalization actually normalizes the error by its standard deviation. Specifically, consider the $\lfloor1/\eta\rfloor-$th iterate of SGD initialized at the stationary point $x^*,$ $$x_{\lfloor1/\eta\rfloor}= x^*-\eta\sum_{i=0}^{\lfloor1/\eta\rfloor-1}\nabla f(x_i) -\eta\sum_{i=0}^{\lfloor1/\eta\rfloor-1} \xi_i ,$$ where $f(x)$ is the objective to be maximized and $\{\xi_i\}_i$ are the noise in the stochastic gradient  i.i.d. sampled from some unknown distribution with mean zero and bounded variance. Given the continuity of the gradient, $\nabla f(x_i)$ is approximately zero and noise will dominate  around the stationary point $x^*.$ Therefore, $x_{\lfloor1/\eta\rfloor}$ can be further approximated as follows.
$$x_{\lfloor1/\eta\rfloor}\approx x^*-\eta\sum_{i=0}^{\lfloor1/\eta\rfloor-1} \xi_i .$$ 
Thus, the variance of the error $x_{\lfloor1/\eta\rfloor}-x^*$ is of order $O(\eta)$:
$$\Var\left(x_{\lfloor1/\eta\rfloor}-x^*\right ) = \eta^2 \Var\left(\sum_{i=0}^{\lfloor1/\eta\rfloor-1}\xi_i\right) = O(\eta).$$
Therefore, we actually normalize the error by its standard deviation $O(\sqrt{\eta})$, which is analogous to rescaling the sample sum by $\sqrt{N}$ in Central Limit Theorem.
 \end{remark}
}

\subsection{Local Dynamics Around Local Optima }\label{around_opt}

We first consider the algorithmic behavior of MSGD when it is around a local optimum $x^*$. Define the normalized process $u_{k}^{\eta}=(x_{k}^{\eta}-x^*)/\sqrt{\eta},$ where $\lambda_{\min}(\nabla^2 \cF(x^*))>0.$ Accordingly, $U^{\eta}(t)=(X^{\eta}(t)-x^*)/\sqrt{\eta}.$ The next theorem characterizes the limiting process of $U^{\eta}(t).$


\begin{theorem}\label{Thm_SDE1}
	As $\eta\rightarrow 0$, $\{U^{\eta}(\cdot)\}$  converges weakly to the unique stationary solution of 
\begin{align}\label{SDE1}
dU= -\frac{1}{1-\mu} \nabla^2 \cF(x^*) U dt+\frac{1}{1-\mu} dW_t,
	\end{align}
where  $\{W_t\}$ is a Wiener process with covariance matrix $\Sigma =\EE[\nabla f(x^*,\xi)\nabla f(x^*,\xi)^\top].$ 
\end{theorem}}
Note that  our analysis is very different from that in \cite{chen2017online} because of the failure of IPA due to the similar blow-up issue. We remark that our technique mainly relies on Theorem \ref{thm8_1} (in Appendix \ref{summary}) from \cite{kushner2003stochastic}. Since the proof is much more sophisticated and involved than IPA, we introduce the key technique, Fixed-State-Chain, in a high level.

\begin{proof}[Proof Sketch.]
 Note that the algorithm can be rewritten as
\begin{align*}
x_{k+1}^{\eta,i}&\textstyle=x_{k}^{\eta,i}-\eta\Big[\sum_{j=1}^{k-1}\mu^{k-j}\nabla f(x_j^\eta,\xi_j)+\nabla \cF(x_k)\Big]-\eta\left[\nabla f(x_k^\eta,\xi_k) - \nabla \cF(x_k^\eta)\right].
\end{align*}
Here, for a vector $x\in \RR^d$ and an integer $i\leq d$, $x^{(i)}$ represents the $i$-th dimension of $x$. We define
\begin{align*}
\zeta^\eta_{k}&\textstyle =-\Big[\sum_{j=1}^{k-1}\mu^{k-j}\nabla f(x_j^\eta,\xi_j)\Big],~~Z^\eta_{k} =g(\zeta^\eta_k,x^\eta_k)+\gamma_{k}^\eta,\\
\gamma_{k}^\eta & =\nabla \cF(x_k^\eta) - \nabla f(x_k^\eta,\xi_k) \quad\textrm{and}\quad g(\zeta_k^\eta,x_k^\eta) =\zeta_k^\eta-\nabla \cF(x_k^\eta) .
\end{align*}
Here, $g$ is the accelerated gradient flow, and $\gamma_k^\eta$ is the noise.   Then the algorithm becomes
$$ x_{k+1}^{\eta}=x_{k}^{\eta}+\eta Z_{k}^{\eta}=x_{k}^{\eta}+\eta g(\zeta_k^{\eta},x_k^{\eta})+\eta \gamma_{k}^{\eta},$$
and thus we have
$ u_{k+1}^{\eta}=u_{k}^{\eta}+\sqrt{\eta}[g(\zeta_k^{\eta},x_k^{\eta})+\gamma_{k}^{\eta}].$ Note that $ g(\zeta_k^{\eta},x_k^{\eta}) \in \mathcal{F}_k^\eta~\textrm{and}~\EE[\gamma_{k}^{\eta}|\mathcal{F}_k^\eta]=0$ imply that the noise $\{\gamma_{k}^{\eta}\}$ is a martingale difference sequence.

We then manipulate the algorithm to extract the Markov structure of the algorithm in an explicit form. To make it clear, given $X$, there exists a transition function $P(\cdot,\cdot|X)$ such that
$$P\{\zeta_{k+1}^{\eta}\in\cdot|\mathcal{F}_k^{\eta}\}=P(\zeta_k^{\eta},\cdot|X=x_k^{\eta}).$$
This comes from the observation $\zeta_{k+1}^{\eta}=\mu\zeta_{k}^{\eta}-\mu\nabla  f(x_k^\eta,\xi_k^{\eta}),$
where the randomness only comes from $\xi_k$ when state $x_k$ is given. Then the fixed-state-chain  refers to the Markov chain with transition function $P(\cdot,\cdot|X)$ for a fixed $X$. The state of this Markov chain is denoted by $\{\zeta_k(X)\}$.  For notational simplicity, let $\tilde{M}(x)=-\frac{1}{1-\mu}\nabla\cF(x).$ We then  decompose  $x_{k+1}^{\eta}-x_{k}^{\eta}$ as follows:
\begin{align}\label{eq11}
x_{k+1}^{\eta}-x_{k}^{\eta}
&=\eta\tilde{M}(x_k^\eta)+\eta \gamma_{k}^{\eta}+\eta[g(\zeta_k(x_k^{\eta}),x_k^{\eta})-\tilde{M}(x_k^\eta)]\notag\\
&\hspace{0.5in}+\eta [g(\zeta_k^{\eta},x_k^{\eta})-g(\zeta_k(x_k^{\eta}),x_k^{\eta})]=\eta\tilde{M}(x_k^\eta)+\eta W_{k}^{\eta}.		
\end{align}
The error term $W_{k}^{\eta}$ in \eqref{eq11} comes from three sources:  (1) difference between the fixed-state-chain and the limiting process: $g(\zeta_k(x_k^{\eta}),x_k^{\eta})-\tilde{M}(x_k^\eta)$; (2) difference between the accelerated gradient flow and the fixed-state-chain: $g(\zeta_k^{\eta},x_k^{\eta})-g(\zeta_k(x_k^{\eta}),x_k^{\eta})$; (3) the noise $\gamma_{k}^\eta$.

We handle them separately and combine the results together to get the variance of  $ W_{k}^{\eta,i}$. 
Note that $\{u_{k}^{\eta}\}$ satisfies the following update: $$u_{k+1}^{\eta}-u_{k}^{\eta}=\eta \frac{\tilde{M}(x_k^\eta)}{\sqrt{\eta}}+\sqrt{\eta} W_{k}^{\eta}.$$ Together with the fact that around the optimum $x^*$, $ \tilde{M}(x)=-\frac{1}{1-\mu}\nabla^2 \cF(x^*) (x-x^*)+o\left(\|(x-x^*)\|_2\right),$ we further obtain
\begin{align}\label{discrete_SDE}
 \frac{u_{k+1}^{\eta}-u_{k}^{\eta}}{\eta}&=-\frac{1}{1-\mu}\nabla^2 \cF(x^*) u_k^\eta+\frac{W_{k}^{\eta}}{\sqrt{\eta}}+o\left(|u_{k}^{\eta,1}|\right).
\end{align}
After calculating the variance of $W$, we see that essentially \eqref{discrete_SDE} is the discretization of SDE \eqref{SDE1}.
For the detailed proof, please refer to Appendix \ref{proof_Thm_SDE1}.
\end{proof}

Note that \eqref{SDE1} admits an explicit solution which is known as an Ornstein-Uhlenbeck (O-U) process \citep{oksendal2003stochastic} having the following expression:
\begin{align*}
U(t)= \exp\left(-\frac{1}{1-\mu}\nabla^2\cF(x^*)t\right) U(0) + \int_0^t \exp\left(\frac{1}{1-\mu}\nabla^2\cF(x^*)(t-s)\right) \frac{\Sigma^{\frac{1}{2}}}{1-\mu}dB_t.
\end{align*} 
Given $U(0),$ the above formula shows that $U(t)$ is Gaussian for all $t>0.$ Therefore, we can identify the limiting density of $U(t)$ as $t\rightarrow\infty$ by figuring out the limiting mean and covariance matrix. In fact the mean $m(t) = \EE(U(t))$ and covariance matrix $\rho_t =\EE[U(t)U(t)^\top]$ satisfy the following ODEs, respectively:
\begin{align*}
d m(t) &= -\frac{\nabla^2\cF(x^*)}{1-\mu}m(t) dt,\\
d \rho(t) &= -\frac{1}{1-\mu}\left(\nabla^2\cF(x^*)\rho + \rho \nabla^2\cF(x^*) \right)+\frac{1}{(1-\mu)^2}\Sigma.
\end{align*}
Since $\nabla^2\cF(x^*)$ is positive definite, we have $m(t)\rightarrow 0$ and 
$$\rho_\mu= \lim_{t\rightarrow\infty}\rho(t) = \int_0^\infty  \exp\left(-\frac{1}{1-\mu}\nabla^2\cF(x^*)s\right) \frac{1}{(1-\mu)^2}\Sigma \exp\left(-\frac{1}{1-\mu}\nabla^2\cF(x^*)s\right) ds<\infty.$$
Therefore, when MSGD enters the neighborhood of a local optimum, it will stay near the local optimum and behave like a Brownian motion. Moreover, by a change of variables, we can rewrite $\rho_\mu$ as follows:
\begin{align*}
\rho_\mu &=\int_0^\infty  \exp\left(-\frac{1}{1-\mu}\nabla^2\cF(x^*)s\right) \frac{1}{(1-\mu)^2}\Sigma \exp\left(-\frac{1}{1-\mu}\nabla^2\cF(x^*)s\right) ds\\
&= \frac{1}{(1-\mu)} \int_0^\infty  \exp\left(-\nabla^2\cF(x^*)s\right)\Sigma \exp\left(-\nabla^2\cF(x^*)s\right) ds\\
& = \frac{1}{(1-\mu)}\rho_0.
\end{align*}

We see clearly that the momentum essentially increases the variance of the normalized error by a factor of $\frac{1}{1-\mu}$  around the local optimum compared with VSGD. Thus, it becomes harder for the algorithm to converge. The next theorem provides a more precise characterization of such a phenomenon. 
\begin{theorem}\label{lemma_phase3}
 Let $(\lambda_i, e_i)$'s be the eigenvalue, eigenvector pairs of $\nabla^2\cF(x^*)$ such that  $\lambda_1\geq \lambda_2 \geq ...\geq \lambda_d>0.$
Given a sufficiently small $\epsilon>0$ and $\phi=\sum_{i=1}^d \Sigma_{i,i} < \infty$, we need the step size $\eta$ satisfying
\begin{align}\label{eqconverge}
\eta<(1-\mu)\lambda_d\epsilon/(4\phi)
\end{align}
such that $X^\eta(t)$ enters the $\epsilon$-neighborhood of the local optimum with probability at least $3/4$ at some time $T_3$  after restarting the counter of time, i.e., 
$\norm{X^{\eta}(T_3)-x^*}_2^2\leq\epsilon,$
where
	\begin{align*}
 T_3\asymp\frac{(1-\mu)}{2\lambda_d}\cdot\log \Big(\frac{8\lambda_d\delta^2}{\lambda_d\epsilon-4\eta\phi}\Big), 
\end{align*}
given $\norm{X^{\eta}(0)-x^*}_2^2\leq \delta^2.$
\end{theorem}
Note that when $\mu = 0,$ we can choose the step size of VSGD as $\eta_0\asymp\frac{\lambda_d\epsilon }{4\phi}$, which does not satisfy \eqref{eqconverge} for $\mu$ close to $1$. This means that \emph{when using the same step size of VSGD,  MSGD fails to converge,  since the variance increased by the momentum becomes too large.} To handle this issue, we have to decrease the step size by a factor $1-\mu$, also known as the step size annealing, i.e.,
 \begin{align}\label{eta}
 \eta\asymp(1-\mu)\epsilon\lambda_d/(4\phi)\asymp(1-\mu)\eta_0.
 \end{align}	
We also want to remark that here the probability $3/4$ can be any constant in $(0,1).$ Theorem  \ref{lemma_phase3} implies the algorithm needs asymptotically at most
\begin{align*}
N_3\asymp\frac{T_3}{\eta}\asymp\frac{\phi}{\epsilon \lambda_d^2}\cdot\log \Big(\frac{8\lambda_d\delta^2}{\lambda_d\epsilon-4\eta_0\phi}\Big)
\end{align*}
iterations to converge to an $\epsilon$-optimal solution. Note that the $N_3$ does not depend on $\mu$. Therefore, MSGD does not have an advantage over VSGD around local optima. 

\subsection{Local Dynamics Around Saddle Points }
We then study the algorithmic  behavior around strict saddle points. Define the normalized process $u_{k}^{\eta}=(x_{k}^{\eta}-\hat x)/\sqrt{\eta},$ where $\hat x\in S, \lambda_{\min}(\nabla^2 \cF(\hat x))<0.$ Accordingly, $U^{\eta}(t)=(X^{\eta}(t)-\hat x)/\sqrt{\eta}.$ By the same SDE approximation technique used in Section~\ref{around_opt}, we  obtain the following theorem.
\begin{theorem}\label{SDE_Saddle}
For any $C>0,$ there exist $\delta>0$ and $\eta'>0$ such that
\begin{align}\label{eq_thm3_dim}
\sup_{ \eta<\eta'}\PP(\sup_{\tau>0} \norm{U^{\eta}(\tau)}_2\leq C)\leq 1-\delta.
\end{align}


\end{theorem}
\begin{proof}[Proof Sketch.] 
 We  prove \eqref{eq_thm3_dim} by contradiction. Assume the conclusion does not hold, that is there exists a constant $C>0,$ such that for any  $\eta'>0$ we have $$\sup_{\eta\leq \eta'}\PP(\sup_{\tau>0} |U^{\eta}(\tau)|\leq C)=1.$$   That implies there exists a sequence $\{\eta_n\}_{n=1}^\infty$ converging to $0$ such that 
\begin{align}\label{eq_contra}
\lim_{n\rightarrow\infty}\PP(\sup_{\tau>0} |U^{\eta_n}(\tau)|\leq C)= 1.
\end{align}
We next show that  this subsequence $\{U^{\eta_n}(\cdot)\}_n$ is tight. To do so, we need to verify two conditions of Theorem \ref{Thm_Tight0} in Appendix \ref{summary}.
By \eqref{eq_contra}, we know that condition (i) in Theorem \ref{Thm_Tight0} holds. We next check  condition(ii) in Theorem \ref{Thm_Tight0}. When $\sup_{\tau>0} |U^{\eta_n,i}(\tau)|\leq C$ holds, Assumption \ref{assumption_general} yields that $\norm{u_{k+1}^{\eta_n}-u_{k}^{\eta_n}}_2\leq C'\eta_n,$ where $C'$ is some constant. Thus, for any $t,\epsilon>0,$ we have $$\|U^{\eta_n}(t)-U^{\eta_n}(t+\epsilon)\|_2\leq \epsilon/\eta C'\eta=C'\epsilon,$$ or equivalently $$\varpi'_T (U^{\eta_n},\epsilon)\leq C'\epsilon, \forall T>0,$$ where $\varpi$ is the modulus of continuous defined in Definition \ref{modulus}.  Thus, condition (ii) in Theorem \ref{Thm_Tight0} holds. Then we have  $\{U^{\eta_n}(\cdot)\}_n$ is tight and thus converges weakly. Following similar lines to Theorem \ref{Thm_SDE1}, we can verify C.5-C.8 and show that 
 $\{U^{\eta_n}(\cdot)\}_n$ converges weakly to a solution of	
 \begin{align}\label{SDE_1}
dU= -\frac{1}{1-\mu} \nabla^2 \cF(\hat x) U dt+\frac{1}{1-\mu} dW_t.
  \end{align} 
The process defined by \eqref{SDE_1} is an unstable O-U process. When initialized at $\hat x,$ it has mean $0$ and exploding variance. When not initialized at $\hat x,$ it has exploding mean and variance. Thus, for any $\delta,$ there exist a time $\tau'$, such that
$$\PP(\|U(\tau')\|_1\geq C)\geq 2\delta.$$
Since $\{U^{\eta_n}\}_n$ converges weakly to $U,$ $\{U^{\eta_n}(\tau')\}_n$ converges in distribution to $U(\tau').$ This implies that there exists $N>0$, such that for any $n>N,$ 
$$|{\PP(\|U(T)\|_2\geq C)-\PP(\|U^{\eta_n}(T)\|_2\geq C)}|\leq {\delta}.$$
Then we find a $\tau'>0$ such that 
$$\PP(\|U^{\eta_n}(\tau')\|_2\geq C)\geq {\delta}, \quad\forall n>N, $$ 
or equivalently $$\PP(\|U^{\eta_n}(\tau')\|_2\leq C)< 1-{\delta},  \quad\forall n>N.$$ 
Since $\left\{\omega\big|\sup_\tau \|U^{\eta_n}(\tau)(\omega)\|_2\leq C\right\}\subset\left\{\omega\big|\|U^{\eta_n}(\tau')(\omega)\|_2<C\right\},$ we have 
$$\PP(\sup_\tau \|U^{\eta_n}(\tau)\|_2\leq C)\leq 1-{\delta}, \quad\forall n>N, $$ 
which leads to a contradiction with \eqref{eq_contra}. Our assumption does not hold. We prove Theorem \ref{SDE_Saddle}.
\end{proof}

Theorem~\ref{SDE_Saddle} implies that with a constant probability $\delta,$ MSGD escapes from the saddle points at some time $T_1$, i.e., $\norm{X^\eta(T_1)-\hat x}_2^2$ is greater than $\delta^2$ ($\delta=\cO (\sqrt\eta)$).  Note that  from the proof  of Theorem~\ref{SDE_Saddle}, when the step size $\eta$ is small, the process defined by SDE \eqref{SDE_1}  characterizes the local behavior of $X^\eta$ around saddle points.  For any fixed $\mu, $ let $U_\mu$ be the solution to \eqref{SDE_1}. Then we can verify that
\begin{align*}
\EE(U_\mu(t)) =	\frac{1}{1-\mu}\EE(U_0(t)), ~~~ \Var(U_\mu(t))  = \frac{1}{1-\mu}\Var(U_0(t)).
\end{align*}
More precisely,  we can obtain  the following proposition on the asymptotic escaping rate of MSGD.

\begin{theorem}\label{Time_Saddle}
	Let $\nabla^2\cF(\hat x) = P\Lambda P^\top$ be the eigenvalue decomposition of $\nabla^2\cF(\hat x),$ where $\Lambda = \diag(\lambda_1,..., \lambda_d)$ and $\lambda_1\geq \lambda_2\geq...\geq\lambda_d$ and $\lambda_d<0.$ Denote $\Sigma =\EE[\nabla f(\hat x,\xi)\nabla f(\hat x,\xi)^\top].$  Given a pre-specified $\nu\in(0,1)$, $\eta\asymp\eta_0 $, and $\delta=\cO(\sqrt{\eta})$, then the following result holds: We need at most 
		\begin{align}\label{T1}
	T_1\asymp \frac{(1-\mu)}{2|\lambda_d|}\log\left(\frac{2\eta^{-1}\delta^2(1-\mu)|\lambda_d|}{\Phi^{-1}\left(\frac{9}{16}\right)^2(P^\top \Sigma P)_{d,d}} +1\right),
	\end{align}
such that $\displaystyle \norm{X_\eta(T_1)-\hat{x}}_2^2\geq \delta^2$ with probability at least $\frac{3}{4}$, where $\Phi(x)$ is the CDF of the standard normal distribution.
\end{theorem}

Theorem \ref{Time_Saddle} suggests that we need asymptotically 
\begin{align*}
N_1 \asymp \frac{(1-\mu)\phi}{|\lambda_d|^2\epsilon}\log\left(\frac{2(1-\mu)\eta^{-1}\delta^2|\lambda_d|}{\Phi^{-1}\left(\frac{1+\nu/2}{2}\right)^2(P^\top \Sigma P)_{d,d}} +1\right)
\end{align*}
iterations to escape from saddle points.
 \emph{Thus, when using the same step size, MSGD can escape from saddle points in fewer iterations than VSGD by a factor of $1-\mu$.} This is due to  the fact that  the momentum can greatly increase  the variance and perturb the algorithm more aggressively. Thus, it becomes harder to stay around saddle points. Moreover, the momentum also encourages more aggressive  exploitation, and in each iteration, the algorithm  makes more progress along the descent direction by a factor of $\frac{1}{1-\mu}$.

In summary, compared with VSGD ($\mu = 0$), momentum accelerates escaping from saddle points by a factor of $1-\mu.$ However, momentum can also hurt the final convergence around the local optimum because of the increased variance. Therefore,  we suggest to decrease the step size by a factor $1-\mu$ in the later stage,  MSGD can then achieve the similar convergence rate as VSGD. Note that we can also decrease the momentum parameter $\mu$ instead of the step size $\eta.$ We will show in Section \ref{Numerical} that momentum annealing and step size annealing can both ensure the convergence of MSGD.

\section{Example: Streaming PCA}\label{section_pca}

In this section, we apply our convergence analysis to study the algorithmic behavior of MSGD and provide explicit convergence result for the streaming PCA problem formulated as follows.
\begin{align}\label{eqn:PCA}
\max_v\; v^{\top}\EE_{X\sim\cD}[XX^{\top}]v \quad\textrm{subject to}~~ v\in \mathbb{S}=\{v\in\RR^{d}\,|\,\|v\|_2=1\}.
\end{align}

 For notational simplicity, we denote the covariance matrix as $\Sigma=\EE[XX^{\top}]$ .  Before we proceed, we impose the following assumption on $\Sigma$:
\begin{assumption}\label{Ass0}
The covariance matrix $\Sigma$ is positive definite with eigenvalues  $\lambda_1> \lambda_2\geq...\geq\lambda_d>0$ and associated normalized eigenvectors $v^1,\,v^2,\,...,\,v^d$.  Moreover, there exists an orthogonal matrix Q such that:
$\Sigma=Q\Lambda Q^{\top},$
where $\Lambda={\rm diag}(\lambda_1, \lambda_2,...,\lambda_d).$ 
\end{assumption}

Under this assumption, the optimization landscape of \eqref{eqn:PCA} has been well studied. \cite{chen2017online} have shown that the eigenvectors $\pm v^1,\,\pm v^2,\,...,\,\pm v^d$ are all the stationary points for problem \eqref{eqn:PCA} on the unit sphere $\mathbb{S}$. Moreover, the eigen-gap assumption ($\lambda_1> \lambda_2$) guarantees that the global optimum $v^1$ is identifiable up to sign change. Meanwhile, $v^2,\,...,\,v^{d-1}$  are $d-2$ strict saddle points, and  $v^d$  is the global minimum.

Given the optimization landscape of \eqref{eqn:PCA}, we have already understood well the behavior of VSGD algorithms, including Oja's rule and stochastic generalized Hebbian algorithms (SGHA) for streaming PCA \citep{chen2017online}.  We consider a variant of SGHA with Polyak's momentum \citep{polyak1964some}. Recall that we are given a streaming data set $\{X_k\}_{k=1}^\infty$ drawn independently from some zero-mean distribution $\cD$. At the $k$-th iteration, the algorithm takes
\begin{align}\label{alg0}
	v_{k+1}=v_k+\eta(I-v_kv_k^\top)\Sigma_kv_k+\mu(v_k-v_{k-1}),
\end{align}
where $\Sigma_k=X_kX_k^{\top}$ and $\mu(v_k-v_{k-1})$ is the momentum with a parameter $\mu \in [0,1)$. When $\mu=0$, \eqref{alg0} is reduced to SGHA.  A detailed derivation of \eqref{alg0} is provided in Appendix \ref{Appendix_PCA}.
 {
\begin{remark}
The constraint in problem \eqref{eqn:PCA} restricts the solution space to be a unit sphere $\SSS,$ which is a manifold. In order to match our algorithm \eqref{SGD_momentum}, we consider \eqref{eqn:PCA} to be an unconstraint optimization problem on the manifold by using the manifold gradient  $(I-xx^\top)\Sigma x$. For general manifold optimization problems,  additional projection may be required  to ensure the solution trajectory staying on the manifold. However,  for the sphere constraint as in \eqref{eqn:PCA}, when $\eta$ is small, moving along the direction of the manifold gradient, the solution trajectory can stay close to  $\SSS,$ as shown in Lemma \ref{lem_bound} in Appendix \ref{Appendix_PCA}.
\end{remark}
}
Before we proceed, we impose the  following assumption on the problem:
\begin{assumption}\label{Ass1}
The data points  $\{X_k\}_{k=1}^\infty$ are drawn independently from a distribution $\cD$ in $R^d$, such that:
$$\EE[X]=0, ~ \EE[XX^{\top}]=\Sigma,~ \|X\|\leq C_d,$$
 where $C_d$ is  a constant (possibly dependent on $d$).
\end{assumption}
This  uniformly boundedness assumption can actually be relaxed to the boundedness of the $(4+\delta)$-th-order moment ($\delta>0$) with a careful truncation argument. The proof, however, will be much more involved and beyond the scope of this paper. Thus, we use the uniformly boundedness assumption for convenience.  

Under Assumptions \ref{Ass0} and \ref{Ass1}, we first apply Theorem \ref{Thm1} and provide an ODE approximation for Algorithm \eqref{alg0} in the following corollary.
\begin{corollary}\label{ODE_solution}
 Suppose $v_0=v_1\in\mathbb{S}$. Then 
	$V^{\eta}(\cdot)\Rightarrow V(\cdot)$ in the weak sense as $\eta\rightarrow0$ in the space $D^d[0,\infty)$, where $V(\cdot)$ is the unique solution to the following ODE:
\begin{equation}\label{ODE_PCA}
\dot{V} = \frac{1}{1-\mu}(\Sigma V-V^{\top}\Sigma VV), ~~V(0)=v_0,
\end{equation}
and has the following explicit form $V(t)= QH(t),$ where
\begin{equation*}
H^{(i)}(t)=\Big(\sum_{i=1}^d [H^{(i)}(0)\exp\Big(\frac{\lambda_it}{1-\mu}\Big)]^2\Big)^{-\frac{1}{2}}H^{(i)}(0)\exp\left(\frac{\lambda_it}{1-\mu}\right),\quad  i=1,...,d,
\end{equation*}
where $H(0) = Q^\top v_0.$
 Moreover, suppose $v_0\neq \pm v^{i},~ \forall i=2,...,d,$  as $t\rightarrow\infty,$ $V(t)$ converges to $v^1,$ which is the global maximum to \eqref{eqn:PCA}.
\end{corollary}
 Please refer to Appendix \ref{proof_ode_solution} for the detailed proof. Different from the general ODE \eqref{ODE0}, ODE \eqref{ODE_PCA} has an explicit form solution which implies that whenever MSGD escapes from strict saddle points $v^i,~ i\geq 2$, it will directly converge to the global optimum $v^1.$ Therefore, we can provide a more precise characterization of the algorithmic behavior in the non-stationary area for streaming PCA than general nonconvex problems. Moreover, since streaming PCA has one isolated global optimum and strict saddle points, our SDE analysis for the stationary area can be directly applied. We have the following corollary to characterize the asymptotic convergence rate of MSGD. 
\begin{corollary}\label{corollary_rate}
Let $\eta$ be the step size of MSGD and $\eta_0\asymp\frac{(\lambda_1-\lambda_2)\epsilon }{\phi}$ be the step size of VSGD as chosen in \cite{chen2017online}.

\noindent $\bullet$ {\bf Phase I: Escape from Saddle Points.} Suppose $ v^\eta_0 = v^2,$ the strict saddle point corresponding to $\lambda_2.$ Given  $\eta\asymp\eta_0 $, and $\delta=\cO(\sqrt{\eta})$, we need asymptotically at most 
		\begin{align}\label{N1}
	N_1 \asymp \frac{(1-\mu)\phi}{(\lambda_1-\lambda_2)^2\epsilon}\log\left(\frac{2(1-\mu)\eta^{-1}\delta^2(\lambda_1-\lambda_2)}{\Phi^{-1}\left(\frac{9}{16}\right)^2\alpha^2_{12}} +1\right),
	\end{align}
iterations such that $\displaystyle \norm{v^\eta_{N_1}-v_2}_2^2\geq \delta^2$ with probability at least $3/4$, where $\Phi(x)$ is the CDF of the standard normal distribution.

\noindent $\bullet$ {\bf Phase II: Traverse from Saddle Points to the Global Optimum.} 
Suppose $\norm{v^\eta_{0}-v^i}_2^2\geq \delta^2, ~~\forall i\geq 2.$ For sufficiently small $\eta$, $\delta=\cO(\sqrt{\eta})$, we need
\begin{align}\label{N2}
 N_2\asymp\frac{(1-\mu)\phi}{2\epsilon(\lambda_1-\lambda_2)^2}\log\left(\frac{2-\delta^2}{\delta^2}\right)
 \end{align}
  iterations such that $\left\|v^{\eta}_{N_2} - v^1\right\|_2^2\leq \delta^2$ with probability at least ${3}/{4}.$

\noindent $\bullet$ {\bf Phase III: Converge to the Global Optimum.} 
For a sufficiently small $\epsilon>0$ and $\eta\asymp(1-\mu)\eta_0$, there exists some constant $\delta=\cO(\sqrt{\eta})$, such that  $\left\|v^{\eta}_{0} - v^1\right\|^2\leq \delta^2$,
we need
\begin{align}\label{N3}
N_3\asymp\frac{\phi}{\epsilon (\lambda_1-\lambda_2)^2}\cdot\log \Big(\frac{8(\lambda_1-\lambda_2)\delta^2}{(\lambda_1-\lambda_2)\epsilon-4\eta_0\phi}\Big)
\end{align}
 iterations to ensure $\left\|v^{\eta}_{N_3} - v^1\right\|_2^2\leq\epsilon$ with probability at least $3/4$ .
 \end{corollary}
 
 Please refer to Appendix \ref{proof_rate} for the detailed proof.  From Corollary \ref{corollary_rate}, we can see clearly that momentum accelerates escaping from saddle points and traversal to the global optimum by a factor of $1-\mu.$ If we further decrease the step size in Phase III, MSGD can achieve the same convergence rate as VSGD.

\section{Numerical Experiments}\label{Numerical}
We present numerical experiments for both streaming PCA and training deep neural networks. The experiments on streaming PCA verify our theory in Section 5, and the experiments on training deep neural networks support our theoretical results for the general problem and also verify some of our discussions in Section 7 later. 
\subsection{Streaming PCA}

We first provide a numerical experiment to verify our theory for streaming PCA.  We set $d=4$ and the covariance matrix 
$\Lambda=\rm{diag}\{4,3,2,1\}.$ The optimum is $(1,0,0,0).$ 
Figure~\ref{fig} compares the performance of VSGD, MSGD (with and without the step size annealing, and momentum annealing in Phase III). The initial solution is the saddle point $(0,1,0,0)$. We choose
$\mu=0.9$ and $\eta=5\times 10^{-4}$, decrease the step size of MSGD by a factor $1-\mu$ after $2\times 10^{4}$ iterations in Figure~\ref{fig}.b, and decrease the momentum by a factor $1/10$ after $2\times 10^{4}$ iterations in Figure~\ref{fig}.c. Figure  \ref{fig} plot the results of 100 simulations, and the vertical axis corresponds to $||H_k^{(1)}|-1|$. We can clearly differentiate the three phases of VSGD in Figure \ref{fig}.a.  For MSGD in Figures \ref{fig}.b,  \ref{fig}.c and \ref{fig}.d, we hardly recognize Phases I and II, since they last for a much shorter time. This is because the momentum significantly helps escape from saddle points and evolve toward the global optimum. Moreover, we also observe  in Figure \ref{fig}.b that MSGD without the step size annealing and the momentum annealing does not converge well, but the step size annealing or the momentum annealing resolves this issue. All these observations are consistent with our analysis. Figure \ref{fig}.e plots the optimization errors of these three algorithms averaged over all 100 simulations, and we observe similar results.

\begin{figure}[htb!]
	\centering
		\subfigure[Three phases in SGD.]{
	\includegraphics[width=0.48\textwidth]{./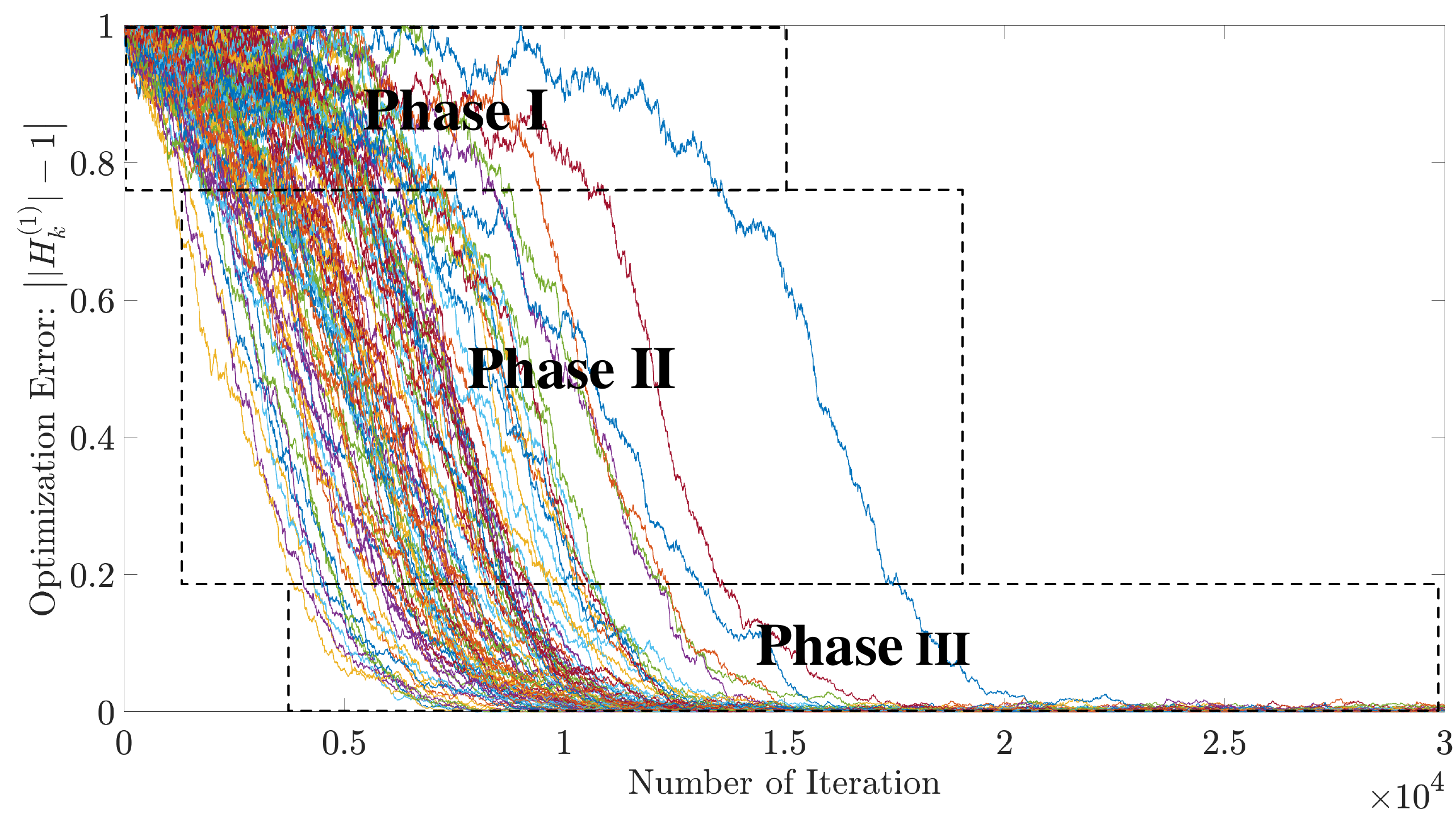}\label{fig:sgd}}
	\subfigure[MSGD does not converge.]{
	\includegraphics[width=0.48\textwidth]{./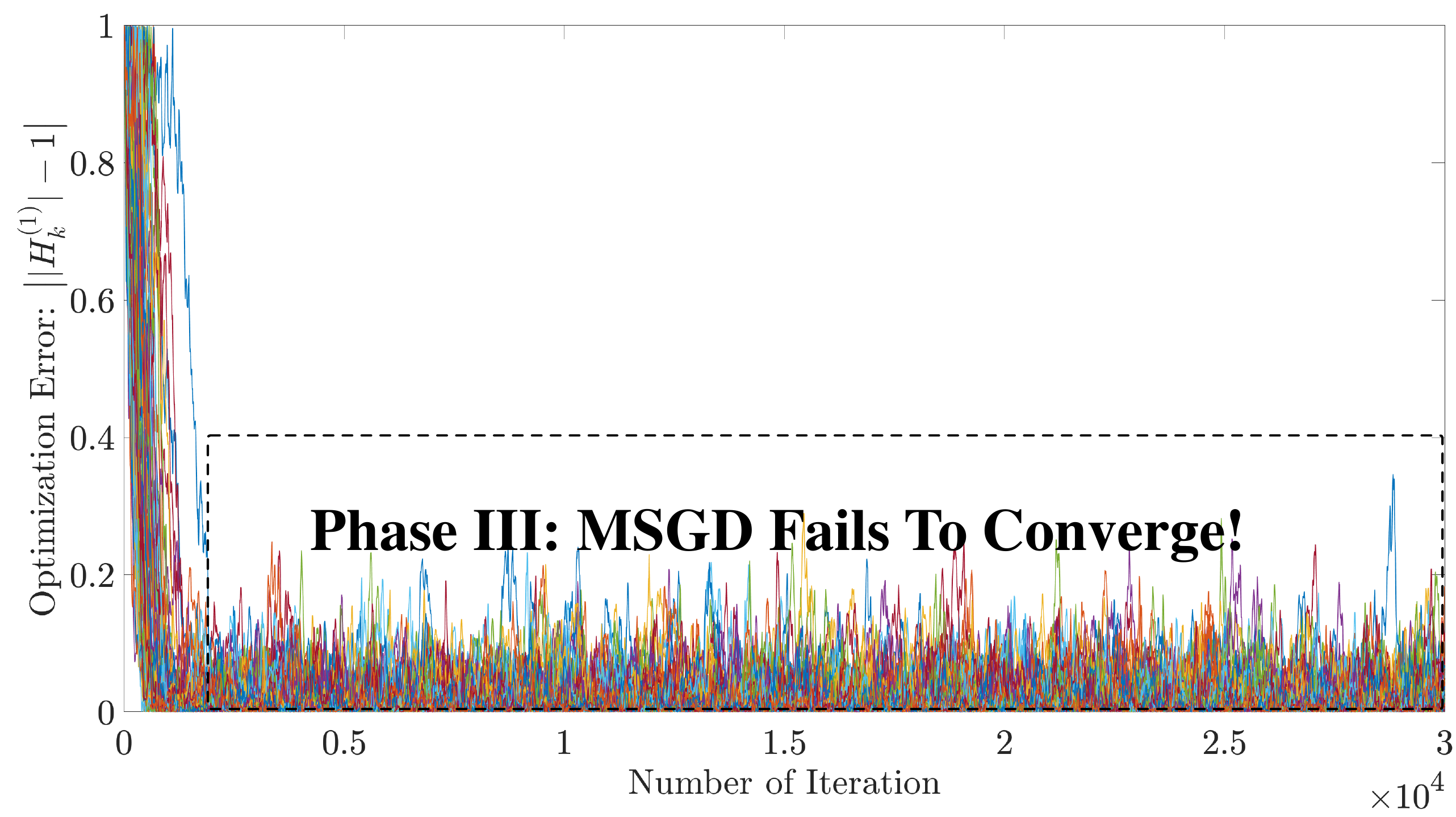}\label{fig:msgd}}\\
	\subfigure[MSGD with SSA converges.]{\includegraphics[width=0.48\textwidth]{./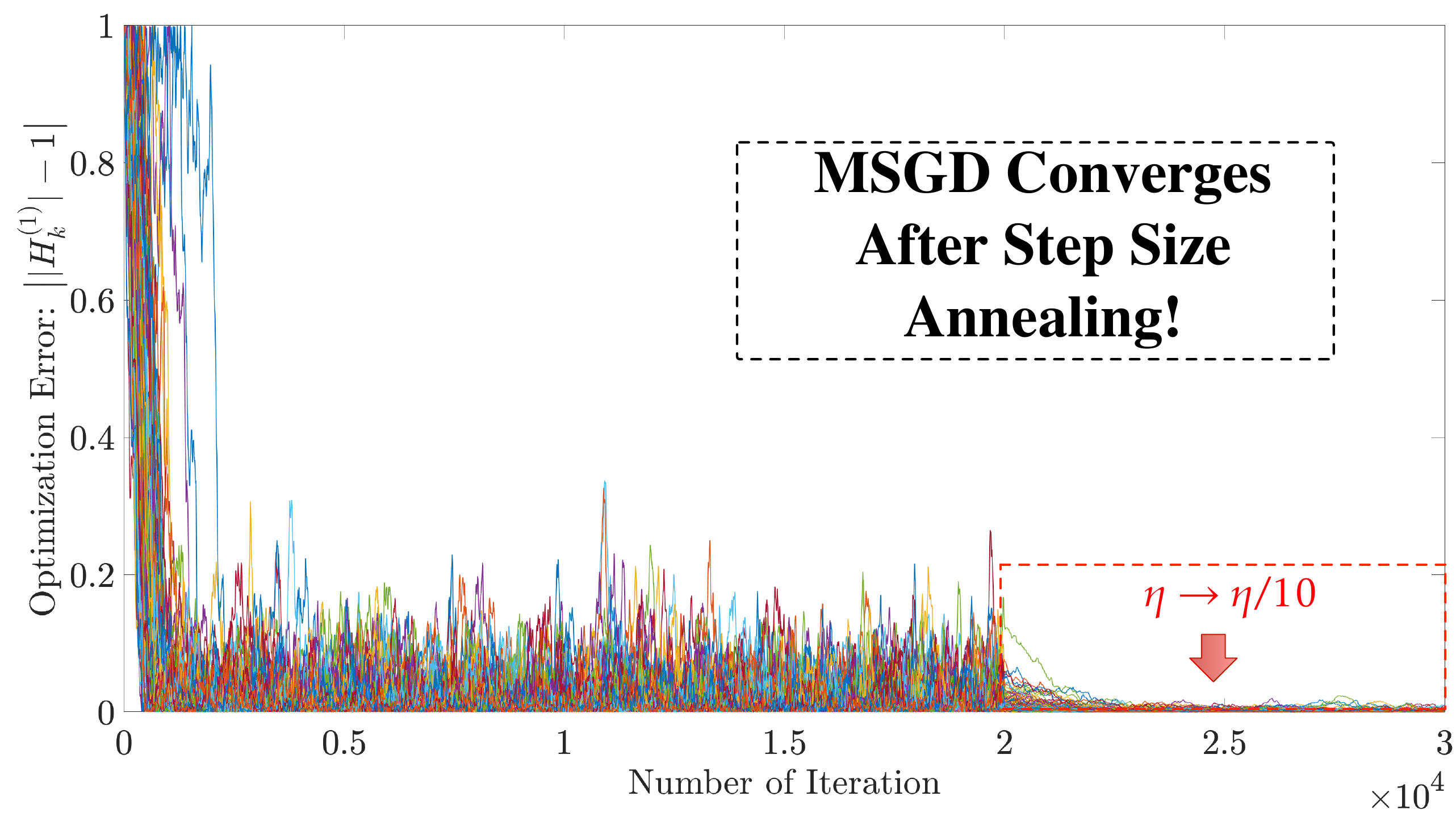}\label{fig:msgd_ssa}}
	\subfigure[MSGD with MA converges.]{ \includegraphics[width=0.48\textwidth]{./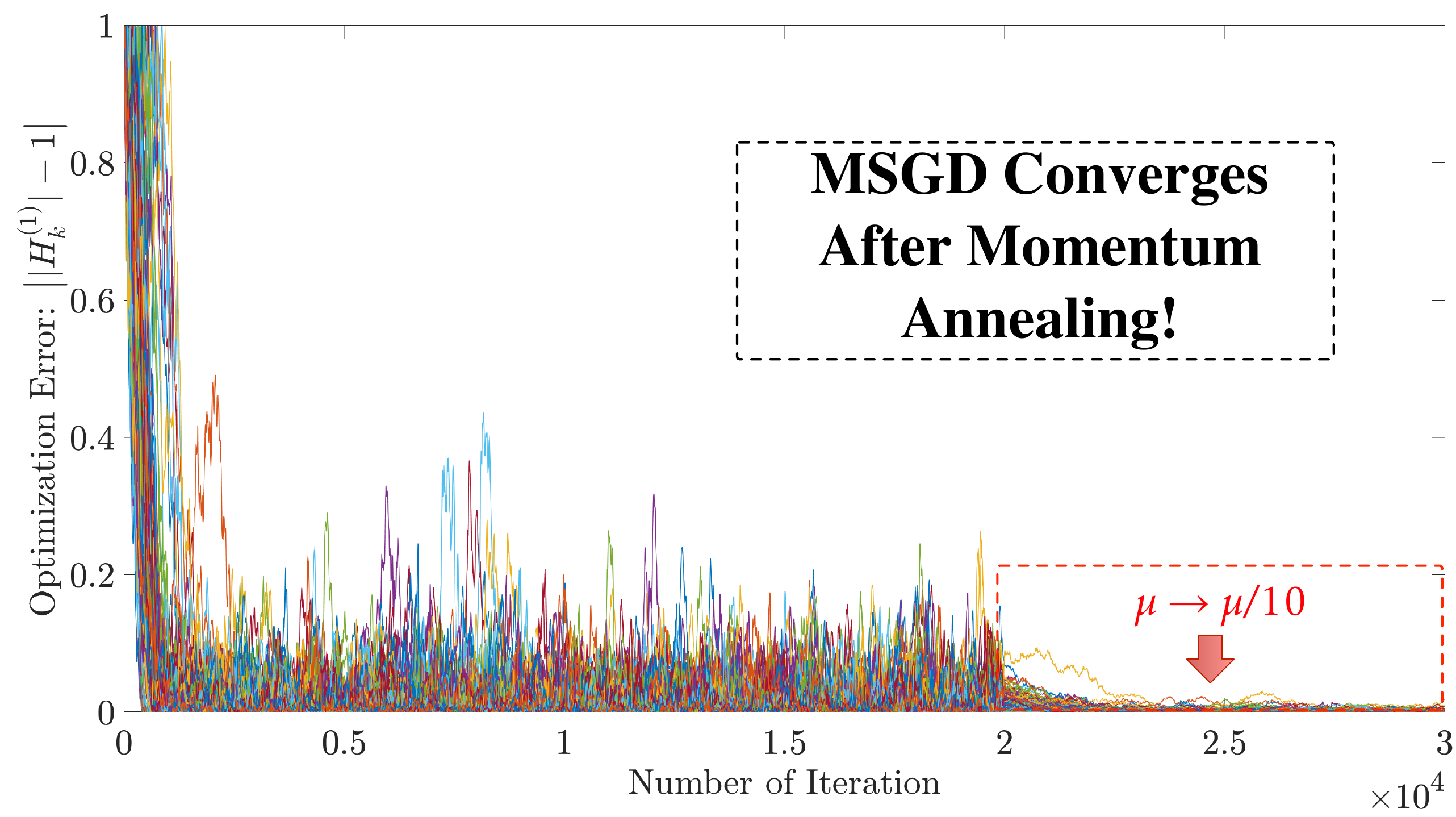}\label{fig:msgd_ma}}\\ 
		\subfigure[Comparison among SGD and different MSGDs.]{\includegraphics[width=0.48\textwidth]{./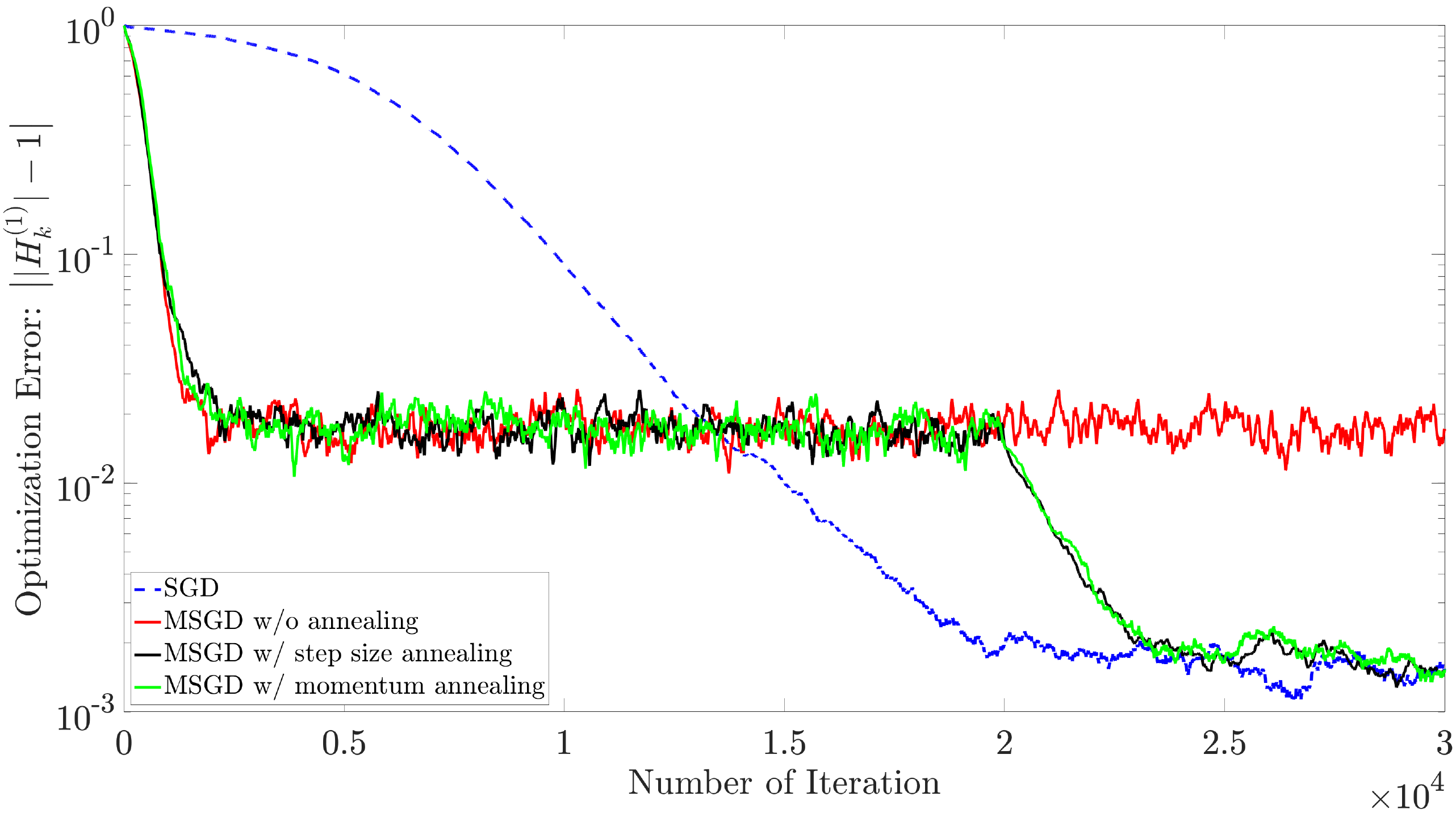}\label{fig:compare}}\\
		\caption{Comparison between SGD and MSGD (with and without the Step Size Annealing (SSA) and Momentum Annealing (MA) in Phase III).}	\label{fig}
\end{figure}

\subsection{Deep Neural Networks}

{Momentum SGD and its variants has been widely applied in training deep neural networks \citep{sutskever2013importance,kingma2014adam,goodfellow2016deep, he2016deep} and has been implemented in popular deep learning libraries, such as Tensorflow \citep{abadi2016tensorflow} and PyTorch \citep{paszke2019pytorch}.
} In this section, we present several experiments to compare MSGD with VSGD in training a 9-layer Residual Net (ResNet-9, \citet{david2018resnet}) over CIFAR-$10$ and CIFAR-$100$ datasets for $10$ and $100$-class image classification tasks, respectively. Both datasets contain $60$k images, in which $50$k images are used for training, and the rest $10$k are used for testing. The network architecture of ResNet-9 is shown in Figure~\ref{fig:archi} and summarized in Table~\ref{tab:archi}. All experiments are done in PyTorch with one NVIDIA RTX 2080-Ti GPU. For each experiment, we repeat for $20$ times with different random seeds and report the average and standard deviation. 

\begin{figure*}[t]
\begin{center}
\subfigure[An illustrative visualization of 9 layers in ResNet-9 (ResBlock contains 2 layers)]
	{\includegraphics[width = 0.92\textwidth]{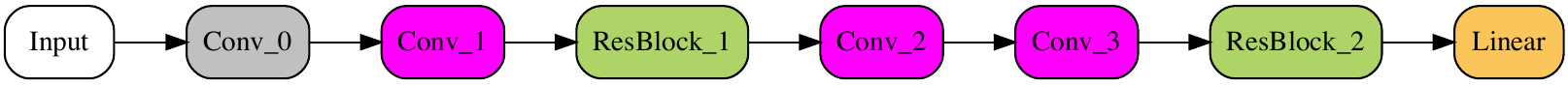}}
\subfigure[Grey convolutional layer]
{\includegraphics[width = 0.4\textwidth]{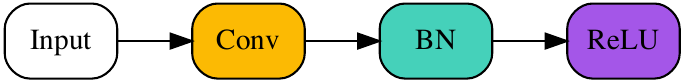}}
\subfigure[Pink convolutional layer]
{\includegraphics[width = 0.56\textwidth]{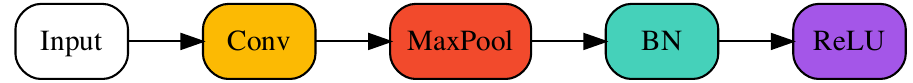}}\\
\subfigure[Residual block: containing two convolutional layers]
{\includegraphics[width = 0.85\textwidth]{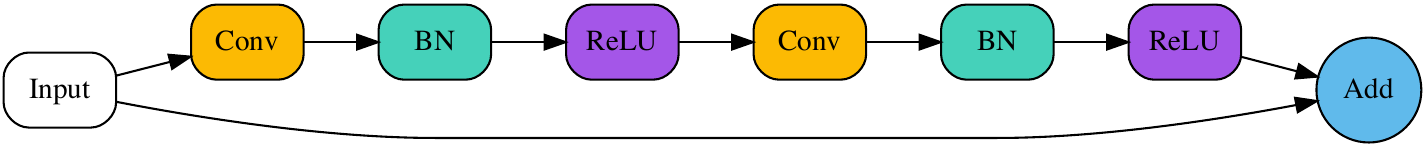}}
\end{center}
\caption{The Network Architecture of ResNet-9 and Its Detailed Components.}\label{fig:archi}
\end{figure*}

\begin{table*}[t]
\centering
\caption{Network Architecture of ResNet-9.}\label{tab:archi}
\begin{tabular}{c|c|c}
\hline
\hline
Layer & Output size & Filter, activation and pooling\\
\hline
Conv & $32\times 32$ &  $[3\times 3, 64]\times 1$, stride 1\\
\hline
Conv & $16\times 16$ &  $[3\times 3,128]  \times 1$, stride 1, Max pooling (2)\\
\hline
Residual Block& $16\times 16$ & $[3\times 3,128]$, stride 1\\
\hline
Conv & $8\times 8$ &   $[3\times 3, 216]\times 1$, stride 1, Max pooling (2)\\
\hline
Conv & $4\times 4$ &   $[3\times 3, 512]\times 1$, stride 1, Max pooling (2)\\
\hline
Residual Block& $4\times 4$ & $[3\times 3,512]$, stride 1\\
\hline
Linear & Number of classes &  Max pooling (4), fully connected\\
 \hline
 \hline
\end{tabular}
\end{table*}


We adopt the training configure from \cite{david2018resnet}, which uses the label smooth loss function~\citep{szegedy2016rethinking}. Specifically, for a $K$ classification problem, given a training sample $x$ with class $y$, we denote its predicted probability for class $i$ as $p_i(x)$, and then the loss function is 
\begin{align*}
	f(x,y;\epsilon) = (1-\epsilon)\sum_{i=1}^K\delta_{i}(y) \log p_i(x) +  \frac{\epsilon}{K}\sum_{i=1}^K\log p_i(x),
\end{align*}
where $\epsilon$ denotes the smoothing parameter, and $\delta_i(y) = \mathbf{1}_{\{i=y\}}$ is the indicator function. In our experiments, we set $\epsilon$ as $0.2$. In addition, for each experiment, we train the network for $100$ epochs and use the batch size as $512$. Moreover, we use the state-of-the-art step size setting with warmup as follows: \begin{equation*}
\eta_i=
    \begin{cases}
      \frac{i}{20}\eta, & 1\leq i\leq 20,\\
      \left(1-\frac{i-20}{80}\right) \eta, & 21\leq i\leq 100,
    \end{cases}       
\end{equation*}
where $\eta_i$ is the step size used in the $i$-th epoch for $1\leq i\leq 100.$ The warmup is effective to obtain a good parameter in training deep neural network. 

 
 For MSGD, we set the momentum parameter $\mu$ as $0.9$, and choose the step size $\eta_{\textrm{M}}$ as $\{0.04 \ell: \ell\in\NN, 4\leq \ell\leq 15\}.$ Thus, for VSGD, we use the equivalent step size of MSGD ($\eta_{\mathrm{V}}= \frac{\eta_{\mathrm{M}}}{1-\mu}$) chosen from $\{0.4 \ell: \ell\in\NN, 4\leq \ell\leq 15\}.$  Figures~\ref{cifar10:best} and~\ref{cifar100:best} show that the comparisons of loss values between the MSGD and the VSGD with their best settings over CIFAR-10. As can be seen, the validate loss of MSGD decreases faster than that of the VSGD and eventually achieves a smaller value. For more comparison results, please see Appendix~\ref{NNsetting}.  In addition, Table~\ref{tab:result} presents the validate accuracy of both MSGD and VSGD over CIFAR datasets. As can be seen, in average, the MSGD is better than the VSGD with the equivalent step size over CIFAR-10 and CIFAR-100 tasks. We further test the significance of the pairwise comparison between the best MSGD and the best VSGD. For CIFAR-10  ($\eta_{\mathrm{M}}=0.36$ and $\eta_{\mathrm{V}}=2$) and CIFAR-100 ($\eta_{\mathrm{M}}=0.56$ and $\eta_{\mathrm{V}}=2.4$), the corresponding $p$-values are $\mathbf{ 0.0108}$ and $\mathbf{1.023\times 10^{-5}}$, respectively. This shows  that the best MSGD {\bf significantly outperforms} the best VSGD.
  	\begin{figure}[t]
		\centering
		\label{cifar10:best}
			\subfigure[Best setting: $\eta_{\textrm{V}} = 2$ and $\eta_{\textrm{M}}=0.36$]{
		\includegraphics[width=0.45\textwidth]{./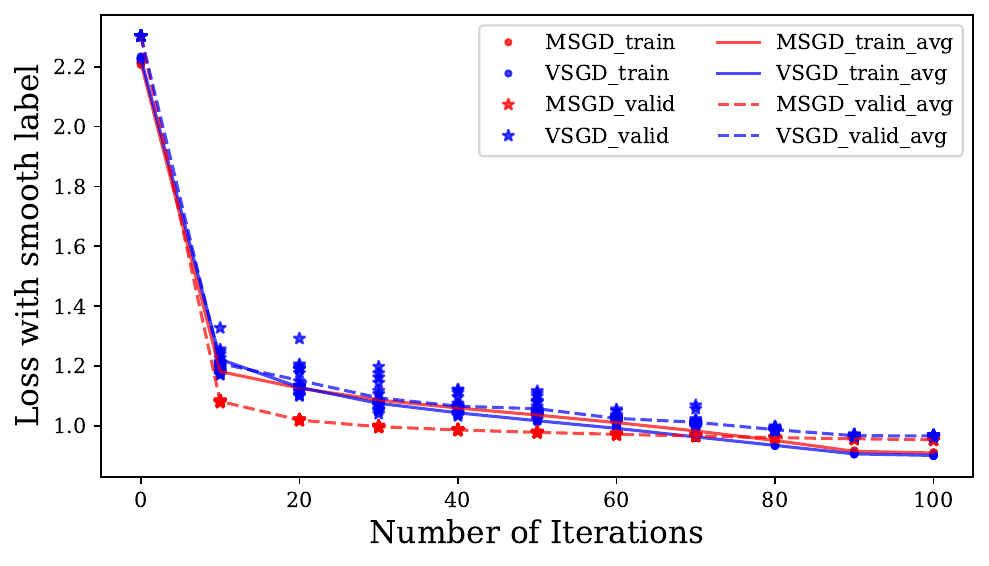}\label{fig:10_3.6}
	}
			\subfigure[Zoom-in for the best setting]{
		\includegraphics[width=0.46\textwidth]{./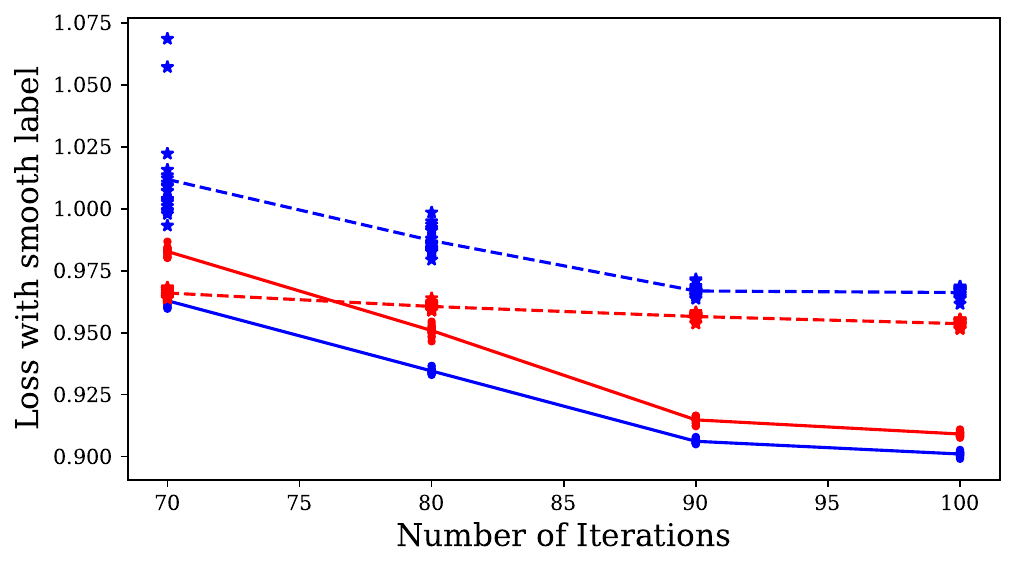}\label{fig:10_3.6}
	}
	\caption{Experimental Results of ResNet-9 on CIFAR-10 under the Best Settings:  $\boldsymbol{\eta_{\textrm{V}} = 2}$, $\boldsymbol{\eta_{\textrm{M}}=0.36}$.}
	\end{figure}

		\begin{figure}[t]
		\centering
		\label{cifar100:best}

		\subfigure[Best setting: $\eta_{\textrm{V}}=2.4$ and $\eta_{\textrm{M}}=0.56$]{
		\includegraphics[width=0.45\textwidth]{./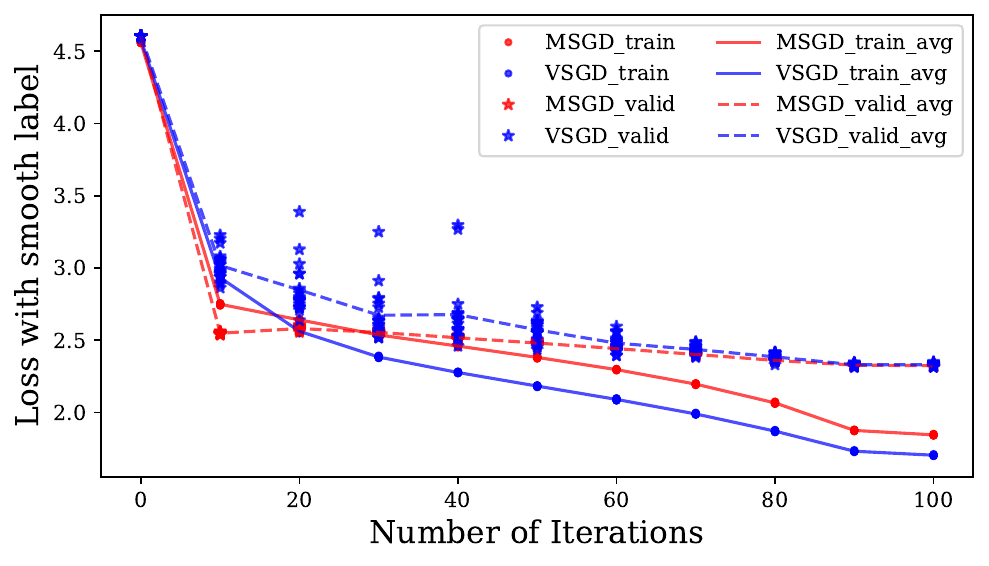}\label{fig:100_3.6}
	}
		\subfigure[Zoom-in for the best setting]{
		\includegraphics[width=0.45\textwidth]{./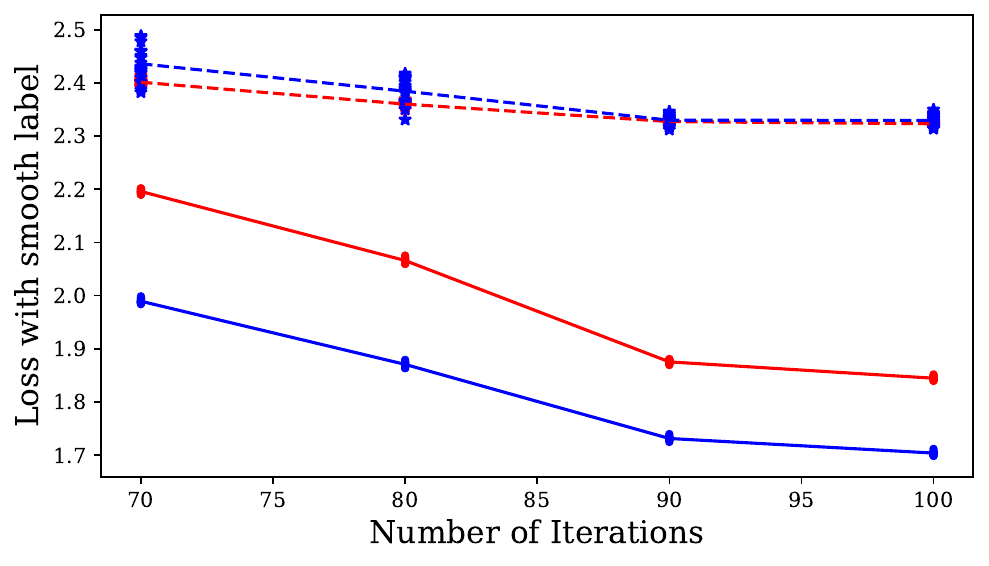}\label{fig:100_3.6}
	}
		\caption{Experimental Results of ResNet-9 on CIFAR-100 under the Best Ssettings: $\boldsymbol{\eta_{\textrm{V}}=2.4}$, $\boldsymbol{\eta_{\textrm{M}}=0.56}$.}
					\end{figure}

\begin{table*}[htb!]
	\centering
	\caption{Results of Validation Accuracy and the Corresponding Standard Deviations (in the Bracket) for the Last Epoch under the ResNet-9 over CIFAR-10 and CIFAR-100.  }\label{tab:result}
	\small
	\begin{tabular}{c|c c c c  c c c c c c c c c}
	\hline
	\hline
				$\frac{\eta}{1-\mu}$   & $1.6$ & $2$ & $2.4$& $2.8$ & $3.2$& $3.6$ & $4$ & $4.4$&$4.8$ &$5.2$ &$5.6$& $6$\\
		
		\hline
		\multicolumn{12}{c}{CIFAR-10} \\
				\hline

			    VSGD  & $95.31$&
			    $\mathbf{95.32}$&
			    $95.19$&
			    $95.23$&
			    $95.07$&
			    $95.06$&
			    $94.91$&
			    $94.80$&
			    $94.70$&
			    $94.45$&
			    $94.38$&
			    $94.06$
			    \\
	              & $(0.14)$&
	              $(0.14)$&
	              $(0.23)$&
	              $(0.19)$&
	              $(0.22)$&
	              $(0.20)$&
	              $(0.25)$&
	              $(0.34)$&
	              $(0.20)$&
	              $(0.30)$&
	              $(0.26)$&
	              $(0.51)$
	              \\
	    MSGD  & $95.65$&
		$95.71$&
	    $95.78$&
	    $95.81$&
	    $95.83$&
	    $\mathbf{95.87}$&
	    $95.82$&
	    $95.84$&
	    $95.80$&
	    $95.78$&
	    $95.77$&
	    $95.75$
	     \\
	     & $(0.13)$&
	     $(0.13)$&
	     $(0.11)$&
	     $(0.11)$&
	     $(0.11)$&
	     $(0.14)$&
	     $(0.08)$&
	     $(0.11)$&
	     $(0.14)$&
	     $(0.15)$&
	     $(0.12)$&
	     $(0.10)$\\ 
		\hline
			\multicolumn{12}{c}{CIFAR-100} \\
					\hline
		VSGD  & $75.44$&
		$75.46$&
		$\mathbf{75.49}$&
		$75.21$&
		$75.10$&
		$74.81$&
		$74.73$&
		$74.45$&
		$74.18$&
		$73.83$&
		$73.47$&
		$73.12$
		\\
		& $(0.39)$&
		$(0.42)$&
		$(0.30)$&
		$(0.35)$&
		$(0.40)$&
		$(0.67)$&
		$(0.50)$&
		$(0.52)$&
		$(0.45)$&
		$(0.61)$&
		$(0.73)$&
		$(0.80)$
		\\
		MSGD  & $76.95$&
		$77.09$&
		$77.38$&
		$77.53$&
		$77.76$&
		$77.80$&
		$78.02$&
		$78.04$&
		$78.01$&
		$78.15$&
		$\mathbf{78.17}$&
		$78.16$
		\\
		& $(0.21)$&
		$(0.26)$&
		$(0.25)$&
		$(0.25)$&
		$(0.25)$&
		$(0.23)$&
		$(0.17)$&
		$(0.25)$&
		$(0.26)$&
		$(0.28)$&
		$(0.32)$&
		$(0.24)$  
		\\
		\hline
		\hline
	\end{tabular}
	\end{table*}

\section{Discussions}\label{discussion}
\noindent $\bullet$ {\bf Related Literature.}
In the existing literature, we are only aware of \cite{ghadimi2016accelerated} and \cite{jin2017accelerated}  considering stochastic nonconvex optimization using momentum.
\begin{table*}[t]
	\centering
	\caption{Comparison  with Relevant Literature. Notation List: FOOS: First Order Optimal Solution; SOOS: Second Order Optimal Solution; SA: Stochastic Approximation; SEA: Saddle Escaping Analysis; A/N: Asymptotic/Nonasymptotic; LCG: Lipschitz Continuous Gradient; LH: Lipschitz Continuous Hessian.}\label{comparison}
	\small
	\begin{tabular}{|c|c|c|c|c|c|c|}
		\hline
		 &FOOS &SOOS &SA & SEA & Assumptions &A/N\\
		\hline
		Ours& $\surd$ &  $\surd$&  $\surd$ &  $\surd$& Strict Saddle, Isolated Optima & A\\
		\hline
	 \cite{ghadimi2016accelerated} & $\surd$ &  $\times$&  $\surd$ & $\times$& LCG/LH/Unconstrained & N\\
		\hline
	\cite{jin2017accelerated} & $\surd$ &  $\surd$&  $\times$ &  $\surd$& LCG/LH/Unconstrained  & N\\
		\hline

	\end{tabular}
\end{table*}
\cite{ghadimi2016accelerated} only consider convergence to the first order optimal solution, and therefore cannot justify the advantage of the momentum in escaping from saddle points; \cite{jin2017accelerated} only consider a batch algorithm, which cannot explain why the momentum hurts when MSGD converges to optima. Moreover, \cite{jin2017accelerated} need an additional negative curvature exploitation procedure, which is not used in popular Nesterov's accelerated gradient algorithms. We summarize the comparison between our results and related works in Table \ref{comparison}.

Our analysis technique is closely related to several recent works using stochastic differential equations to study stochastic gradient-based methods. \cite{li2017stochastic} adopt a numerical SDE approach to derive the so-called Stochastic Modified Equations for VSGD. However, their analysis requires the drift term in the SDE to be bounded, which is not satisfied by MSGD. Other results consider SDE approximations of several accelerated SGD algorithms for convex smooth problems only \citep{wang2017asymptotic,krichene2017acceleration}. In contrast, our analysis is for nonconvex problems, which are more general and more technically challenging. 

In a broader sense,  our work is also related to \cite{matthews2018gaussian, rotskoff2018neural, mei2018mean, mei2019mean, sirignano2018mean,sirignano2019mean} which use weak convergence to prove the asymptotic approximation of extreme large neural networks.  However, they consider the size of the networks goes to infinity, while we consider the case that step size goes to 0.

\noindent $\bullet$ {\bf Connected Local Optima:} We want to remark that our analysis can be extended to handle connected global optima. As we have mentioned, the major difficulty is the unboundedness of the normalized error $(x_t-x^*)/\sqrt{\eta}$. This can be overcome by choosing a suitable metric to characterize the distance between the iterate and global optima. Take rank-r PCA as an example, where the rotation of any global optimum is also global optimal and thus all the global optima are connected.  In this case, we can use the principal angle between column spans of a given global optimum and the iterate \citep{chen2018dimensionality} to characterize the error.  Since the principal angle is rotational invariant,  the normalized error will be a unique quantity and will not blow up even when the iterate is wandering among different optima. 
Moreover, we can also utilize special landscape structure, such as partial dissipativity \citep{zhou2019toward}, around the connected local optima to facilitate our analysis. However, the analysis will be more involved and is out of the scope of our paper. 

\begin{figure}[t]
	\centering
	 \label{sharp-flat}
	\includegraphics[width=0.6\textwidth]{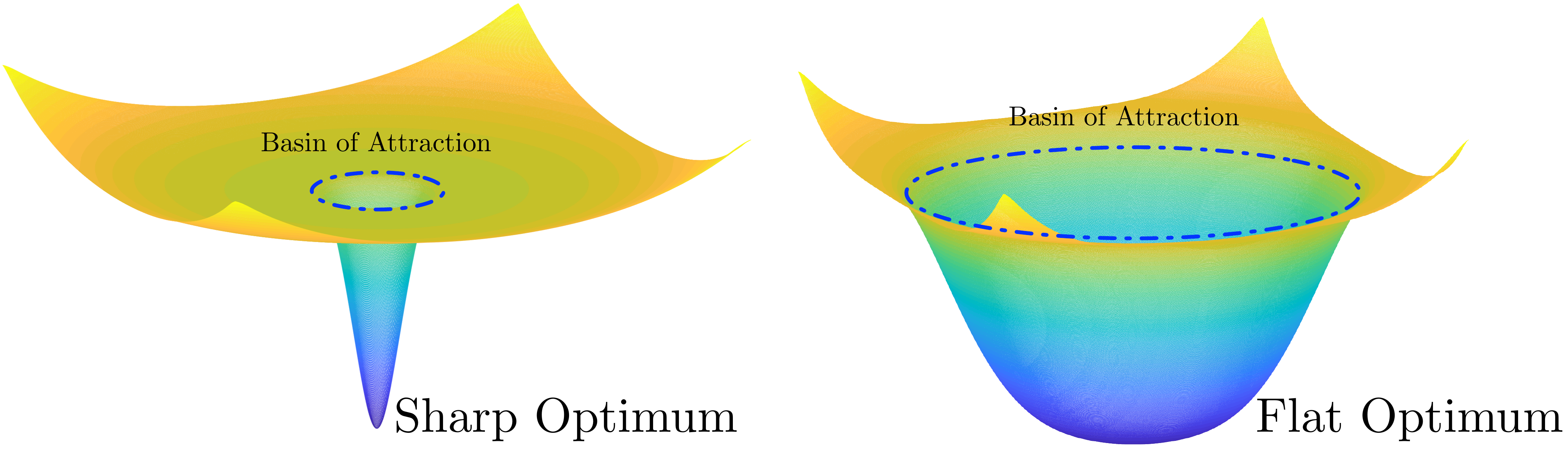}
	\caption{Two Illustrative Examples of the Flat and Sharp Local Optima. MSGD Tends to Avoid the Sharp Local Optimum, since Its High Variance Encourages Exploration.}
\end{figure}
\noindent $\bullet$ {\bf Connection to DNNs:} The results on training DNNs are expectable or partially expectable, given our theoretical analysis for streaming PCA. Our results show that with a good network architecture, the momentum indeed improves the training.

Our  analysis implies that when $\eta$ is sufficiently small, MSGD with step size $\eta$ and momentum $\mu$ performs similarly to VSGD with step size $\frac{\eta}{1-\mu}.$  In practice, however,  people actually use a relative large step size during training and we can still observe the advantage of MSGD over VSGD with the same equivalent step size. As we can observe in Table 2, MSGD always performs better than VSGD. Moreover, MSGD achieves the optimal generalization using $\frac{\eta_{\rm M}}{1-\mu}=5.6$,  but VSGD performs the best using  a smaller equivalent step size $\eta_{\rm V}=2.4<5.6$ under the ResNet over CIFAR-100. This implies MSGD  can afford larger equivalent step size than VSGD.  These phenomena cannot be fully explained by our theory.

\noindent $\bullet$ {\bf Flat/Sharp Local Optima:} \cite{keskar2016large,zhang2017theory,neyshabur2017exploring}  suggest that the landscape of these spurious/bad local optima is usually sharp, i.e., their basin of attractions are small and wiggle. From this aspect,  using a larger equivalent step size can help MSGD  escape from  spurious/bad local optima and stay in ``flat/good local optima", since the higher variance of the noise introduced by the momentum encourages more exploration outside the small basin of attraction of sharp local optima.

\noindent $\bullet$ {\bf Extension:} Our theoretical analysis can be applied to study other problems related to momentum. For example,  \cite{liu2018towards} use the main technique of this paper to study an asynchronous MSGD with the focus on the trade off between momentum and asynchrony. For another example, by analyzing the SDE around different local optima, we can theoretically characterize how momentum helps select flat optima. 

\bibliography{MSGD_ICML}
\bibliographystyle{ims}
\appendix

\section{Summary on Weak Convergence and Main Theorems}\label{summary}
Here, we summarize the theory of weak convergence and theorems used in this paper.
Recall that  the continuous-time interpolation of the solution trajectory $V^{\eta}(\cdot)$ is defined as  $V^\eta(t)=v^{\eta}_k$ on the time interval $[k\eta,k\eta+\eta).$ It has sample paths in the space of  C\`adl\`ag functions ( right continuous and have left-hand limits) defined on $\RR^d$, or \emph{Skorokhod Space}, denoted by $D^d[0,\infty) $. Thus, the weak convergence we consider here is defined in this space $D^d[0,\infty)$ instead of $\RR^d$. The special metric $\sigma$ in $D^d[0,\infty)$ is called  Skorokhod metric, and the topology generated by this metric is Skorokhod topology.  Please refer to \cite{sagitov2013weak, kushner2003stochastic} for detailed explanations.  The weak convergence in $D^d$ is defined as follows:
\begin{definition}[Weak Convergence in $D^d[0,\infty) $]
	Let $\mathcal{B}$  be the minimal $\sigma$-field  induced by Skorokhod topology. Let $\{X_n,\,n<\infty\}$ and $X$ be random variables on $D^d[0,\infty) $  defined on a probability space $(\Omega,P,\mathcal{F}).$ Suppose that $P_n$ and $P_X$ are the probability measures on $(D^d,\mathcal{B})$ generated by $X_n$ and X. We say $P_n$ converges weakly to $P$ ($P_n\Rightarrow P$), if for all  bounded and continuous real-valued functions $F$ on $D^d$, the following condition holds:
	\begin{equation}
	\EE F(X_n)=\int F(x)dP_n(x)\rightarrow \EE F(X)
=\int F(x)dP(x)	
\end{equation}    
	With an abuse of terminology, we say $X_n$ converges weakly to $X$ and write $X_n\Rightarrow X.$ 
\end{definition}
Another important definition we need is \emph{tightness}:
\begin{definition}
	A set of $D^d$-valued random variables $\{X_n\}$ is said to be tight if for each $\delta>0$, there is a compact set $ B_\delta \in D^d$ such that:
	\begin{equation}\label{def_tight}
	\sup_n P\{X_n\notin B_\delta\}\leq \delta.
	\end{equation}
\end{definition}
We care about tightness because it provides us a powerful way to prove weak convergence based on the following two theorems:	
\begin{theorem}[Prokhorov's Theorem]\label{Thm_Prohorov}
	Under Skorokhod topology,  $\{X_n(\cdot)\}$ is tight in  $D^d[0,\infty)$ if and only if it is relative compact which means each subsequence contains a further subsequence  that converges weakly.	
	\end{theorem}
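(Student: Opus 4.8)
The plan is to obtain this as a consequence of the general form of Prokhorov's theorem on a Polish space, once we observe that $D^d[0,\infty)$ equipped with the Skorokhod topology is itself Polish. Concretely, I would first recall the classical facts that this topology is separable and completely metrizable (see \cite{sagitov2013weak,kushner2003stochastic}), so that fixing a complete metric compatible with it reduces the statement verbatim to Prokhorov's theorem for probability measures on a complete separable metric space; nothing specific to the streaming PCA iterates enters. For completeness I would then sketch the two implications.

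For the direction ``tight $\Rightarrow$ relatively compact'' I would use a diagonal extraction. Given a tight sequence $\{X_n\}$ with laws $\{P_n\}$, choose increasing compacts $K_1\subseteq K_2\subseteq\cdots$ in $D^d[0,\infty)$ with $\inf_n P_n(K_m)\ge 1-1/m$. On each compact metric space $K_m$ the Banach space $C(K_m)$ is separable, so the closed unit ball of its dual is weak-$*$ sequentially compact; passing to a subsequence and diagonalizing over $m$, I can assume the restrictions $P_n|_{K_m}$ converge weak-$*$ to a finite measure $\nu_m$ carried by $K_m$, with the $\nu_m$ consistent under restriction. Gluing the $\nu_m$ produces a Borel measure $\mu$ on $D^d[0,\infty)$, and tightness forces $\mu$ to have total mass $1$ and rules out any escape of mass, so that $X_n\Rightarrow\mu$ along the extracted subsequence. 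I expect this step to be the main obstacle, since it is where the inner regularity supplied by tightness must be reconciled with weak-$*$ convergence that a priori only sees the compacts $K_m$.

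For the converse, ``relatively compact $\Rightarrow$ tight'', I would argue by contradiction, and this is where completeness of the metric is used. If $\{X_n\}$ were not tight there would be $\varepsilon>0$ such that every compact set carries $P_n$-mass at most $1-\varepsilon$ for some $n$. Using separability, for each integer $k$ cover the space by countably many balls of radius $1/k$; if for some $k$ no finite union of these balls had $P_n$-mass exceeding $1-\varepsilon$ uniformly in $n$, one could extract a subsequence along which mass leaks out at the $k$-th scale, and no subsequential weak limit could be a probability measure, contradicting relative compactness. Hence for each $k$ there is a finite union $A_k$ of radius-$1/k$ balls with $\inf_n P_n(A_k)>1-\varepsilon 2^{-k}$; then $\bigcap_k\overline{A_k}$ is totally bounded, hence compact by completeness, and has $P_n$-mass $>1-\varepsilon$ for every $n$, the desired contradiction.

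Finally, I would note that only the ``tight $\Rightarrow$ relatively compact'' half is actually invoked in the sequel — it is exactly what lets us pass from the tightness of $\{V^\eta(\cdot)\}$ (established via Theorem \ref{Thm_Tight}) to the existence of weakly convergent subsequences in Theorem \ref{Thm1} — so for the purposes of this paper it is also legitimate simply to cite this implication from \cite{kushner2003stochastic} and omit the converse.
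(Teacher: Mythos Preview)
Your proposal is correct in substance, but there is nothing to compare it against: the paper does not prove Prokhorov's Theorem at all. Theorem~\ref{Thm_Prohorov} appears in Appendix~\ref{summary}, which is explicitly a ``summary on the pre-requisite weak convergence theory'' --- the theorem is stated as a classical result and implicitly attributed to the references \cite{sagitov2013weak,kushner2003stochastic}, with no proof or sketch. Your proposal therefore supplies a full argument where the paper simply cites the literature.

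That said, your sketch is the standard one and is sound: reducing to the Polish-space version of Prokhorov via the fact that $D^d[0,\infty)$ under the Skorokhod topology is separable and completely metrizable, then running the diagonal extraction for one direction and the totally-bounded/completeness argument for the converse. Your closing remark is also accurate --- the paper only invokes the ``tight $\Rightarrow$ relatively compact'' implication, in the proof of Theorem~\ref{Thm1}, to extract a weakly convergent subsequence from $\{V^\eta(\cdot)\}$ once tightness has been established via Theorem~\ref{Thm_Tight}. For the purposes of this paper, a one-line citation (as the authors do) is entirely adequate; your proof would be appropriate for a more self-contained exposition.
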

\begin{theorem}[\cite{sagitov2013weak}, Theorem 3.8]\label{Thm_Sag}
	A necessary and sufficient condition for $P_n\Rightarrow P$ is each subsequence
	$P_{n'}$ contains a further subsequence $P_{n''}$ converging weakly to $P.$
\end{theorem}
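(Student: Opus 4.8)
The plan is to read Theorem~\ref{Thm_Sag} as an instance of the ``subsequence principle'' and to reduce it to the elementary fact that a sequence of real numbers $a_n$ converges to $a$ if and only if each subsequence $\{a_{n'}\}$ has a further subsequence $\{a_{n''}\}$ with $a_{n''}\to a$. Recall that, by definition, $P_n\Rightarrow P$ means $\int F\,dP_n\to\int F\,dP$ for every bounded continuous real-valued $F$ on $D^d$. The necessity direction is then immediate: if $P_n\Rightarrow P$, then for any subsequence $\{P_{n'}\}$ and any such $F$ we have $\int F\,dP_{n'}\to\int F\,dP$, because a convergent real sequence has every subsequence converging to the same limit; hence $P_{n'}\Rightarrow P$ already, and one may take the required further subsequence to be $\{P_{n'}\}$ itself.

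For sufficiency I would argue by contraposition. Suppose $P_n\not\Rightarrow P$. Then there is a \emph{single} bounded continuous $F:D^d\to\RR$ for which the real sequence $\int F\,dP_n$ fails to converge to $\int F\,dP$. Negating convergence of this sequence produces an $\epsilon>0$ and a subsequence $\{P_{n'}\}$ with $\big|\int F\,dP_{n'}-\int F\,dP\big|\ge\epsilon$ for all $n'$. Applying the hypothesis to this particular subsequence yields a further subsequence $\{P_{n''}\}$ with $P_{n''}\Rightarrow P$, hence $\int F\,dP_{n''}\to\int F\,dP$ for the $F$ just fixed; this contradicts $\big|\int F\,dP_{n''}-\int F\,dP\big|\ge\epsilon$. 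Therefore $P_n\Rightarrow P$, completing the argument.

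I expect essentially no serious obstacle here: the whole proof is a soft manipulation of quantifiers, and no topological property of the Skorokhod space $D^d[0,\infty)$ beyond the definition of weak convergence is needed. The only point that deserves a moment's care is that weak convergence is specified through an entire family of test functions rather than a single numerical criterion; this is handled precisely by observing that the negation of $P_n\Rightarrow P$ supplies one witnessing $F$, after which everything takes place in $\RR$. Alternatively, one could invoke that $D^d[0,\infty)$ is separable (indeed Polish), so that the weak topology on its probability measures is metrizable, e.g.\ by the L\'evy--Prokhorov metric, and then cite the subsequence principle in metric spaces; the test-function version above avoids even that.
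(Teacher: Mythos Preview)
Your argument is correct and is the standard ``subsequence principle'' proof: necessity is trivial, and sufficiency follows by fixing a single witnessing test function $F$ and reducing to the real-number case. Note, however, that the paper does not supply its own proof of Theorem~\ref{Thm_Sag}; it simply cites it as Theorem~3.8 of \cite{sagitov2013weak} in the background summary of Appendix~\ref{summary}, so there is no paper proof to compare against.
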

Thus, if we can prove  $\{X_n(\cdot)\}$ is tight and all the further subsequences share the same weak limit $X$, then we have $X_n$ converges weakly to $X$. 
However, \eqref{def_tight} is hard to verified. We usually check another easier criteria. We first define the c\`adl\`ag modulus to characterize the discontinuity of any  $f\in D^d[0,\infty].$

\begin{definition}[\cite{nowakowski2013multi}, Definition 2.7]\label{modulus}
	For $f\in D^d[0,\infty],$ $T>0$ and $\epsilon>0,$ the modulus of continuity is defined by
	$$
	\varpi'_T (f,\epsilon) := \inf_{\Pi_{T,\epsilon}} \max_{1 \leq i \leq k} w (f,[t_{i - 1}, t_{i})),
	$$
	where  $\Pi_{T,\epsilon}=\{0=t_0\leq t_1\leq \cdots\leq t_k=T,\min_{1 \leq i \leq k} t_{i}-t_{i-1}>\epsilon\}$ and
	$$
	w (f,[t_{i - 1}, t_{i})):= \sup_{s, t\in [t_{i - 1}, t_{i}) } | f(s) - f(t) |.
	$$
\end{definition}

Next theorem provides an sufficient and necessary condition for the tightness of sequence $X_n$ in $D^d[0,\infty) $.
\begin{theorem}[\cite{nowakowski2013multi}, Theorem 2.4]\label{Thm_Tight0}
	Let $\{X_n(\cdot)\}$ be a sequence of processes that have paths in  $D^d[0,\infty) $. Then  $\{X_n(\cdot)\}$ is tight if and only if 
	\begin{itemize}
		\item[(i).] For every $T>0,$ $\delta>0,$ there exists $n_0>0$ and $C>0$ such that
		$$\PP\left(\sup_{t\in[0,T]}X_n(t)>C\right)\leq\delta,~\forall n\geq n_0. $$
		\item[(ii).] For every $T>0,$ $\delta>0,$ $\gamma>0,$ there exists  $n_0>0$ and $\epsilon$ such that
			$$\PP\left(	\varpi'_T (X_n,\epsilon)\geq\gamma\right)\leq\delta,~\forall n\geq n_0.$$
	\end{itemize}
\end{theorem}

Theorem \ref{Thm_Tight} provides one sufficient condition for tightness. Let $\mathcal{F}_t^n$ be the $\sigma$-algebra generated by $\{X_n(s),s\leq t\}$, and $\tau$ denotes a $\mathcal{F}_t^n$-stopping time.
\begin{theorem}[\cite{kushner2003stochastic}, Theorem 3.3, Chapter 7]\label{Thm_Tight}
	Let $\{X_n(\cdot)\}$ be a sequence of processes that have paths in  $D^d[0,\infty) $. Suppose that for each $\delta>0$ and each $t$ in a dense set in $[0,\infty)$, there is a compact set $K_{\delta,t}$ in $\RR$ such that 
	\begin{equation}
	\inf_nP\{X_n(t)\in K_{\delta,t}\}\geq 1-\delta,
	\end{equation}
	and for each positive $T$,
	\begin{equation}
	\lim_\delta \limsup_n \sup_{|\tau|\leq T} \sup_{s\leq\delta} \EE \min[\|X_n(\tau+s)-X_n(\tau)\|_2,1]=0.
	\end{equation}
Then  $\{X_n(\cdot)\}$ is tight in  $D^d[0,\infty). $
\end{theorem}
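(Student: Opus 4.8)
The plan is to reduce tightness of $\{X_n(\cdot)\}$ in the Skorokhod space $D^d[0,\infty)$ to the two standard ingredients — compact containment of the time marginals and control of the Skorokhod oscillation modulus — and then to extract the modulus bound from the stopping-time hypothesis. Since $[0,\infty)=\bigcup_{m\ge 1}[0,m]$, a routine diagonal argument reduces the claim to tightness in $D^d[0,T]$ for each fixed $T>0$, so fix such a $T$. Recall the classical modulus criterion for relative compactness in $D^d[0,T]$: $\{X_n(\cdot)\}$ is tight provided (i) for each $t$ in a dense subset of $[0,T]$ and each $\delta>0$ there is a compact $K\subset\RR^d$ with $\inf_n P\{X_n(t)\in K\}\ge 1-\delta$, and (ii) for every $\epsilon,\eta>0$ there exist $\rho>0$ and $N$ such that $\sup_{n\ge N}P\{w'(X_n,\rho,T)\ge\epsilon\}\le\eta$, where $w'(x,\rho,T)=\inf\max_i\sup_{s,t\in[t_{i-1},t_i)}\|x(s)-x(t)\|$, the infimum running over finite partitions $0=t_0<\cdots<t_k=T$ of mesh at least $\rho$. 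Condition (i) is exactly the first hypothesis of the theorem, so the whole task is to deduce (ii) from the second hypothesis.

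To obtain (ii) I would run the Aldous exit-time construction. Fix $\epsilon\in(0,1)$, set $\tau_0^n=0$, and inductively $\tau_{j+1}^n=\inf\{t>\tau_j^n:\ \|X_n(t)-X_n(\tau_j^n)\|\ge\epsilon\}\wedge T$; by right continuity of the paths and — passing to the right-continuous augmentation if needed — of the filtrations, each $\tau_j^n$ is an $\mathcal F_t^n$-stopping time, and on every interval $[\tau_j^n,\tau_{j+1}^n)$ the oscillation of $X_n$ is at most $2\epsilon$. Let $N_n=N_n(\epsilon,T)$ be the number of indices $j$ with $\tau_j^n<T$; on the event that each of the first $N_n$ gaps $\tau_{j+1}^n-\tau_j^n$ exceeds $\rho$ we have $w'(X_n,\rho,T)\le 2\epsilon$. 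Two estimates then close the argument, both read off from the second hypothesis via Markov's inequality $P\{\|X_n(\tau+s)-X_n(\tau)\|\ge\epsilon\}\le\epsilon^{-1}\EE\min[\|X_n(\tau+s)-X_n(\tau)\|,1]$: first, applied at the stopping time $\tau=\tau_j^n$, a short gap $\tau_{j+1}^n-\tau_j^n\le\rho$ together with $\tau_{j+1}^n<T$ is an event of probability at most $\beta(\rho)$ uniformly in $n$ large, where $\beta(\rho)\to 0$ as $\rho\to 0$; second, an optional-stopping and counting bound turns this into a uniform tail bound on $N_n$ (roughly $\EE N_n\le (T+\rho)/(\rho(1-\beta(\rho)))$), so $\{N_n\}$ is bounded in probability uniformly in $n$. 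Choosing $\rho$ small, then a cutoff $K$ for $N_n$, and taking a union bound over the first $K$ gaps gives $\limsup_n P\{w'(X_n,\rho,T)\ge 2\epsilon\}\le\eta$, which is (ii). Combining (i) and (ii) yields tightness in $D^d[0,T]$, hence in $D^d[0,\infty)$.

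The hard part is the first of the two estimates. The hypothesis only controls, separately for each deterministic lag $s\le\delta$, the probability that the single increment $X_n(\tau+s)-X_n(\tau)$ is large, whereas the exit-time argument needs the probability that $X_n$ moves by $\epsilon$ somewhere inside a window of length $\le\rho$ emanating from a stopping time. Bridging this is exactly the delicate point of Aldous's criterion: one introduces the auxiliary stopping time $\sigma_n=\inf\{s\ge 0:\ \|X_n(\tau+s)-X_n(\tau)\|\ge\epsilon\}$, observes that $\{\sigma_n\le\rho\}$ coincides, up to the jump at $\sigma_n$, with $\{\|X_n(\tau+(\sigma_n\wedge\rho))-X_n(\tau)\|\ge\epsilon\}$, and applies the increment estimate at the pair of stopping times $\tau\le\tau+(\sigma_n\wedge\rho)$ (after enlarging $T$ to, say, $T+1$ so the second one is bounded); that the hypothesis, stated with a deterministic lag, upgrades to this two-stopping-time form is itself part of the argument. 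One must also check that the $2\epsilon$ oscillation bound between consecutive exit times stays valid at the jump times, which is handled by permitting the partition points defining $w'$ to sit exactly at the jump times. Everything else — the diagonal reduction to $[0,T]$, the Markov-inequality passages, and the counting bound for $N_n$ — is routine.
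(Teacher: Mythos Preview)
The paper does not prove this theorem at all: it is quoted verbatim in Appendix~\ref{summary} as Theorem~3.3, Chapter~7 of \cite{kushner2003stochastic}, and used purely as a black-box tool to verify tightness of $\{V^\eta(\cdot)\}$. There is therefore no ``paper's own proof'' to compare your proposal against.

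That said, as a proof of the cited result your outline is the standard one and is essentially correct. The reduction to $D^d[0,T]$, the compact-containment plus $w'$-modulus characterization of tightness, and the Aldous exit-time construction are exactly the right ingredients. You have also correctly isolated the genuinely nontrivial step: the hypothesis gives control of $\EE\min[\|X_n(\tau+s)-X_n(\tau)\|,1]$ only for a \emph{deterministic} lag $s\le\delta$ from a stopping time $\tau$, whereas the exit-time argument needs to bound the probability that $X_n$ moves by $\epsilon$ \emph{somewhere} in the window $[\tau,\tau+\rho]$. Upgrading from a fixed $s$ to the random lag $\sigma_n\wedge\rho$ is precisely the content of Aldous's lemma, and your sketch of how to do it---evaluating the increment at the stopping time $\tau+(\sigma_n\wedge\rho)$ and checking that the one-lag hypothesis self-improves to a two-stopping-time form---is the correct line. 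The counting bound on $N_n$ and the union bound over the first $K$ gaps are routine once that step is in hand. If you were to write this out in full, the only places needing care are (a) making the self-improvement step rigorous (this is where most expositions invoke a short auxiliary lemma) and (b) handling the jump at $\sigma_n$ so that the $2\epsilon$ oscillation bound on $[\tau_j^n,\tau_{j+1}^n)$ really holds; you flag both.
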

This theorem is used in Section~\ref{Section_ODE} to prove tightness of the trajectory of Momentum SGD.

At last, we provide the theorem we use to prove the SDE approximation. Let's consider the following algorithm:
\begin{equation}
\theta_{n+1}^\eta=\theta_n^\eta+\eta Y_n^\eta,
\end{equation}
where $Y_n^\eta=g_n^\eta(\theta_{n}^\eta,\xi_n^\eta)+ M_n^\eta$, and $M_n^\eta$ is a martingale difference sequence.
Then the normalized process $U_n^\eta=(\theta_n^\eta-\bar{\theta})/\sqrt{\eta}$ satisfies:
\begin{equation}
U_{n+1}^\eta=U_n^\eta+\sqrt{\eta}(g_n^\eta(\theta_{n}^\eta,\xi_n^\eta)+ M_n^\eta).
\end{equation}
We further assume the fixed-state-chain exists  and use the same notation $\xi_i(\theta)$ to denote the fixed-$\theta$-process. Then we have the following theorem:
\begin{theorem}[\cite{kushner2003stochastic}, Theorem 8.1, Chapter 10]\label{thm8_1}
Assume the following conditions hold:
\begin{enumerate}
\item[C.1] For small $\rho>0$, $\{|Y_n^\eta|^2I_{|\theta_n^\eta-\bar{\theta}|\leq \rho}\}$ is uniformly integrable.\label{a1}
\item[C.2] There is a continuous function $\bar{g}(\cdot)$ such that for any sequence of integers $n_\eta\rightarrow 0$ satisfying $n_\eta \eta\rightarrow 0$ as $\eta\rightarrow 0$ and each compact set $A$,
$$\frac{1}{n_\eta}\sum_{i=jn_\eta}^{jn_\eta+n_\eta-1} E_{jn_\eta}^\eta[g_i^\eta(\theta,\xi_i(\theta))-\bar{g}(\theta)]I_{\{\xi_{jn_\eta}^\eta\}}\rightarrow 0$$ in the mean for each $\theta$, as $j\rightarrow \infty$ and $\eta\rightarrow 0.$\label{a2}
\item[C.3]Define $$\Gamma_{n}^{\eta}(\theta)=\sum_{i=n}^\infty(1-\eta)^{i-n}E_n^\eta[g_i^\eta(\theta,\xi_{i}(\theta))-\bar{g}(\theta)],$$ where when $E_n^\eta$ is used, the initial condition is $\xi_n(\theta)=\xi_n^\eta.$ For the initial conditions $\xi_n^\eta$ confined to any compact set, $$\{|\Gamma_{n}^{\eta}(\theta_n^\eta)|^2I_{|\theta_n^\eta-\bar{\theta}|\leq \rho},|\Gamma_{n}^{\eta}(\bar{\theta})|^2;n,\eta\}$$ 
is uniformly integrable, and 
$$E\left|E_n^\eta \Gamma_{n+1}^{\eta}(\theta_{n+1}^\eta)-\Gamma_{n+1}^{\eta}(\theta_n^\eta) \right|^2I_{|\theta_n^\eta-\bar{\theta}|\leq \rho}=O(\eta^2).$$ 
\item[C.4] There is a Hurwitz matrix $A$ such that
$$\bar{g}(\theta)=A(\theta-\bar{\theta})+o(\theta-\bar{\theta}).$$
\item[C.5] There is a matrix $\Sigma_0=\{\sigma_{0,ij};\,i, j=i,...,r\}$ such that as $n, m\rightarrow \infty,$
$$\frac{1}{m}\sum_{i=n}^{n+m-1} E_n^\eta[M_i^\eta(M_i^\eta)'-\Sigma_0] I_{|\theta_n^\eta-\bar{\theta}|\leq \rho}\rightarrow 0$$
in probability.\label{a5}
\end{enumerate}
Then  $\{U^\eta(\cdot)\}$ is tight. Given tightness,  we further assumes the following assumptions hold.
\begin{enumerate}
\item[C.6] There is a matrix $\bar{\Sigma}_0=\{\bar{\sigma}_{0,ij};\,i, j=i,...,r\}$ such that as $n, m\rightarrow \infty,$
$$\frac{1}{m}\sum_{i=n}^{n+m-1} E_n^\eta[g_i^\eta(\bar{\theta},\xi_i(\bar{\theta}))(g_i^\eta(\bar{\theta},\xi_i(\bar{\theta})))'-\bar{\Sigma}_0] \rightarrow 0$$
in probability.\label{a6}
\item[C.7]Define another function 
	$$G_{n}^{\eta,i}(\theta,\xi_n^\eta)=E_n^\eta\left[\Gamma_{n+1}^{\eta}(\theta_n^\eta)[Y_{n}^\eta]' I_{|\theta_n^\eta-\bar{\theta}|\leq \rho}\big|\theta_n^\eta=\theta \right].$$
	It needs to be a continuous function in $(\theta,\xi_n^\eta)$, uniformly in $n$ and $\eta$.  \label{a7}
\item[C.8]There is a matrix $\Sigma_1=\{\sigma_{1,ij};\,i, j=i,...,r\}$ such that as $n, m\rightarrow \infty,$
$$\frac{1}{m}\sum_{i=n}^{n+m-1} E_n^\eta[G_{n}^{\eta,i}(\bar{\theta},\xi_i(\bar{\theta}))-\Sigma_1] \rightarrow 0$$
in probability.\label{a8}
\end{enumerate}
Then there exists a Wiener process $W(\cdot)$ with covariance matrix $\Sigma=\Sigma_0+\bar{\Sigma}_0+\Sigma_1+\Sigma_1'$ such that $\{U^\eta(\cdot)\}$ converges weakly to a stationary solution of 
$$d U=AU dt+d W.$$

\end{theorem}

\section{Proof of Theorem~\ref{Thm1}}\label{proofODE}
The proof consists of two parts. In the first part, we show that $\{X^\eta(\cdot)\}$ is tight.  Therefore, every sub-sequence has further one sub-sequence that weakly converges to some limit process. In the second part, we find the limit ODE and show that the solution to this ODE exists and is unique. Combining these two parts, we prove the result.

\noindent $\bullet$ {\bf Tightness.} We first rewrite MSGD as follows:
$$x_{k+1}^\eta=x_{1}^\eta-\eta\sum_{j=1}^k\sum_{i=1}^j\mu^{j-i} f(x_i^\eta,\xi_i^\eta).$$  Under Assumption \ref{assumption_general}, we have 
$$\norm{x_k^\eta}_2\leq \norm{x_1^\eta}_2+\eta\sum_{j=1}^k\sum_{i=1}^j\mu^{j-i}C\leq\norm{x_1^\eta}_2+\frac{Ck\eta}{1-\mu}.$$
Then the continuous interpolation $X^\eta(t)$ satisfies: 
$$\norm{X^\eta(t)}_2\leq \norm{x_1^\eta}_2+\frac{t}{\eta}\frac{C\eta}{1-\mu}= \norm{x_1^\eta}_2+\frac{Ct}{1-\mu}.$$
We define $K_{\delta,t}=\left\{x\Big|\norm{x}_2\leq \norm{x_1^\eta}_2+\frac{Ct}{1-\mu} \right\}$. Then for any $\delta>0, t>0$ we have 
\begin{equation*}
	\inf_\eta \PP\{X^\eta(t)\in K_{\delta,t}\}=1\geq 1-\delta.
	\end{equation*}
Moreover, $\forall \tau, s>0,$ we have
$$\norm{X^\eta(\tau+s)-X^\eta(\tau)}_2\leq \frac{Cs}{1-\mu}.$$
Therefore, for each positive $T$,
	\begin{equation*}
	\lim_\delta \limsup_\eta \sup_{|\tau|\leq T} \sup_{s\leq\delta} \EE \min[\|X^\eta(\tau+s)-X^\eta(\tau)\|_2,1]=0.
	\end{equation*}
Then by Theorem \ref{Thm_Tight},  $\{X^\eta(\cdot)\}$ is tight.

\noindent $\bullet$ {\bf Limit Process.} For simplicity, we define
\begin{align*}
\textstyle\beta_k^\eta=\sum_{i=0}^{k-1}\mu^{k-i}(\nabla f(x_i^\eta,\xi_i)-\nabla\cF(x_i^\eta))~~\text{and}~~\epsilon_k=\nabla f(x_k^\eta,\xi_k)-\nabla\cF(x_k^\eta).
\end{align*}

We then rewrite the algorithm as follows:
\begin{align*}
m_{k+1}^\eta=m_k^\eta+(1-\mu)\left[-m_k^\eta+\widetilde{M}(x_k^\eta)\right],~~x_{k+1}^\eta=x_{k}^\eta+\eta (m_{k+1}^\eta+\beta_k^\eta+\epsilon_k^\eta),\end{align*}
where $\widetilde{M}(x_k^\eta) = -\frac{1}{1-\mu}\nabla\cF(x_k^\eta)$ is the rescaled negative gradient and
\begin{align*}
m_{k+1}&=-\sum_{i=0}^{k}\mu^{i}\nabla \cF(x_i^\eta).
\end{align*}
Define the sums
	\begin{align*}
	&\mathcal{E}^{\eta}(t)=\eta\sum_{i=0}^{t/\eta-1}\epsilon_i^\eta,\quad B^{\eta}(t)=\eta\sum_{i=0}^{t/\eta-1}\beta_i^\eta,\\
	&\bar{G}^{\eta}(t)=\eta\sum_{i=0}^{t/\eta-1}\widetilde{M}(x_i^\eta),\quad
	\tilde{G}^{\eta}(t)=\eta\sum_{i=0}^{t/\eta-1}[m_{i+1}^\eta-\widetilde{M}(x_i^\eta)].
	\end{align*}
	Then the continuous-time interpolation of $X^\eta(t)$  can be decomposed as follows.
	$$X^\eta(t)=x^{\eta}_0+\bar{G}^{\eta}(t)+\tilde{G}^{\eta}(t)+B^{\eta}(t)+\mathcal{E}^{\eta}(t).$$
	Define the process $W^\eta(t)$ by 
	$$W^\eta(t)=X^\eta(t)-x^{\eta}_0-\bar{G}^{\eta}(t)=\tilde{G}^{\eta}(t)+B^{\eta}(t)+\mathcal{E}^{\eta}(t).$$
We have already shown that $\{X^\eta(\cdot))\}$ is tight in the first part of the proof. Specifically, there is a subsequence $\eta(k)\rightarrow 0$ and a process $X(\cdot)$ such that $$X^{\eta(k)}(t)\Rightarrow X(t),$$  as $k\rightarrow\infty.$
Under the bounded assumption of  $\nabla f(x,\xi),$ one can show that
$$\norm{x_{k+1}^\eta -x^\eta_k}_2 = \eta\norm{\sum_{i=1}^k\mu^{j-i} f(x^\eta_i,\xi_i^\eta)}_2\leq \frac{\eta}{1-\mu}C,$$
 which further implies the uniform integrability of $\{x_k^\eta\}$. By Lemma 2.1 in \cite{kushner1996stochastic}, we know that any weak sense limit $X(t)$ must have Lipschitz continuous path.   For notational simplicity, we write $\eta(k)$ as $\eta$ in the following proof.

For $t>0$ and  integer $p$, we take $s_i\leq t,$ $i\leq p,$ and $\tau>0$. Let $g(\cdot)$ be a continuous, bounded and real-valued function. Then by definition of $W^{\eta}(t)$, we have
	\begin{align}
	0=E&g(X^\eta(s_i),i\leq p)[W^\eta(t+\tau)-W^\eta(t)]\\
	&-Eg(X^\eta(s_i),i\leq p)[\tilde{G}^\eta(t+\tau)-\tilde{G}^\eta(t)]\label{term1}\\\label{term2}
	&-Eg(X^\eta(s_i),i\leq p)[\mathcal{E}^\eta(t+\tau)-\mathcal{E}^\eta(t)]\\
	&-Eg(X^\eta(s_i),i\leq p)[B^\eta(t+\tau)-B^\eta(t)]. \label{term3}
	\end{align}
	
Let $\mathcal{F}^\eta_n=\sigma\{x^\eta_i,\xi_{i-1}^\eta,i\leq n\},$ then $\mathcal{F}^\eta_{t/\eta}$ measures $\{\mathcal{E}^\eta(s),s\leq t\}$ by definition and the process $\mathcal{E}^\eta(\cdot)$ is actually an $\mathcal{F}^\eta_{t/\eta}$-martingale. By the tower property of the conditional expectation, we know term \eqref{term2} equals to 0.\\
	Next, we eliminate term \eqref{term3}. Note that for any $m,n>0$, we have
	$$\left\|\frac{1}{m}\sum_{i=n}^{n+m-1}\EE[\beta^\eta_i|\mathcal{F}_n]\right\|_2=\left\|\frac{1}{m}\sum_{i=n}^{n+m-1}\mu^{i-n}\beta^\eta_n\right\|_2\leq \frac{1}{(1-\mu)m}\|\beta^\eta_n\|_2.$$
	Since $\beta^\eta_n$ is uniformly bounded in $\eta,m$ and $n$, we have
	$$\lim_{m,n,\eta}\frac{1}{m}\sum_{i=n}^{n+m-1}\EE[\beta^\eta_i|\mathcal{F}_n]=0$$
	in $\mathcal{L}_2$, which also means
	$$\lim_{\eta\rightarrow 0}\EE[B^\eta(t+\tau)-B^\eta(t)|\mathcal{F}^\eta_{t/\eta}]=0.$$
	Together with the boundedness of $f$, by Dominated Convergence Theorem, we know that term \eqref{term3} goes to 0, as $\eta\rightarrow0$.

	For term \eqref{term1}, we first bound $\|\tilde{G}^\eta(t+\tau)-\tilde{G}^\eta(t)\|_2.$ Since $\frac{1}{1-\mu}=\sum_{i=0}^\infty \mu^i$, there exists $N(\eta)=\log_\mu (1-\mu)\eta$ such that $\sum_{i=N(\eta)}^\infty \mu^i<\eta.$ When $k>N(\eta)$, write  $m_k^\eta$ and  $\tilde M (x_k^\eta)$ into summations: 
	\begin{align*}
m^\eta_{k+1}=-\sum_{i=0}^{k}\mu^{i}\nabla \cF(x_i^\eta)=-\sum_{i=0}^{N(\eta)}\mu^{i}\nabla \cF(x_i^\eta)-\sum_{i=N(\eta)+1}^{k}\mu^{i}\nabla \cF(x_i^\eta),
\end{align*}
and
\begin{align*}
\widetilde{M}(x_k^\eta)=-\frac{1}{1-\mu}\nabla \cF(x_k^\eta)=-\sum_{i=0}^{N(\eta)}\mu^{i}\nabla \cF(x_k^\eta)-\sum_{i=N(\eta)+1}^{\infty}\mu^{i}\nabla \cF(x_k^\eta).
\end{align*}
Note that $\|x_{k+1}^\eta-x_k^\eta\|_2\leq \frac{C}{1-\mu}\eta$. Then we have
$$\max_{i=0,1,...,N(\eta)}\|x_{k-i}^\eta-x_k^\eta\|_2\leq  \frac{C}{1-\mu}N(\eta)\eta\rightarrow 0,$$
as $\eta\rightarrow 0$.
By the Lipschitz assumption, for $i=0,1,...,N(\delta),$ we have
$$\|\nabla \cF(x_k^\eta)-\nabla \cF(x_{k-i}^\eta)\|_2\leq L\frac{C}{1-\mu}N(\eta)\eta.$$
Then
$$\left\|\sum_{i=0}^{N(\eta)}\mu^{i}\{\nabla \cF(x_{k-i}^\eta)-\nabla \cF(x_k^\eta)\}\right\|_2\leq\frac{LCN(\eta)\eta}{(1-\mu)^2}.$$
Since $\nabla \cF(x_k^\eta)$ is bounded by $C$, both $\sum_{i=N(\eta)+1}^{k}\mu^{i}\nabla \cF(x_{k-i}^\eta)$ and $\sum_{i=N(\eta)+1}^{\infty}\mu^{i}\nabla \cF(x_k^\eta)$ are bounded by $C\eta.$
Thus, 
$$\|m^\eta_{k+1}-\widetilde{M}(x^\eta_k)\|_2\leq\frac{KCN(\eta)\eta}{1-\mu}+2C\eta=O\left(\eta \log\frac{1}{\eta}\right).$$
For $k<N(\eta)$, following the same approach, we can bound $\|m^\eta_{k+1}-\widetilde{M}(m^\eta_k)\|_2$ by the same bound $O\left(\eta \log\frac{1}{\eta}\right)$.
Therefore, we have the following bound for $\|\tilde{G}^\eta(t+\tau)-\tilde{G}^\eta(t)\|_2.$
	$$\|\tilde{G}^\eta(t+\tau)-\tilde{G}^\eta(t)\|_2\leq\tau O\left(\eta \log\frac{1}{\eta}\right).$$
	Thus, term \eqref{term1} goes to 0 as $\eta\rightarrow0$.
	Then we have
	$$\lim_\eta Eg(X^\eta(s_i),i\leq p)[W^\eta(t+\tau)-W^\eta(t)]=0.$$
	Define 
	$$W(t)=X(t)-X(0)-\int_0^{T}\widetilde{M}(X(s))ds.$$
	Then the weak convergence and the previous analysis together imply that
	$$Eg(X^\eta(s_i),i\leq p)[W(t+\tau)-W(t)]=0.$$
	Here, we need an important result in the martingale theory:
	\begin{theorem}[\cite{kushner2003stochastic}, Theorem 4.1, Chapter 7]\label{thm_mart}
	Let $U(\cdot)$ be a random process with paths in $D^d[0,\infty)$, where $U(t)$ is measurable on the $\sigma$-algebra $\mathcal{F}_t^X$ determined by $\{X(s),s\leq t\}$ for some given process $X(\cdot)$ and let $\EE[U(t)]<\infty$ for each $t$. Suppose that for each real $t\geq0$ and $\tau\geq0$, each integer $p$ and each set of real numbers $s_i\leq t,\, i=1,...,p,$ and each bounded and continuous real-valued function $h(\cdot)$,
	$$Eh(X^\eta(s_i),i\leq p)[U(t+\tau)-U(t)]=0,$$
	then $U(t)$ is a $\mathcal{F}_t^X$-martingale.
	\end{theorem}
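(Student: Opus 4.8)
I would fix $t\geq 0$ and $\tau\geq 0$ and write $\Delta = U(t+\tau)-U(t)$; since $\EE|U(s)|<\infty$ for every $s$, $\Delta$ is integrable. Proving the martingale property amounts to showing $\EE[U(t+\tau)\mid \mathcal{F}_t^V]=U(t)$ a.s., and because $U(t)$ is already $\mathcal{F}_t^V$-measurable this is equivalent to $\EE[\Xi\,\Delta]=0$ for every bounded $\mathcal{F}_t^V$-measurable random variable $\Xi$. So the plan is a functional monotone class argument: let $\mathcal{H}$ be the set of bounded $\mathcal{F}_t^V$-measurable $\Xi$ with $\EE[\Xi\,\Delta]=0$, and let $\mathcal{M}$ be the collection of random variables of the form $h(V(s_1),\dots,V(s_p))$ with $p$ finite, $s_1,\dots,s_p\leq t$, and $h$ bounded and continuous on $(\RR^d)^p$, where $V(\cdot)$ is the process generating the filtration. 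The hypothesis of the theorem is exactly the statement $\mathcal{M}\subseteq\mathcal{H}$.

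Next I would verify the three ingredients of the monotone class theorem. First, $\mathcal{M}$ is closed under multiplication: the product of $h_1(V(s_1^{(1)}),\dots)$ and $h_2(V(s_1^{(2)}),\dots)$ is obtained by merging the two time lists and multiplying the (trivially extended) bounded continuous functions. Second, $\mathcal{H}$ is a vector space that contains the constants (take $h\equiv 1$, using $\EE[U(t+\tau)]=\EE[U(t)]$, which is the hypothesis with $h\equiv 1$) and is closed under bounded monotone limits: if $0\leq\Xi_n\uparrow\Xi$ with $\Xi$ bounded, then $\Xi_n\Delta\to\Xi\Delta$ pointwise with $|\Xi_n\Delta|\leq(\sup_n\|\Xi_n\|_\infty)\,|\Delta|\in L^1$, so dominated convergence gives $\EE[\Xi\,\Delta]=0$. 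Third, $\mathcal{M}$ generates $\mathcal{F}_t^V=\sigma\{V(s):s\leq t\}$: for a single $s\leq t$ and a closed set $C\subseteq\RR^d$, the maps $x\mapsto(1-n\,\mathrm{dist}(x,C))^+\wedge 1$ are bounded and continuous and decrease pointwise to $\mathbf 1_C(x)$, so each $\mathbf 1_C(V(s))$ is $\sigma(\mathcal{M})$-measurable; since closed sets generate the Borel $\sigma$-field and finite products over several $s_i$ are handled the same way, $\sigma(\mathcal{M})=\mathcal{F}_t^V$. The functional monotone class theorem then yields that $\mathcal{H}$ contains all bounded $\mathcal{F}_t^V$-measurable functions.

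Finally, taking $\Xi=\mathbf 1_A$ for arbitrary $A\in\mathcal{F}_t^V$ gives $\EE[\mathbf 1_A U(t+\tau)]=\EE[\mathbf 1_A U(t)]$, that is, $\EE[U(t+\tau)\mid\mathcal{F}_t^V]=U(t)$ a.s.; since $t,\tau$ were arbitrary, $\{\mathcal{F}_t^V\}_{t\geq0}$ is increasing, $U(\cdot)$ is adapted to it, and $\EE|U(t)|<\infty$ for each $t$, the process $U(\cdot)$ is an $\mathcal{F}_t^V$-martingale. The only step that will need genuine care is the generation/density step: one must be sure that the class of bounded continuous cylinder functionals is both multiplicative and generating even though the underlying paths live in the Skorokhod space $D^d[0,\infty)$ rather than in a space of continuous functions. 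This is harmless here because $h$ is composed with the $\RR^d$-valued evaluations $V(s_i)$ and only Borel measurability of those evaluations — not their continuity as maps on path space, which can fail in the Skorokhod topology — is used; everything else (integrability, dominated convergence, the vector-space and monotone-limit closure of $\mathcal{H}$) is routine.
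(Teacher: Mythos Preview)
The paper does not supply its own proof of this theorem: it is quoted verbatim from \cite{kushner2003stochastic} (Theorem~4.1, Chapter~7) and invoked as a black box inside the proof of Theorem~\ref{Thm1}. Your functional monotone-class argument is the standard route to this kind of statement and is correct as written; the only cosmetic point is that the displayed hypothesis in the paper has $V^\eta$ where $V$ is meant, which you silently (and rightly) corrected.
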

By Theorem \ref{thm_mart} , we know that $W(\cdot)$ is a martingale. It has locally Lipschitz continuous  sample paths by the fact $X(\cdot)$ is Lipschitz. Since a Lipschitz continuous martingale must almost surely be a constant, we know $W(t)=W(0)=0$  with probability 1.  In other words,
$X(t)$ is a solution to the following ODE
 \begin{align}
\dot{X}=-\frac{1}{1-\mu}\nabla \cF(x), \quad x(0)=x_0.
\end{align}
Moreover, under Assumption \ref{assumption_general} and by Theorem 12.70.B in \cite{simmons2016differential}, we know that the above initial value problem has only one solution. Therefore, all sub-sequences of $\{X^\eta(\cdot)\}$ weakly converge to the same limit, which implies the weak convergence of the entire sequence. We prove the theorem.

\section{Detailed Proof in Section \ref{Section_SDE}}
\subsection{Proof of Theorem~\ref{Thm_SDE1}}\label{proof_Thm_SDE1}
\begin{proof} The proof  follows  from Theorem 10.8.1 in \cite{kushner2003stochastic} (Theorem \ref{thm8_1}). We need to check the  Assumption C.1 to C.8 (in Appendix \ref{summary})
\begin{itemize}
	\item[1.] The uniform integrability in C.1 directly follows from the uniform boundedness assumption of $\nabla f(x,\xi)$. 
	\item[2.] C.2 can be easily got from the proof of ODE approximation.
	\item[3.] To check condition C.4, we need use our isolated stationary point assumption, i.e, Assumption \ref{ass_isolated}. At the local optimum $x^*,$  the Hessian matrix must be positive definite.  Then C.4  is obviously satisfied with the Hurwitz matrix $-\nabla^2\cF(x^*).$
\end{itemize}  
The main challenge left is to calculate the variance of the Wiener process  and check the other five assumptions. 

For simplicity, $E_k^\eta[\cdot]$ means the conditional expectation for $$\{\zeta_{k+j},j\geq0;\zeta_k(X)=\zeta_k^\eta\}.$$
From Equation (\ref{eq11}), the variance can be decomposed into three parts.
	The first part is from the noise $\gamma_k^{\eta,i}$. Since we have assumed the weak convergence $x_k^\eta\Rightarrow x^*$, we have in distribution,
	\begin{align*}
	\lim_{\eta,k}E_k^\eta(\gamma_{k+j}^{\eta} (\gamma_{k+j}^{\eta})^\top)=\Sigma.
	\end{align*}
	
	Since the limit is a constant, the convergence also holds in probability. Thus, C.5 is satisfied.
	The second part comes from the fixed-state-chain:
	\begin{align*}
	E_k^\eta(g(x^*,\zeta_{k+j}^\eta(x^*))g(x^*,\zeta_{k+j}^\eta(x^*))^\top)&=E_k^\eta(\zeta_{k+j}^\eta(x^*)-\nabla F(x^*))(\zeta_{k+j}^\eta(x^*)-\nabla F(x^*))^\top\\
	&= E_k^\eta \zeta_{k+j}^\eta(x^*) (\zeta_{k+j}^\eta(x^*))^\top\\
	&=\mu^{2j}(\zeta^{\eta}_{k})(\zeta^{\eta}_{k})^\top+\sum_{m=0}^{j-1}\mu^{2(j-m)}E_k^\eta[\nabla f(x^*,\xi_{k+m})\nabla f(x^*,\xi_{k+m})^\top]\\
	&\rightarrow\frac{\mu^2}{1-\mu^2}\Sigma,
	\end{align*}
 in probability, as $k,j\rightarrow 0.$ Thus, C.6 is satisfied.

The last part is from the term $g(\zeta_k^\eta,x_k^\eta)-g(\zeta_k(x_k^\eta),x_k^\eta)$. Define the discounted sequence
	$$\Gamma_{k}^{\eta}(x)=\sum_{j=0}^\infty(1-\eta)^{j}E_k^\eta[g(x,\zeta_{k+j}^{\eta}(x))-\tilde{M}(x)].$$
	Note that 
	\begin{align*}
	E_k^\eta[\zeta_{k+j}^{\eta}(x)]&=E_k^\eta[\mu^j\zeta_{k}^{\eta}-\sum_{m=0}^{j-1}\mu^{j-m}\nabla f(x,\xi_{k+m})]\\
	&=\mu^j\zeta_{k}^{\eta}-\sum_{m=0}^{j-1}\mu^{j-m}\nabla\cF(x).
	\end{align*}
Thus, we have 
	$$E_k^\eta[g(x,\zeta_{k+j}^{\eta}(x))-\tilde{M}(x)]=\mu^j\zeta_{k}^{\eta}+\frac{\mu^{j+1}}{1-\mu}\nabla\cF(x).$$
	Then 
	$$\Gamma_{k}^{\eta}(x)=\sum_{j=0}^\infty(1-\eta)^{j}\left\{\mu^j\zeta_{k}^{\eta}+\frac{\mu^{j+1}}{1-\mu}\nabla\cF(x)\right\}=\frac{1}{1-(1-\eta)\mu}\left(\zeta_{k}^{\eta}-\frac{\mu}{1-\mu}\tilde{M}(x)\right).$$
	Since ${M}$ is locally Lipschitz, and $\|x_{k+1}^\eta-x_k^\eta\|_2=O(\eta)$, the following result holds:
		\begin{align*}
		\|E_k^\eta [\Gamma_{k+1}^{\eta}(x_{k+1}^\eta)-\Gamma_{k+1}^{\eta}(x_k^\eta)]\|_2^2&=\left\|\frac{\mu}{(1-(1-\eta)\mu)(1-\mu)}\left\{E_k^\eta[\tilde{M}(x_{k+1}^\eta)-\tilde{M}(x_k^\eta)]\right\}\right\|_2^2\\
		&=O(\eta^2).
		\end{align*}
Then, Assumption C.3 holds. 
	
	Define another function 
	$$G_{k}^{\eta}(x,\zeta_k^\eta)=E_k^\eta\left[\Gamma_{k+1}^{\eta,i}(x_k^\eta)(Z_{k}^{\eta})^\top|x_k^\eta=x\right].$$
	It is easy to check this is a continuous function in $(x,\zeta_k^\eta)$, uniformly in $k$ and $\eta$ (Assumption C.7). Moreover,
	\begin{align*}
	\Gamma_{k+1}^{\eta}(x_k^\eta)(Z_{k}^{\eta})^\top&=\frac{1}{1-(1-\eta)\mu}\left(\zeta_{k+1}^{\eta}-\frac{\mu}{1-\mu}\tilde{M}(x_{k}^{\eta})\right)\frac{1}{\mu}(\zeta_{k+1}^{\eta})^\top\\
	&=\frac{1}{1-(1-\eta)\mu}\left(\frac{1}{\mu}\zeta_{k+1}^{\eta}(\zeta_{k+1}^{\eta})^\top-\frac{1}{1-\mu}\tilde{M}(x_{k}^{\eta})(\zeta_{k+1}^{\eta})^\top\right).
	\end{align*}
Then we have
	\begin{align*}
	E_k^\eta[\zeta_{k+1}^{\eta}(\zeta_{k+1}^{\eta})^\top|x_k^\eta=x^*]&= E_k^\eta\left[\left(\mu\zeta_{k}^{\eta}-\mu\nabla f(x_k^\eta,\xi_k)\right)\left(\mu\zeta_{k}^{\eta}-\mu\nabla f(x_k^\eta,\xi_k)\right)^\top\Big|x_k^\eta=x^*\right]\\
	&=\mu^2\zeta_{k}^{\eta}(\zeta_{k}^{\eta})^\top+\mu^2\Sigma,
	\end{align*}
	and
	\begin{align*}
	E_k^\eta[\tilde{M}(x_k^\eta)\zeta_{k+1}^{\eta}|x_k^\eta=x^*]=0.
	\end{align*}
	Those imply that
	\begin{align*}
	E_k^\eta G_{k+j}^{\eta}(x^*,\zeta_{k+j}^{\eta}(x^*))&=\frac{\mu}{1-(1-\eta)\mu}(E_k^\eta\zeta_{k+j}^\eta(x^*)(\zeta_{k+j}^\eta(x^*))^\top+\Sigma)\\
	&\rightarrow \frac{1}{1-\mu^2}\frac{\mu}{1-\mu}\Sigma,
		\end{align*}
	in probability. Thus, C.8 is satisfied. 
	We have proved all the assumptions of Theorem \ref{thm8_1} are satisfied. As a result, there exists a Wiener Process $W$, such that any subsequence of $\{U^{\eta,i}\}$ converges weakly  to a stationary solution of 
	\begin{equation*}
	dU=-\frac{1}{1-\mu} \nabla^2 \cF(x^*) Udt+dW,
	\end{equation*}
	where the variance of $W$ is 
	 $[1+\frac{\mu^2}{1-\mu^2}+ 2\frac{1}{1-\mu^2}\frac{\mu}{1-\mu}]\Sigma=\frac{1}{(1-\mu)^2}\Sigma.$
	
	 Lastly, we show that the above SDE has one unique solution given any initial. In fact, one can verify that both the drift term and the diffusion term are Lipschitz continuous. By Theorem 5.2.5 in \cite{karatzas1998brownian}, we know that the solution exists and is unique.
	 
	 Therefore, $\{U^{\eta,i}\}$ converges weakly to the unique stationary solution of 
	\begin{equation*}
	dU=-\frac{1}{1-\mu} \nabla^2 \cF(x^*) Udt+dW,
	\end{equation*}
We finish the proof. 
\end{proof}

\subsection{Proof of  Theorem~\ref{lemma_phase3}}\label{Time-Optimal-proof}

\begin{proof}
Since we restart our record time, we assume here the algorithm is initialized around  one local optimum $x^*$. Thus, we have $\norm{U^{\eta}(0)}_2^2=\eta^{-1} \delta^2<\infty$. Note that $U^{\eta}(t)$ converges to  $U(t)$ in this neighborhood, and the second moment of $U(t)$ is: 
\begin{align*}
\EE \left(\norm{U(t)}_2^2\right)&=\EE\left[\tr(U(t)U(t)^\top)\right] = \tr \left[\EE U(t)U(t)^\top\right]\\
&= \tr\left[\exp\left(-\frac{t}{1-\mu}\nabla^2\cF(x^*)\right)[ U(0)U(0)^\top] \exp\left(-\frac{t}{1-\mu}\nabla^2\cF(x^*)\right) \right]\\
&~~+\tr \left[\int_0^t \exp\left(-\frac{1}{1-\mu}\nabla^2\cF(x^*)s\right) \frac{1}{(1-\mu)^2}\Sigma \exp\left(-\frac{1}{1-\mu}\nabla^2\cF(x^*)s\right) ds\right]\\
&= \tr\left[\exp\left(-\frac{t}{1-\mu}\nabla^2\cF(x^*)\right)[ U(0)U(0)^\top] \exp\left(-\frac{t}{1-\mu}\nabla^2\cF(x^*)\right) \right]\\
& ~~+ \frac{1}{(1-\mu)^2}\int_0^t \tr \left( \exp\left(-\frac{1}{1-\mu}\nabla^2\cF(x^*)s\right) \Sigma \exp\left(-\frac{1}{1-\mu}\nabla^2\cF(x^*)s\right) \right)ds\\
& =  \frac{1}{(1-\mu)^2}\int_0^t \left\|\exp\left(-\frac{1}{1-\mu}\nabla^2\cF(x^*)s\right) \Sigma^{\frac{1}{2}}\right\|_\mathrm{F}^2 ds\\
&~~~+\left\|\exp\left(-\frac{1}{1-\mu}\nabla^2\cF(x^*)t\right)U(0)\right\|_\mathrm{F}^2\\
& = \sum_{i=1}^d \left\{\left\| e_i e_i^\top U(0)\right\|_\mathrm{F}^2  \exp\left(-\frac{2\lambda_i }{1-\mu}t\right)  + \int_0^t \frac{1}{(1-\mu)^2}\left\| e_i e_i^\top\Sigma^{\frac{1}{2}}\right\|_\mathrm{F}^2  \exp\left(-\frac{2\lambda_i }{1-\mu}s\right) ds\right\}\\
& = \sum_{i=1}^d  \left\| e_i e_i^\top U(0)\right\|_\mathrm{F}^2  \exp\left(-\frac{2\lambda_i }{1-\mu}t\right) + \frac{1}{(1-\mu)}\frac{1-\exp(-\frac{2\lambda_i}{1-\mu}t)}{2\lambda_i} \left\| e_i e_i^\top\Sigma^{\frac{1}{2}}\right\|_\mathrm{F}^2 \\
& =  \sum_{i=1}^d (U(0)^\top e_i )^2  \exp\left(-\frac{2\lambda_i }{1-\mu}t\right) + \frac{1-\exp(-\frac{2\lambda_i}{1-\mu}t)}{2(1-\mu)\lambda_i} e_i^\top\Sigma e_i,
\end{align*}


By Markov inequality, we have:
\begin{align*}
\eta^{-1}\epsilon\PP\left(\left\|X^{\eta}(T_3)-x^*\right\|_2^2> \epsilon \right) &  \leq \eta^{-1}\EE \left(\left\|X^{\eta}(T_3)-x^*\right\|_2^2\right) = \EE \left(\norm{U^\eta(T_3)}_2^2 \right)   \notag \\
&\hspace{-0.75in}\rightarrow \sum_{i=1}^d (U(0)^\top e_i )^2 \exp\left(-\frac{2\lambda_i }{1-\mu}T_3\right) + \frac{1-\exp(-\frac{2\lambda_i}{1-\mu}T_3)}{2(1-\mu)\lambda_i} e_i^\top\Sigma e_i,~~\textrm{as}~\eta\rightarrow 0.
\end{align*}
Thus, for a sufficiently small $\eta$, we have
\begin{align*}
\PP\left(\left\|X^{\eta}(T_3)-x^*\right\|_2^2> \epsilon \right)&\leq\frac{2}{\eta^{-1}\epsilon} \sum_{i=1}^d (U(0)^\top e_i )^2 \exp\left(-\frac{2\lambda_i }{1-\mu}T_3\right) + \frac{1-\exp(-\frac{2\lambda_i}{1-\mu}T_3)}{2(1-\mu)\lambda_i} e_i^\top\Sigma e_i\\
& \leq \frac{2}{\eta^{-1}\epsilon} \Big(\eta^{-1}\delta^2\exp\left[-2\frac{\lambda_dT_3}{1-\mu}\right] + \frac{\phi}{2(1-\mu)\lambda_d}\Big(1-\exp\big(-2\frac{\lambda_1 T_3}{1-\mu}\big)\Big)  \Big) \notag\\
& \leq \frac{2}{\eta^{-1}\epsilon} \Big(\eta^{-1}\delta^2\exp\left[-2\frac{\lambda_dT_3}{1-\mu}\right] + \frac{\phi}{2(1-\mu)\lambda_d} \Big),
\end{align*}
where $\phi = \sum_{i=1}^d e_i^\top\Sigma e_i.$
The above inequality actually implies that the desired probability is asymptotically upper bounded by the term on the right hand. Thus, to guarantee $$\PP\left((\left\|X^{\eta}(T_3)-x^*\right\|_2^2> \epsilon  \right) \leq \frac{1}{4}$$ when $\eta$ is sufficiently small, we need 
 $$\frac{2}{\eta^{-1}\epsilon} \Big(\eta^{-1}\delta^2\exp\left[-2\frac{\lambda_dT_3}{1-\mu}\right] + \frac{\phi}{2(1-\mu)\lambda_d} \Big) \leq \frac{1}{4}.$$
The above inequality has a solution only when:
\begin{equation*}
(1-\mu)\lambda_d\epsilon-4\eta\phi>0.
\end{equation*}
Moreover, when the above inequality holds, we have:
\begin{align*}
T_3 = \frac{1-\mu}{2\lambda_d}\log\left(\frac{8  (1-\mu)\lambda_d\delta^2}{(1-\mu)\lambda_d\epsilon-4\eta\phi}\right).
\end{align*} 
We finish the proof. 
 
\end{proof}

\subsection{Proof of Theorem~\ref{Time_Saddle}}\label{Time_Saddle_proof}
\begin{proof}
Recall that Theorem \ref{SDE_Saddle} holds when  $u_{k}^{\eta}=(x_k^\eta-\hat{x})/\sqrt{\eta}$ is bounded. Thus, if $\norm{X^\eta(T_1)}_2^2\geq \delta^2$ holds at some time $T_1$, the algorithm has successfully escaped from the saddle point. We approximate $U^{\eta}(t)$ by the limiting process approximation, which is Gaussian distributed at time $t$. As $\eta \rightarrow 0$, by simple manipulation, we have
	\begin{align*}
		\PP\left(\norm{X^\eta(T_1)}_2^2\geq \delta^2\right)  = \PP\left(\norm{U^{\eta}(T_1)}_2^2 \geq \eta^{-1}\delta^2\right).
	\end{align*} 
	
	We then prove $\PP\left(\norm{U^{\eta}(T_1)}_2^2 \geq \eta^{-1}\delta^2\right)\geq 1-\nu$. At time t, $U^{\eta}(t)$ converges to a Gaussian distribution with mean $0$ and covariance  matrix $$\int_0^{T_1} \exp\left(-\frac{1}{1-\mu}\nabla^2\cF(\hat x)s\right) \frac{1}{(1-\mu)^2}\Sigma \exp\left(-\frac{1}{1-\mu}\nabla^2\cF(\hat x)s\right) ds. $$ Let $\nabla^2\cF(\hat x) = P\Lambda P^\top$ where $\Lambda = \diag(\lambda_1,..., \lambda_d)$ and $\lambda_1\geq \lambda_2\geq...\geq\lambda_d$ and $\lambda_d<0.$ Since $P$ is orthogonal, we have  $\norm{P^\top U^{\eta}(T_1)}_2= \norm{U^{\eta}(T_1)}_2, $ and $P^\top U^{\eta}$ converges to a Gaussian distribution with mean $0$ and covariance  matrix 
	\begin{align*}
		&\int_0^{T_1} P^\top\exp\left(-\frac{1}{1-\mu}\nabla^2\cF(\hat x)s\right) \frac{1}{(1-\mu)^2}\Sigma \exp\left(-\frac{1}{1-\mu}\nabla^2\cF(\hat x)s\right) Pds\\
		= &\int_0^{T_1} P^\top P\exp\left(-\frac{1}{1-\mu}\Lambda s\right) \frac{1}{(1-\mu)^2}P^\top \Sigma P \exp\left(-\frac{1}{1-\mu}\Lambda s\right) P^\top Pds\\
		= &\int_0^{T_1} \exp\left(-\frac{1}{1-\mu}\Lambda s\right) \frac{1}{(1-\mu)^2}P^\top \Sigma P \exp\left(-\frac{1}{1-\mu}\Lambda s\right) ds.
	\end{align*}
	Moreover , $(P^\top U^{\eta})^{(d)}$ converge to normal distribution with mean $0$ and variance  
	$$\int_0^{T_1} \exp\left(-\frac{2\lambda_d}{1-\mu} s\right) \frac{1}{(1-\mu)^2} (P^\top \Sigma P)_{d,d} ds = \frac{(P^\top \Sigma P)_{d,d}}{2\lambda_d(1-\mu)}\left(1-\exp\left(-\frac{2\lambda_d}{1-\mu} s\right)\right).$$
	Therefore, let $\Phi(x)$ be the CDF of $N(0,1)$, we have
	\begin{align*}
		\PP\left(\frac{\big |(P^\top U^{\eta}(T_1))^{(d)}\big |}{\sqrt{ \frac{(P^\top \Sigma P)_{d,d}}{2\lambda_d(1-\mu)}\left(1-\exp\left(-\frac{2\lambda_d}{1-\mu} s\right)\right)}}\geq \Phi^{-1}\left(\frac{1+\nu/2}{2}\right)\right) \rightarrow 1-\nu/2,~~\textrm{as}~\eta\rightarrow 0.
	\end{align*}
	When the following inequality holds,
	$$\eta^{-\frac{1}{2}}\delta\leq \Phi^{-1}\left(\frac{1+\nu/2}{2}\right)\cdot\sqrt{ \frac{(P^\top \Sigma P)_{d,d}}{2\lambda_d(1-\mu)}\left(1-\exp\left(-\frac{2\lambda_d}{1-\mu} s\right)\right)},$$ we get
	\begin{align*}
		T_1 = \frac{(1-\mu)}{2|\lambda_d|}\log\left(\frac{2\eta^{-1}\delta^2(1-\mu)|\lambda_d|}{\Phi^{-1}\left(\frac{1+\nu/2}{2}\right)^2(P^\top \Sigma P)_{d,d}} +1\right).
	\end{align*}
Thus, for a sufficiently small $\epsilon$, we have \begin{align*}
\PP\left(\norm{U^{\eta}(T_1)}_2^2 \geq \eta^{-1}\delta^2\right) &= \PP\left(\norm{P^\top U^{\eta}(T_1)}_2^2 \geq \eta^{-1}\delta^2\right)\\
&\geq  \PP\left(\left|\left(P^\top U^{\eta}(T_1)\right)^{(d)}\right| \geq \eta^{-1/2}\delta\right)\\
&\geq  1-\nu.
 \end{align*}
 Take $\nu=\frac{1}{4},$ and we prove the theorem.

\end{proof}

\section{Detailed Proof in Section \ref{section_pca}}\label{Appendix_PCA}
\subsection{Derivation of Momentum Stochastic Generalized Hebbian Algorithm}
SGHA is essentially a primal-dual algorithm. Specifically, we consider the Lagrangian function of \eqref{PCA}:
$$L(v,\lambda)=v^{\top}\EE_{X\sim\cD}[XX^{\top}]v - \lambda(v^\top v -1),$$ where $\lambda$ is the Lagrangian multiplier. We then check the optimal KKT conditions:
$$\EE_{X\sim\cD}[XX^{\top}]v-\lambda v=0~\text{and}~ v^\top v=1,$$
which implies $\lambda = v^{\top}\EE_{X\sim\cD}[XX^{\top}]v.$  At the k-th iteration, SGHA  takes the following primal-dual update:
\begin{itemize}
	\item Dual Update: $\lambda_k=v_k^{\top}\Sigma_kv_k,$
	\item Primal Update: $v_{k+1}=v_k+\eta(\Sigma_k v_k- \lambda_k v_k),$
\end{itemize}
where   $\Sigma_k=X_kX_k^{\top}$ and $\mu(v_k-v_{k-1})$ is the momentum with a parameter $\mu \in [0,1)$.  Combine the primal and dual updates together, we obtain a dual free update:
\begin{align*}
v_{k+1}=v_k+\eta(\Sigma_k v_k- v_k^{\top}\Sigma_kv_k v_k) =v_k+\eta(I-v_kv_k^\top)\Sigma_kv_k.
\end{align*}
Adding the additional momentum term  $\mu(v_k-v_{k-1}),$ we get  update \eqref{alg0}. 

\subsection{Proof of Lemma \ref{lem_bound} }\label{proof_lem_bound}
\begin{proof}
 First, if we assume $\{v_k\}$ is uniformly bounded by 2, by formulation \eqref{alg0}, we then have
	\begin{align*}
   & v_{k+1}-v_k=\mu (v_k-v_{k-1})+\eta\{\Sigma_k v_{k}-v_{k}^{\top}\Sigma_k v_{k}v_{k}\},\\
	\Longrightarrow &v_{k+1}-v_k=\sum_{i=0}^k\mu^{k-i}\eta\{\Sigma_i v_{i}-v_{i}^{\top}\Sigma_i v_{i}v_{i}\},\\
	\Longrightarrow &\|v_{k+1}-v_k\|_2\leq  C_{\delta}\frac{\eta}{1-\mu},
	\end{align*}
	where $ C_\delta=\sup_{\|v\|\leq2,\|X\|\leq C_d}\|XX^T v-v^{T}XX^T vv\|\leq 2 C_d$.
Next, we show the boundedness assumption on $v$ can be taken off. In fact, with an initialization on $\mathbb{S}$ (the sphere of the unit ball),  the algorithm is  bounded in a much smaller ball of radius $1+O(\eta).$

Recall $\delta_{k+1}=v_{k+1}-v_k$.  Let's consider the difference between the norm  of two iterates,
\begin{align*}
&\Delta_{k}=\|v_{k+1}\|^2-\|v_k\|^2=\|\delta_{k+1}\|^2+2v_k^{\top} \delta_{k+1}\\
&\Delta_{k+1}-\Delta_k=\|\delta_{k+2}\|^2+2v_{k+1}^{\top} \delta_{k+2}-\|\delta_{k+1}\|^2-2v_k^{\top} \delta_{k+1}\\
&=\|\delta_{k+2}\|^2-\|\delta_{k+1}\|^2+2\mu v_{k+1}^{\top} \delta_{k+1}+2\eta v_{k+1}^{\top}\Sigma_{k+1}v_{k+1}(1-v_{k+1}^{\top}v_{k+1})-2v_k^{\top} \delta_{k+1}\\
&=\|\delta_{k+2}\|^2-\|\delta_{k+1}\|^2+2\mu v_{k}^{\top} \delta_{k+1}+2\mu\|\delta_{k+1}\|^2+2\eta v_{k+1}^{\top}\Sigma_{k+1}v_{k+1}(1-v_{k+1}^{\top}v_{k+1})-2v_k^{\top} \delta_{k+1}\\
&=|\delta_{k+2}\|^2+\mu\|\delta_{k+1}\|^2-(1-\mu)(\|\delta_{k+1}\|^2+2v_k^{\top} \delta_{k+1})+2\eta v_{k+1}^{\top}\Sigma_{k+1}v_{k+1}(1-v_{k+1}^{\top}v_{k+1})\\
&=\|\delta_{k+2}\|^2+\mu\|\delta_{k+1}\|^2-(1-\mu)\Delta_{k}+2\eta v_{k+1}^{\top}\Sigma_{k+1}v_{k+1}(1-v_{k+1}^{\top}v_{k+1})\\
&\leq \|\delta_{k+2}\|^2+\mu\|\delta_{k+1}\|^2-(1-\mu)\Delta_{k}.
\end{align*}
The last inequality holds when  $\|v_{k+1}\|\geq1.$ Let $\kappa=\inf\{i:\|v_{i+1}\|>1\},$ then
\begin{align*}
&\Delta_{\kappa+1}\leq(1+\mu)\left(\frac{C_{\delta}}{1-\mu}\right)^2\eta^2+\mu\Delta_\kappa.\\
\end{align*}
Moreover, if $1<\|v_{\kappa+i}\|\leq2$ holds for  $i=1,...,n<\frac{t}{\eta},$ we have
\begin{align*}
\Delta_{\kappa+i} &\leq(1+\mu)\left(\frac{C_{\delta}}{1-\mu}\right)^2\eta^2+\mu\Delta_{\kappa+i-1} \\
&\leq \frac{1+\mu}{1-\mu}\left(\frac{C_{\delta}}{1-\mu}\right)^2\eta^2+\mu^{i}\Delta_{\kappa}.
\end{align*}
Thus,  
\begin{align*}
\|v_{\kappa+n+1}\|^2&= \|v_\kappa\|^2+\sum_{i=0}^{n}\Delta_{\kappa+i}\\
&\leq 1+\frac{1}{1-\mu}\Delta_k+\frac{t}{\eta}\frac{1+\mu}{1-\mu}\left(\frac{C_{\delta}}{1-\mu}\right)^2\eta^2\\
&\leq 1+O\left(\frac{\eta}{(1-\mu)^3}\right).
\end{align*}
In other words, when $\eta$ is very small, we cannot go far from $\mathbb{S}$ and the assumption that $\|v\|\leq 2$ can be removed. 
\end{proof}

\subsection{Proof of Corollary \ref{ODE_solution}}\label{proof_ode_solution}

To apply Theorem \ref{Thm1} to prove the ODE approximation for algorithm \eqref{alg0}, we only need to check whether Assumptions \ref{assumption_general} and \ref{ass_isolated} hold. From our landscape analysis in Section \ref{section_pca}, we know that Assumption  \ref{ass_isolated} holds naturally for streaming PCA. We only need to verify the uniform boundedness and Lipschitz continuity. 

The next lemma shows that the algorithm trajectory of \eqref{alg0} is bounded and thus the boundedness and Lipschitz continuity in Assumption \ref{assumption_general} holds for \eqref{alg0}.
\begin{lemma}\label{lem_bound}
Under Assumption \eqref{Ass1}, given  $v_0\in\mathbb{S}$, for any $k\leq O(1/\eta)$, we have $$\|v_k\|^2 \leq 1+O((1-\mu)^{-3}\eta)~~~~\text{and}~~~~\|v_{k+1}-v_{k}\|\leq \frac{2C_d\eta}{1-\mu}.$$
\end{lemma}
\begin{proof}
First, if we assume $\{v_k\}$ is uniformly bounded by 2, by formulation \eqref{alg0}, we then have
	\begin{align*}
   & v_{k+1}-v_k=\mu (v_k-v_{k-1})+\eta\{\Sigma_k v_{k}-v_{k}^{\top}\Sigma_k v_{k}v_{k}\},\\
	\Longrightarrow &v_{k+1}-v_k=\sum_{i=0}^k\mu^{k-i}\eta\{\Sigma_i v_{i}-v_{i}^{\top}\Sigma_i v_{i}v_{i}\},\\
	\Longrightarrow &\|v_{k+1}-v_k\|_2\leq  C_{\delta}\frac{\eta}{1-\mu},
	\end{align*}
	where $ C_\delta=\sup_{\|v\|_2\leq2,\|X\|_2\leq C_d}\|XX^T v-v^{T}XX^T vv\|_2\leq 2 C_d$.
Next, we show the boundedness assumption on $v$ can be taken off. In fact, with an initialization on $\mathbb{S}$ (the sphere of the unit ball),  the algorithm is  bounded in a much smaller ball of radius $1+O(\eta).$

Recall $\delta_{k+1}=v_{k+1}-v_k$.  Let's consider the difference between the norm  of two iterates,
\begin{align*}
&\Delta_{k}=\|v_{k+1}\|_2^2-\|v_k\|_2^2=\|\delta_{k+1}\|_2^2+2v_k^{\top} \delta_{k+1}\\
&\Delta_{k+1}-\Delta_k=\|\delta_{k+2}\|_2^2+2v_{k+1}^{\top} \delta_{k+2}-\|\delta_{k+1}\|_2^2-2v_k^{\top} \delta_{k+1}\\
&=\|\delta_{k+2}\|_2^2-\|\delta_{k+1}\|_2^2+2\mu v_{k+1}^{\top} \delta_{k+1}+2\eta v_{k+1}^{\top}\Sigma_{k+1}v_{k+1}(1-v_{k+1}^{\top}v_{k+1})-2v_k^{\top} \delta_{k+1}\\
&=\|\delta_{k+2}\|_2^2-\|\delta_{k+1}\|_2^2+2\mu v_{k}^{\top} \delta_{k+1}+2\mu\|\delta_{k+1}\|_2^2+2\eta v_{k+1}^{\top}\Sigma_{k+1}v_{k+1}(1-v_{k+1}^{\top}v_{k+1})-2v_k^{\top} \delta_{k+1}\\
&=|\delta_{k+2}\|_2^2+\mu\|\delta_{k+1}\|_2^2-(1-\mu)(\|\delta_{k+1}\|_2^2+2v_k^{\top} \delta_{k+1})+2\eta v_{k+1}^{\top}\Sigma_{k+1}v_{k+1}(1-v_{k+1}^{\top}v_{k+1})\\
&=\|\delta_{k+2}\|_2^2+\mu\|\delta_{k+1}\|_2^2-(1-\mu)\Delta_{k}+2\eta v_{k+1}^{\top}\Sigma_{k+1}v_{k+1}(1-v_{k+1}^{\top}v_{k+1})\\
&\leq \|\delta_{k+2}\|_2^2+\mu\|\delta_{k+1}\|_2^2-(1-\mu)\Delta_{k}.
\end{align*}
The last inequality holds when  $\|v_{k+1}\|_2\geq1.$ Let $\kappa=\inf\{i:\|v_{i+1}\|_2>1\},$ then
\begin{align*}
&\Delta_{\kappa+1}\leq(1+\mu)\left(\frac{C_{\delta}}{1-\mu}\right)^2\eta^2+\mu\Delta_\kappa.
\end{align*}
Moreover, if $1<\|v_{\kappa+i}\|_2\leq2$ holds for  $i=1,...,n<\frac{t}{\eta},$ we have
\begin{align*}
\Delta_{\kappa+i} &\leq(1+\mu)\left(\frac{C_{\delta}}{1-\mu}\right)^2\eta^2+\mu\Delta_{\kappa+i-1} \\
&\leq \frac{1+\mu}{1-\mu}\left(\frac{C_{\delta}}{1-\mu}\right)^2\eta^2+\mu^{i}\Delta_{\kappa}.
\end{align*}
Thus,  
\begin{align*}
\|v_{\kappa+n+1}\|_2^2&= \|v_\kappa\|_2^2+\sum_{i=0}^{n}\Delta_{\kappa+i}\\
&\leq 1+\frac{1}{1-\mu}\Delta_k+\frac{t}{\eta}\frac{1+\mu}{1-\mu}\left(\frac{C_{\delta}}{1-\mu}\right)^2\eta^2\\
&\leq 1+O\left(\frac{\eta}{(1-\mu)^3}\right).
\end{align*}
In other words, when $\eta$ is very small, we cannot go far from $\mathbb{S}$ and the assumption that $\|v\|_2\leq 2$ can be removed	
\end{proof}

Therefore all the assumptions for Theorem \ref{Thm1} holds and we know that  $V^{\eta}(\cdot)\Rightarrow V(\cdot)$ in the weak sense as $\eta\rightarrow0$ in the space $D^d[0,\infty)$, where $V(\cdot)$ is the unique solution to the following ODE:
\begin{equation*}
\dot{V} = \frac{1}{1-\mu}(\Sigma V-V^{\top}\Sigma VV), ~~V(0)=v_0.
\end{equation*}

To solve ODE \eqref{ODE_PCA}, we rotate the coordinate to decouple each dimension.  Under Assumption \ref{Ass0},  there exists an orthogonal matrix Q such that:
$\Sigma=Q\Lambda Q^{\top},$
where $\Lambda={\rm diag}(\lambda_1, \lambda_2,...,\lambda_d).$ Let $H(t)=Q^\top V(t),$ or equivalently $V(t)= QH(t).$ Substitute $V(t)$ with $QH(t)$ in ODE \eqref{ODE0}, then  we can  obtain the following ODE.
\begin{align}\label{ODE}
\dot{H}=\frac{1}{1-\mu}[\Lambda H-H^{\top}\Lambda HH].
\end{align}
ODE \eqref{ODE} is different from (4.6) in \cite{chen2017online} by a constant $\frac{1}{1-\mu},$ and has an explicit form solution.
Then we have  the initial value problem  \eqref{ODE_PCA} has a solution $V(t)= QH(t),$ where 
\begin{equation}\label{H_sol}
H^{(i)}(t)=\Big(\sum_{i=1}^d [H^{(i)}(0)\exp\Big(\frac{\lambda_it}{1-\mu}\Big)]^2\Big)^{-\frac{1}{2}}H^{(i)}(0)\exp\left(\frac{\lambda_it}{1-\mu}\right),\quad  i=1,...,d.
\end{equation}
where $H(0)=Q^\top v_0, v_0\in\SSS.$ Moreover, suppose $v_0\neq \pm v^{i},~ \forall i=2,...,d,$  as $t\rightarrow\infty,$ one can easily verify that $V(t)$ converges to $v^1,$ which is the global maximum to \eqref{eqn:PCA}.

Last, we show the uniqueness of the above solution. Define $f(t,v)=\frac{1}{1-\mu}[\Sigma v -v^\top\Sigma v v]$ and a domain $\cR=\big\{(t,v)\big| t\geq 0, \norm{v}_2\leq 1\big\}.$ Since $f(t,v)$ is continuously differentiable with respect to $(t,v),$ $f(t,v)$ satisfies Lipschitz continuous condition in $\cR$ with respect to $v$ and uniformly in $t.$ By Theorem  1.2.1 in \cite{Hu2004}, we know the solution is unique. 

\subsection{Proof of Corollary \ref{corollary_rate}}\label{proof_rate}
\begin{proof}[Proof of Corollary \ref{corollary_rate}.]
Phase I and III are a directly application of Theorems \ref{lemma_phase3} \ref{Time_Saddle}. Here we only consider Phase II.

	After Phase I, we restart our record time, i.e., $H^{\eta,1}(0)\geq\delta$  and we obtain
	\begin{align*}
		\PP(\left\|V^{\eta}(T_2) - v^1\right\|_2^2\leq \delta^2) & \rightarrow \PP(\left\|V(T_2) - v^1\right\|_2^2\leq \delta^2) =  \PP(\left\|H(T_2) - e^1\right\|_2^2\leq \delta^2),
	\end{align*}
where $H$ is defined in \eqref{H_sol}. Since $H$ is deterministic and
		\begin{align}\label{ineq1}
 \left(H^{(1)}(T_2)\right)^2
	&=\left( \sum\limits_{j=1}^{d}\left(\left(H^{(j)}(0)\right)^2\exp{\left(2\frac{\lambda_j}{1-\mu} T_2\right)}\right)\right)^{-1}\left(H^{(1)}(0)\right)^2\exp{\left(2\frac{\lambda_1}{1-\mu} T_2\right)}\notag \\
	& \geq  \left( \delta^2 \exp\left(2\frac{\lambda_1}{1-\mu} T_2\right)+(1-\delta^2)\exp\left(2\frac{\lambda_2}{1-\mu} T_2\right) \right)^{-1}\delta^2 \exp\left(2\frac{\lambda_2}{1-\mu} T_2\right),  
	\end{align}
	 Thus, when the term \eqref{ineq1} satisfies\begin{align}\label{ineq2}
	\left( \delta^2 \exp\left(2\frac{\lambda_1}{1-\mu} T_2\right)+(1-\delta^2)\exp\left(2\frac{\lambda_2}{1-\mu} T_2\right) \right)^{-1}\delta^2 \exp\left(2\frac{\lambda_1}{1-\mu}T_2\right) \geq 1-\delta^2/2,
	 \end{align} 
	 we have $$\PP(\left(H^{(1)}(T_2)\right)^2 \geq 1-\delta^2/2)=1.$$ Then for sufficiently small $\eta$, we have $$	\PP(\left(H^{\eta,1}(T_2)\right)^2\geq 1-\delta^2/2)\geq \frac{3}{4}. $$ 
 Note that when $\left(H^{(1)}(T_2)\right)^2 \geq 1-\delta^2/2,$ we have
 $$\|H^\eta(T_2)-e^1\|_2^2\leq 2-2\sqrt{1-\delta^2/2}\leq \delta^2.$$
Therefore, 
 $$	\PP(\|V^\eta(T_2)-v^1\|_2^2\leq \delta^2 ) = \PP(\|H^\eta(T_2)-v^1\|_2^2\leq \delta^2 ) \geq \frac{3}{4}. $$

	 Solving the above inequality \eqref{ineq2}, we get
	\begin{align*}
		T_2=\frac{1-\mu}{2(\lambda_1-\lambda_2)}\log\frac{2-\delta^2}{\delta^2}~.
	\end{align*} 
	We finish the proof.
\end{proof}

\section{Deep Neural Networks Experiments}\label{NNsetting}

	\begin{figure}[htb!]
		\centering
	
		\label{cifar10}
	\subfigure[$\eta_{\textrm{V}}=\eta_{\textrm{M}}/(1-\mu)=1.6$]{
		\includegraphics[width=0.31\textwidth]{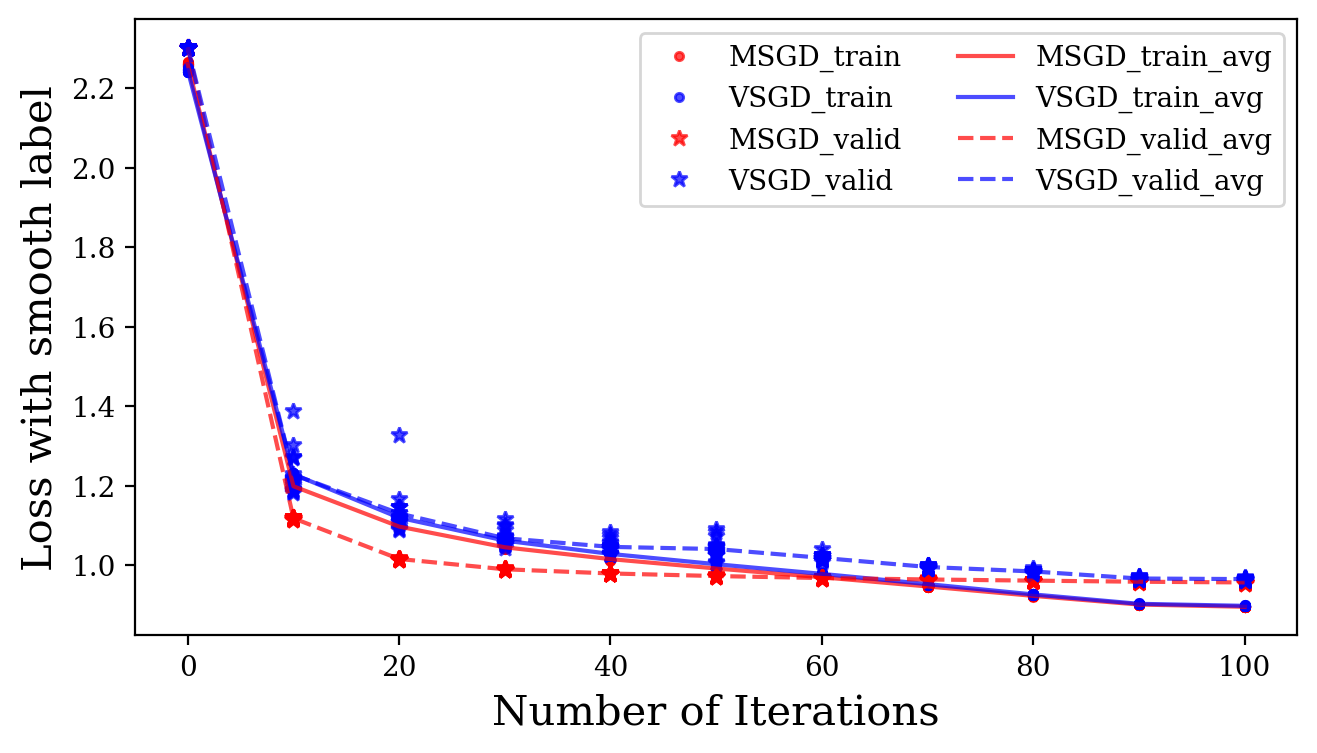}\label{fig:10_0.4}
	}
	\subfigure[$\eta_{\textrm{V}}=\eta_{\textrm{M}}/(1-\mu)=2$]{
		\includegraphics[width=0.31\textwidth]{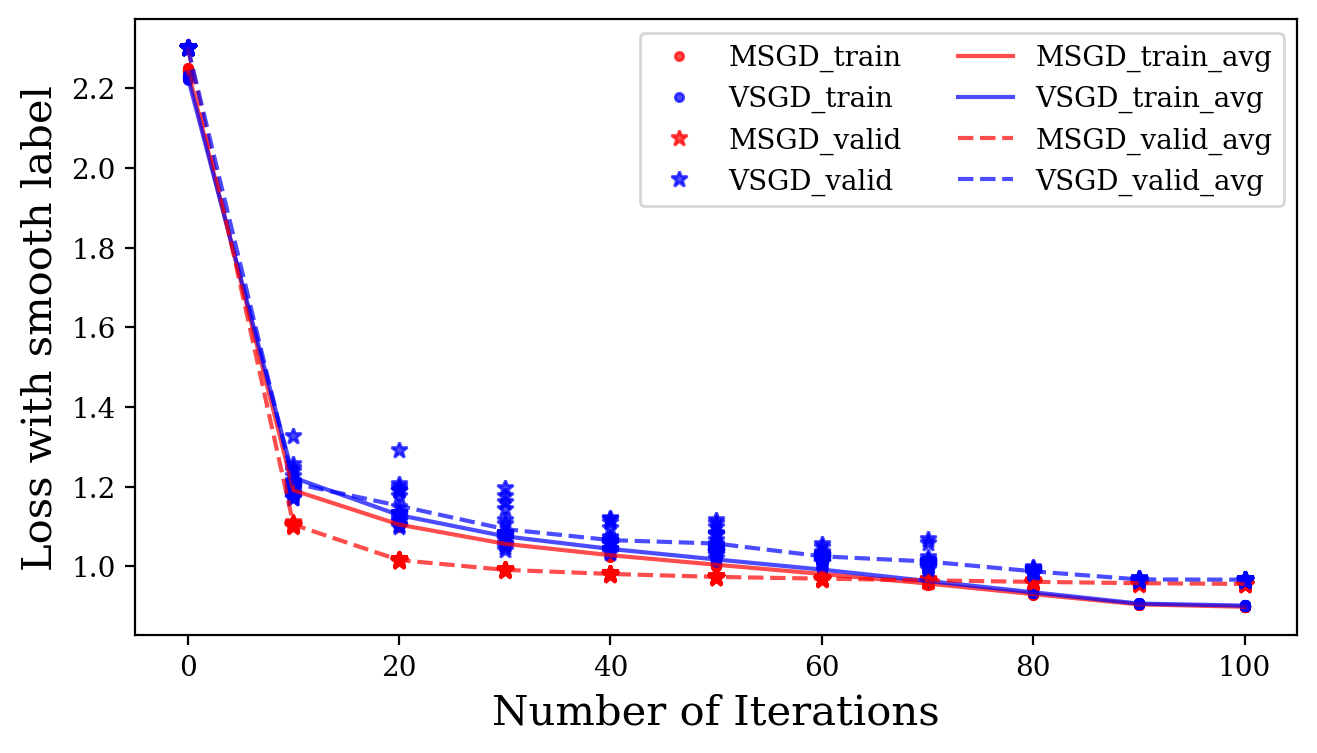}\label{fig:10_0.8}
	}
	\subfigure[$\eta_{\textrm{V}}=\eta_{\textrm{M}}/(1-\mu)=2.4$]{
		\includegraphics[width=0.31\textwidth]{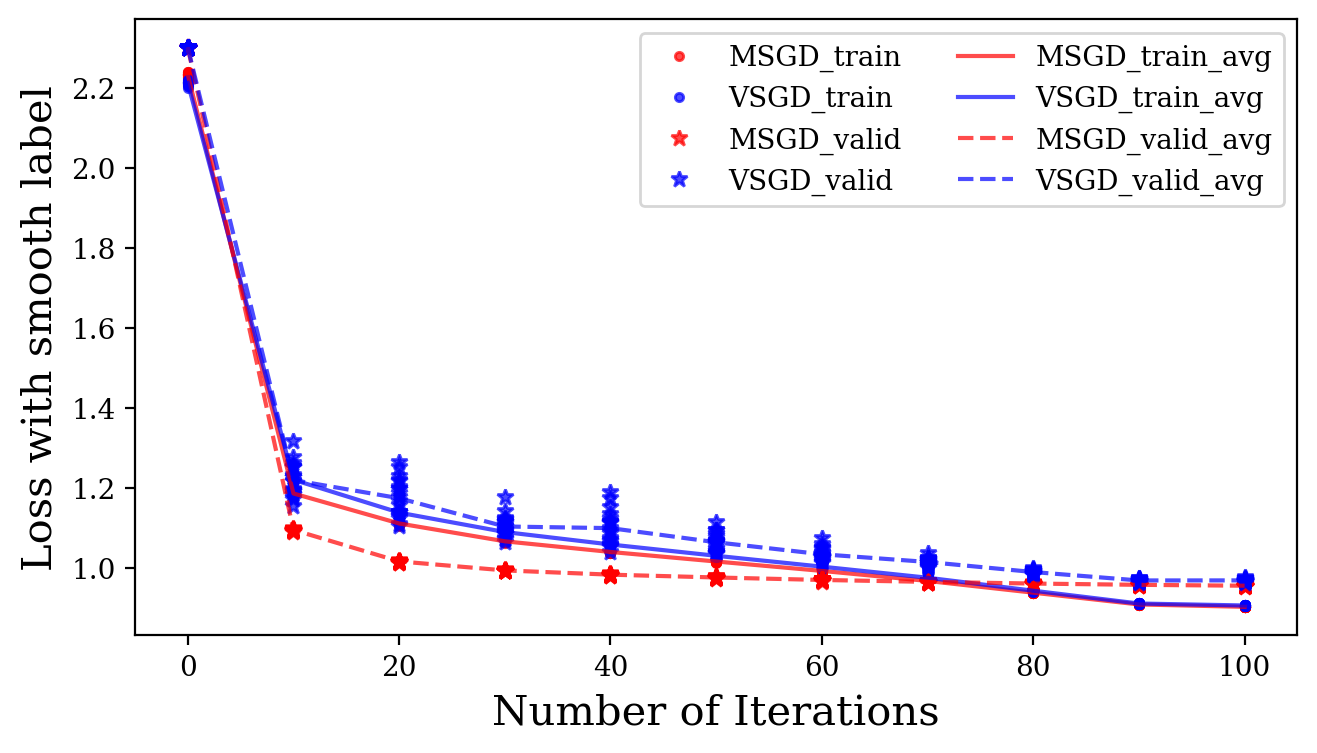}\label{fig:10_1.2}
	}\\
		\subfigure[$\eta_{\textrm{V}}=\eta_{\textrm{M}}/(1-\mu)=2.8$]{
		\includegraphics[width=0.31\textwidth]{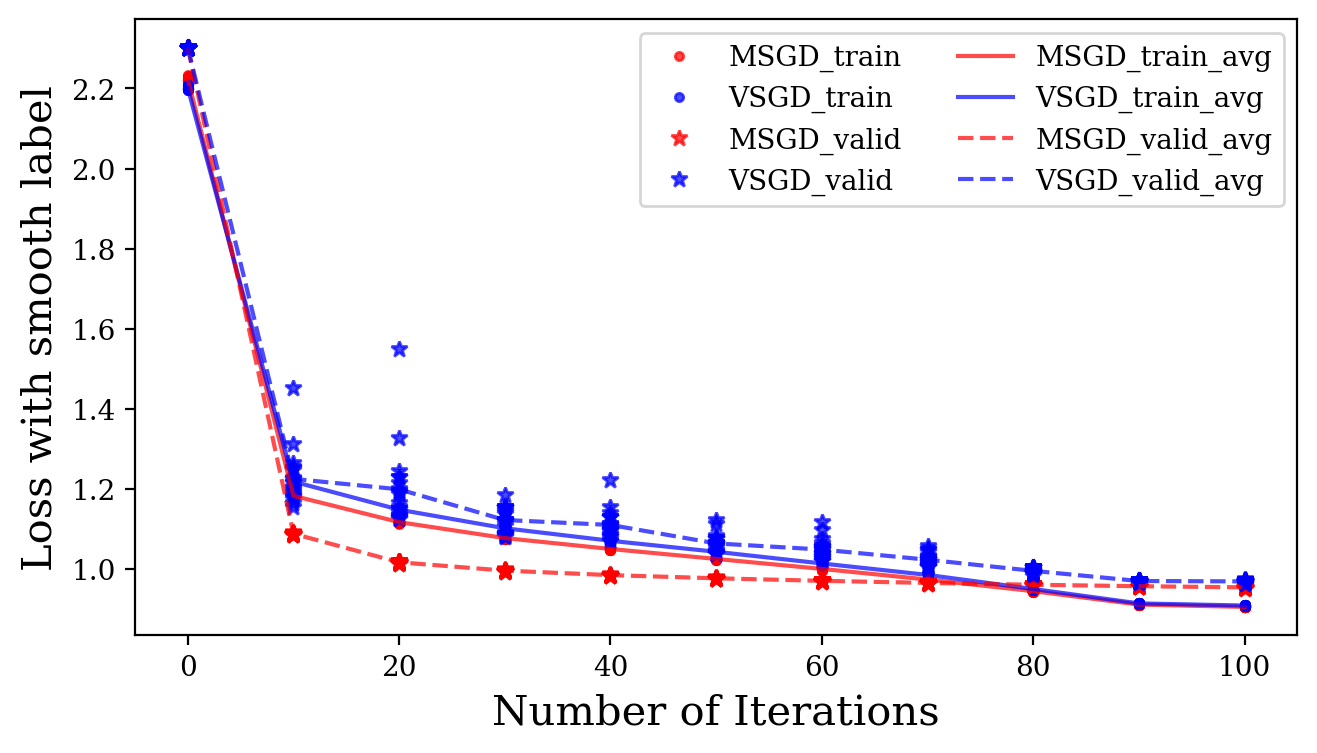}\label{fig:10_1.6}
	}
	\subfigure[$\eta_{\textrm{V}}=\eta_{\textrm{M}}/(1-\mu)=3.2$]{
		\includegraphics[width=0.31\textwidth]{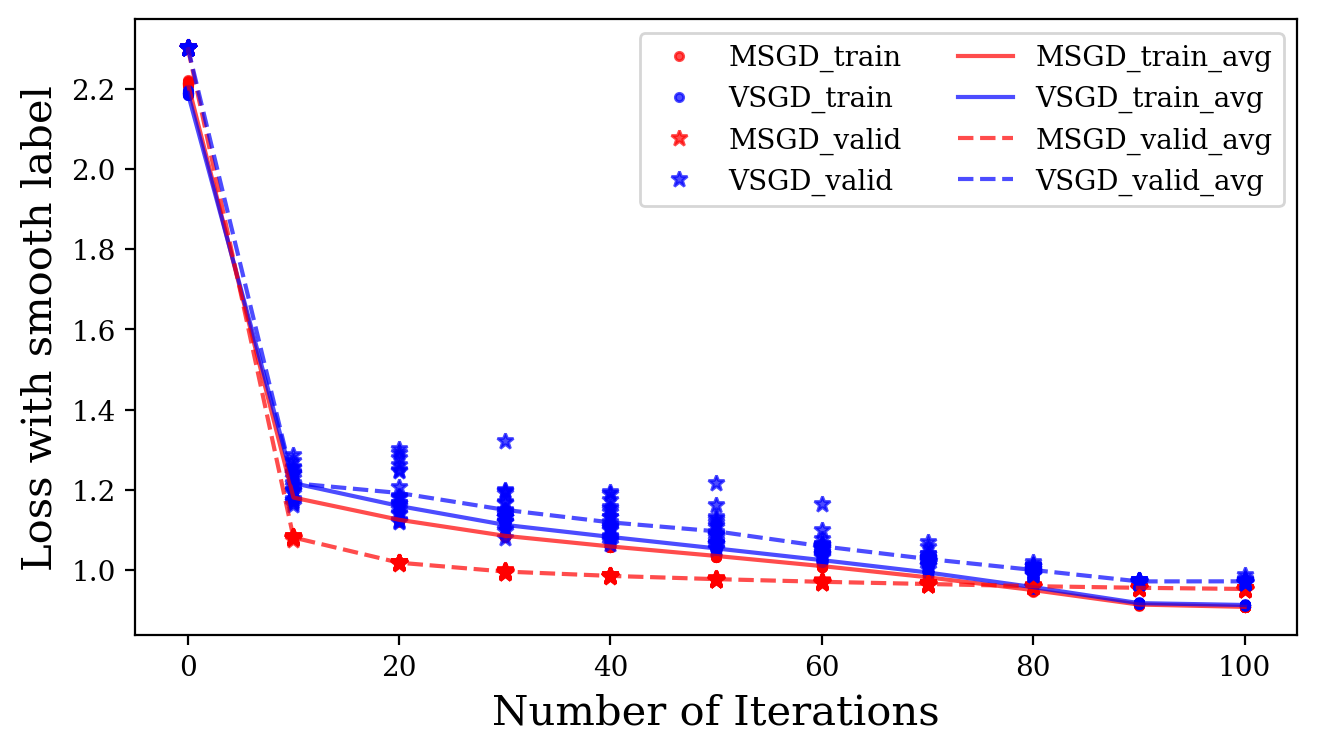}\label{fig:10_2}
	}
	\subfigure[$\eta_{\textrm{V}}=\eta_{\textrm{M}}/(1-\mu)=3.6$]{
		\includegraphics[width=0.31\textwidth]{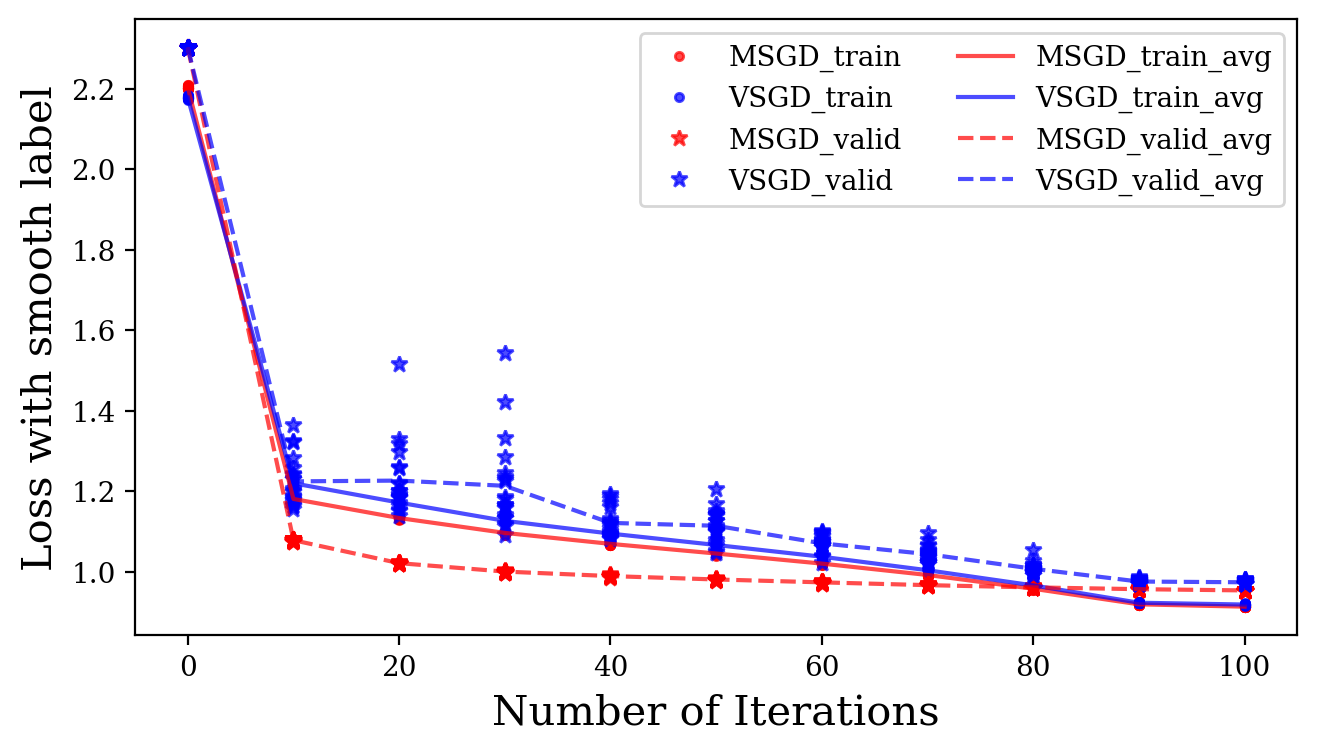}\label{fig:10_2.4}
	}\\
		\subfigure[$\eta_{\textrm{V}}=\eta_{\textrm{M}}/(1-\mu)=4$]{
		\includegraphics[width=0.31\textwidth]{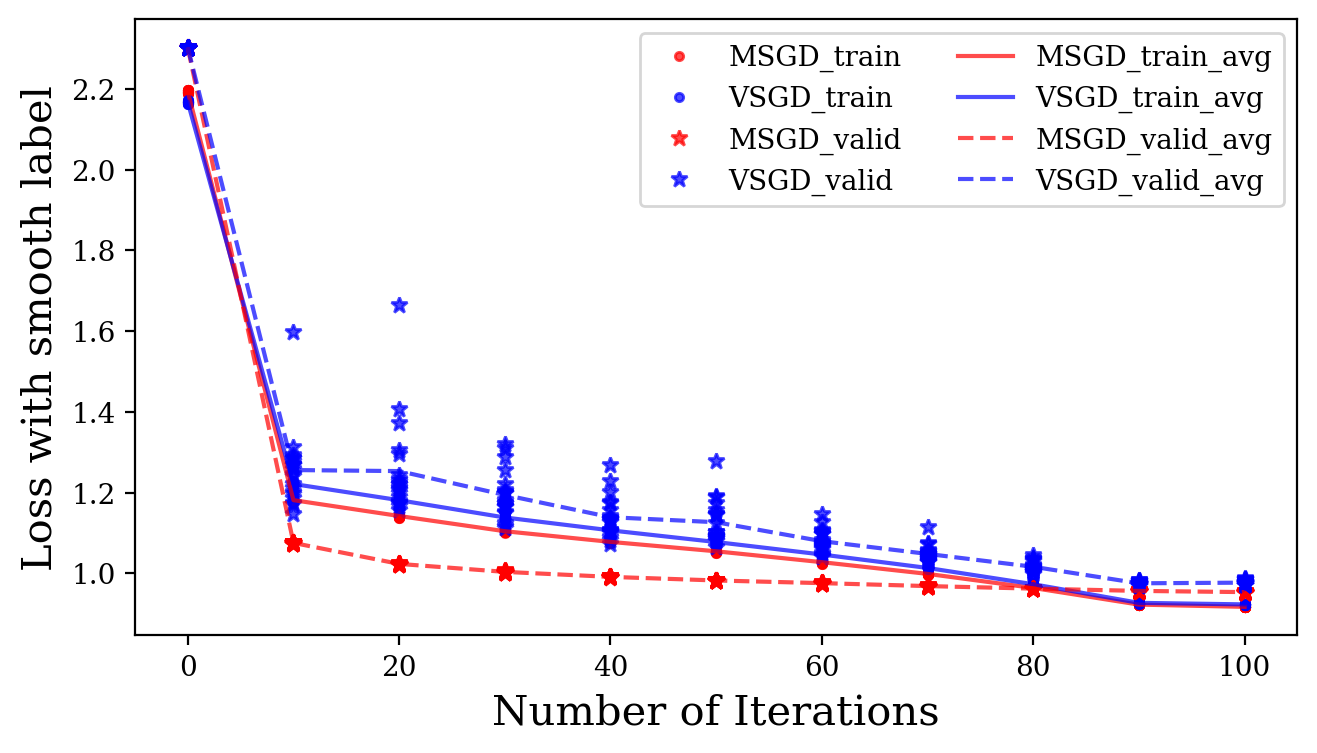}\label{fig:10_2.8}
	}
		\subfigure[$\eta_{\textrm{V}}=\eta_{\textrm{M}}/(1-\mu)=4.4$]{
		\includegraphics[width=0.31\textwidth]{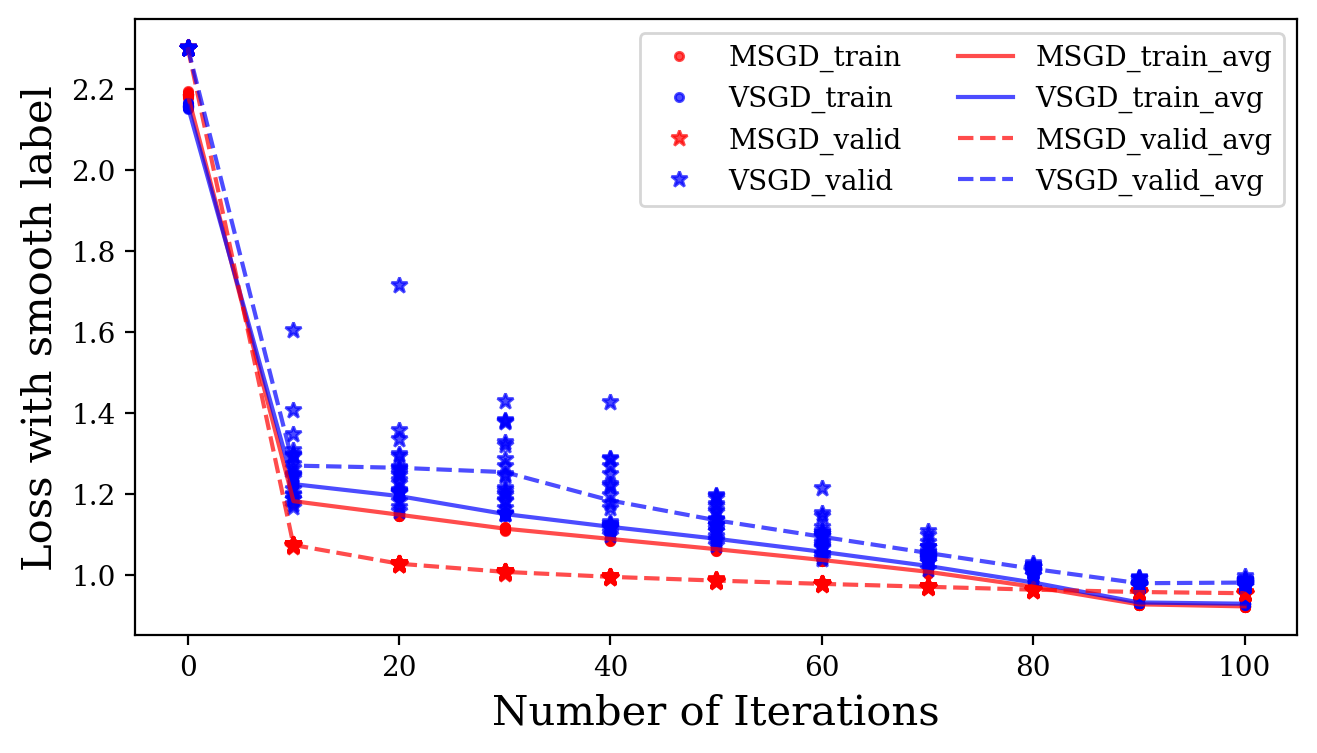}\label{fig:10_3.2}
	}
		\subfigure[$\eta_{\textrm{V}}=\eta_{\textrm{M}}/(1-\mu)=4.8$]{
		\includegraphics[width=0.31\textwidth]{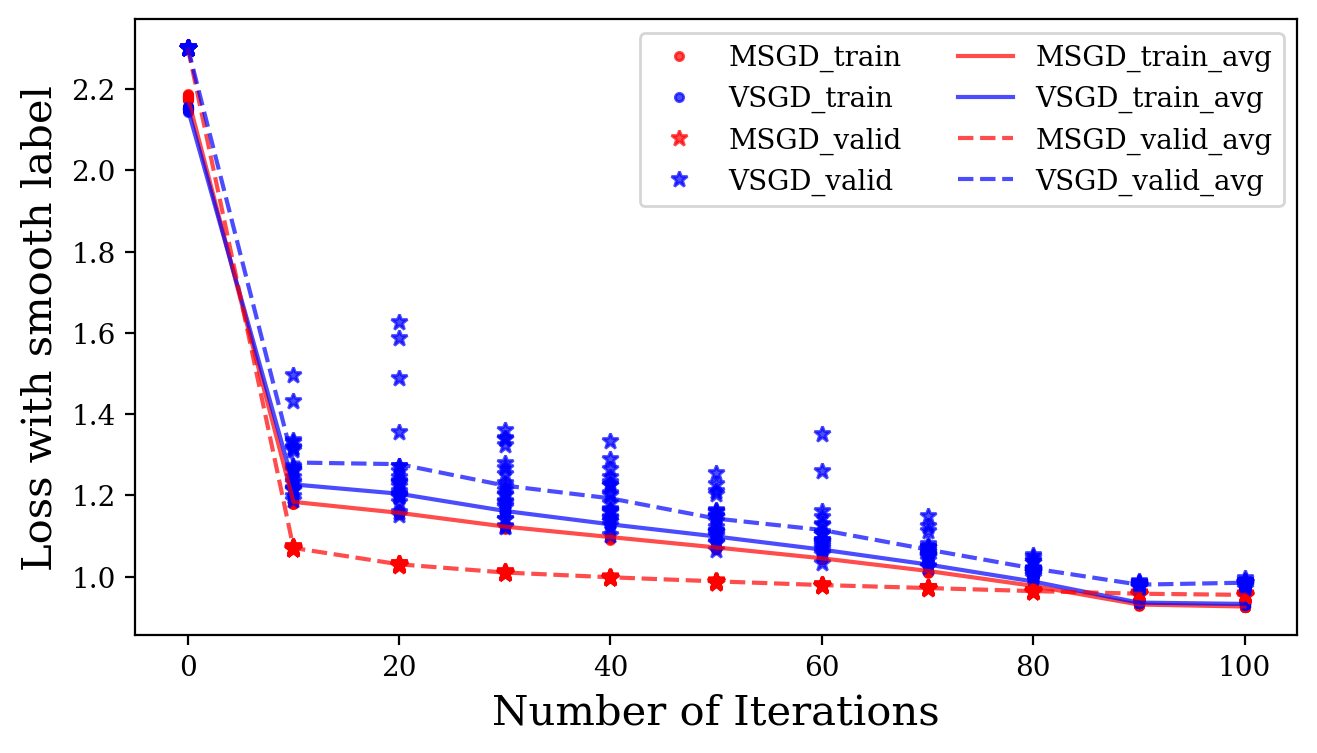}\label{fig:10_3.6}
	}\\
			\subfigure[$\eta_{\textrm{V}}=\eta_{\textrm{M}}/(1-\mu)=5.2$]	{
	\includegraphics[width=0.31\textwidth]{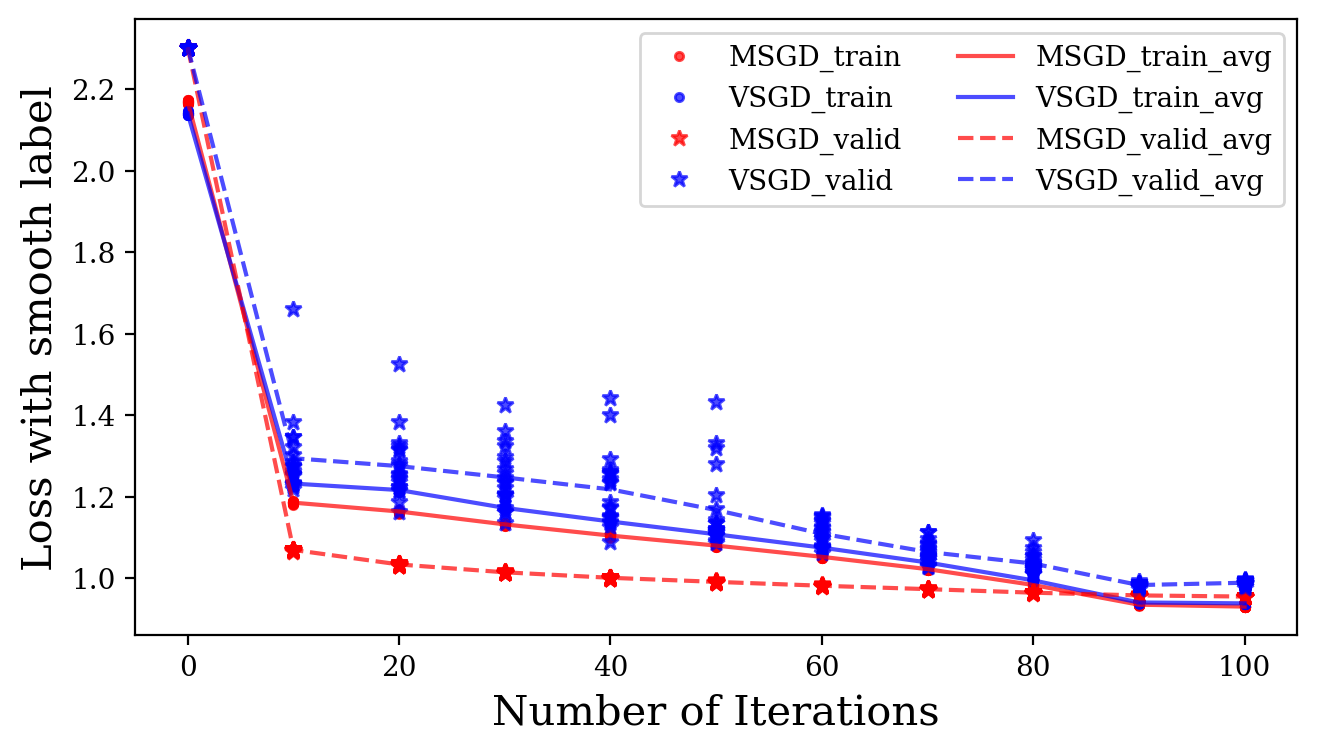}\label{fig:10_3.6}
}
			\subfigure[$\eta_{\textrm{V}}=\eta_{\textrm{M}}/(1-\mu)=5.6$]{
		\includegraphics[width=0.31\textwidth]{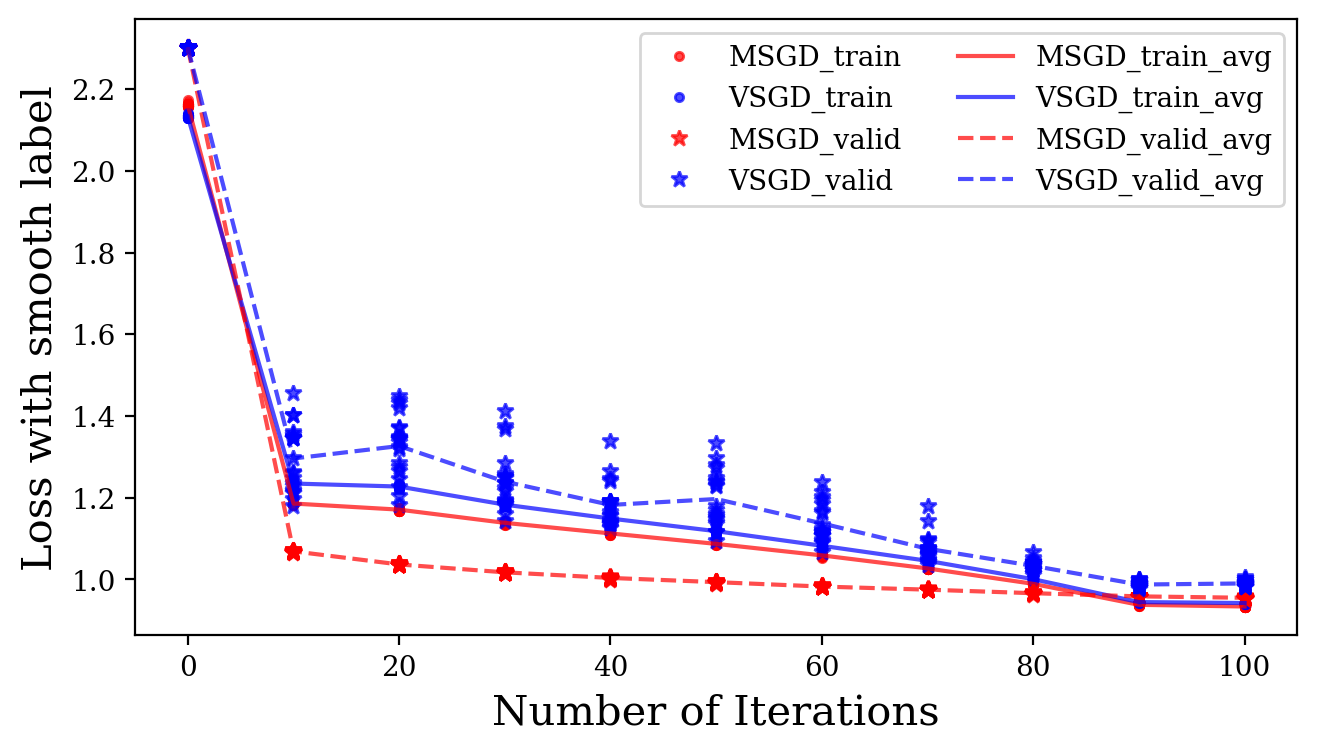}
	}
	\subfigure[$\eta_{\textrm{V}}=\eta_{\textrm{M}}/(1-\mu)=6$]{
		\includegraphics[width=0.31\textwidth]{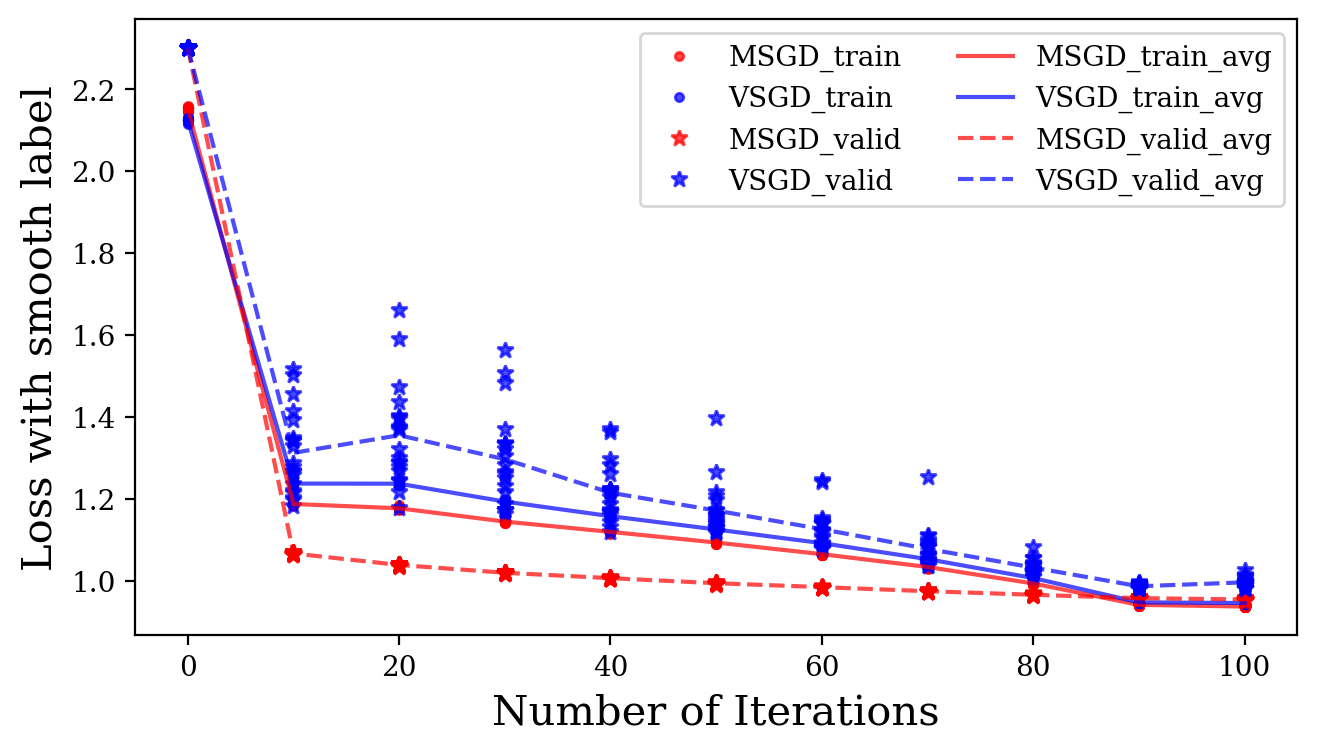}
	}
		\caption{Experimental Results of  ResNet-9 on CIFAR-10. VSGD uses the Equivalent Step Sizes of MSGD.}

				\end{figure}

\begin{figure}[htb!]
		\centering
		
		\label{cifar100}

	\subfigure[$\eta_{\textrm{V}}=\eta_{\textrm{M}}/(1-\mu)=1.6$]{
		\includegraphics[width=0.31\textwidth]{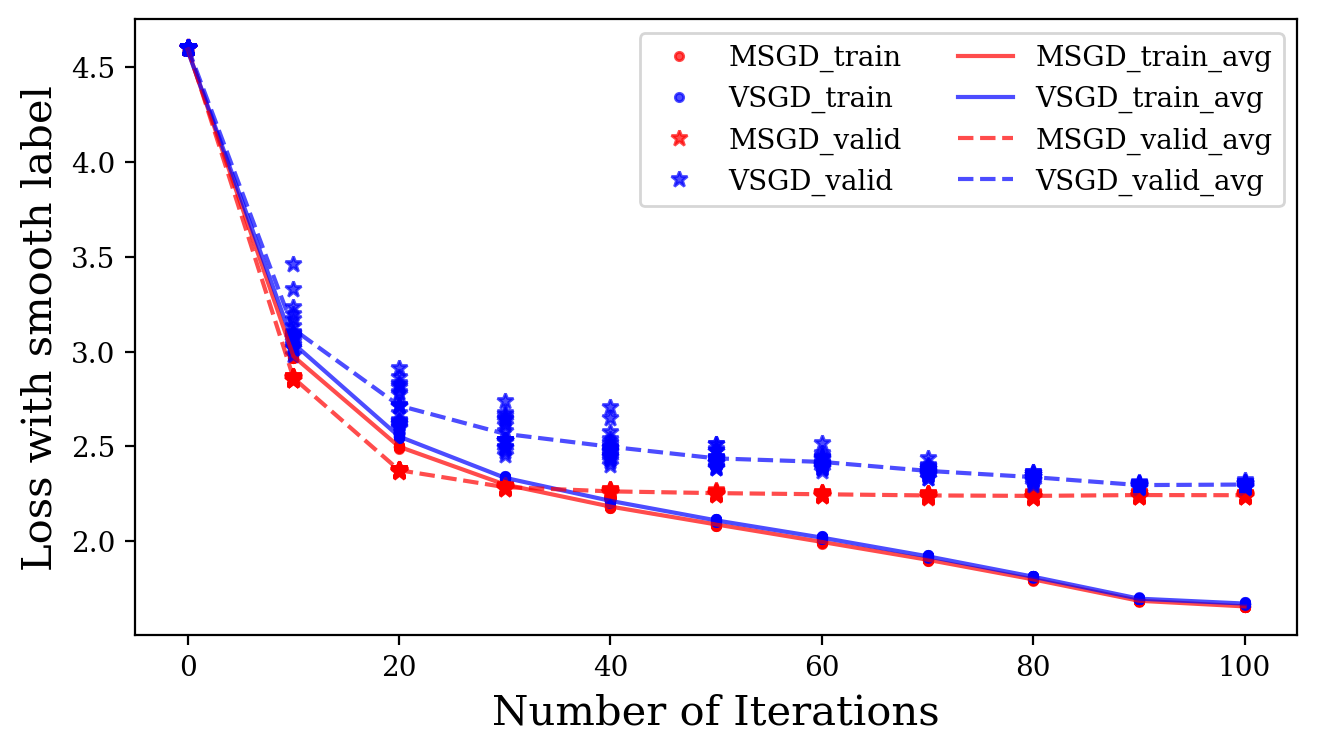}\label{fig:100_0.4}
	}
	\subfigure[$\eta_{\textrm{V}}=\eta_{\textrm{M}}/(1-\mu)=2$]{
		\includegraphics[width=0.31\textwidth]{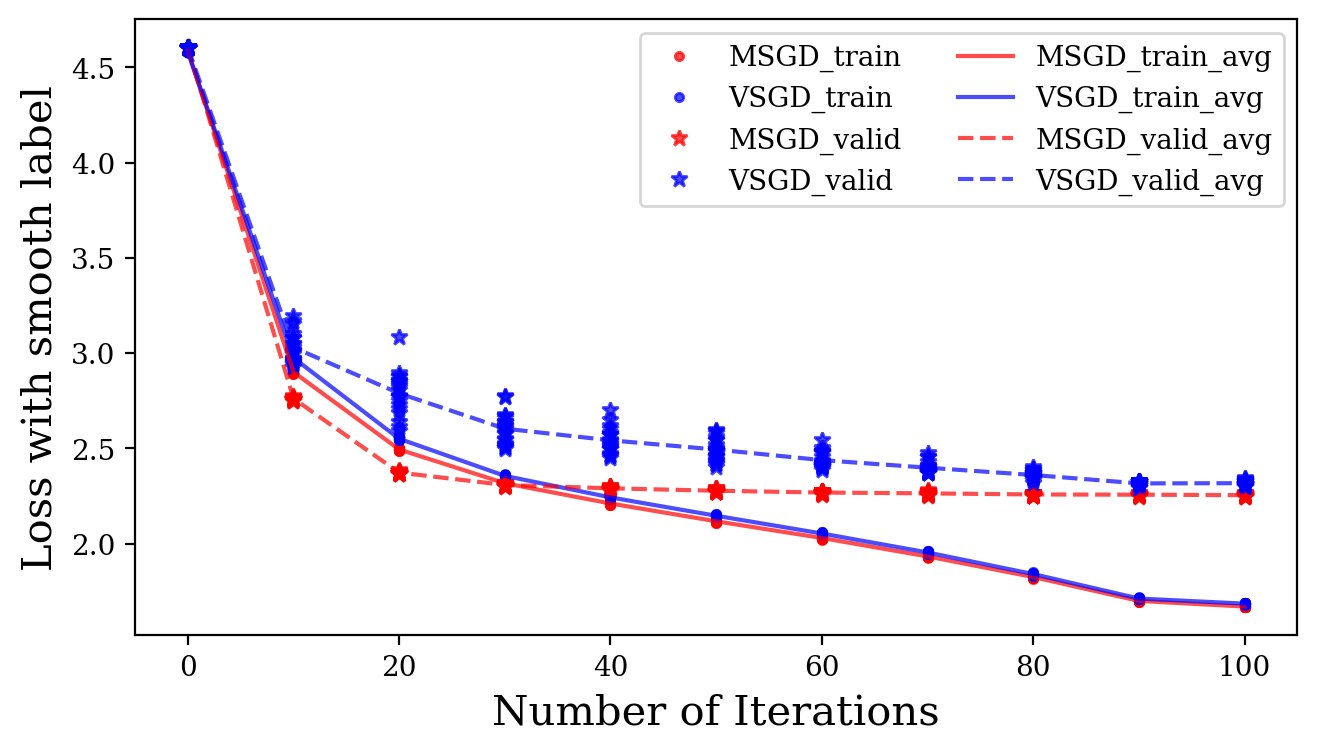}\label{fig:100_0.8}
	}
	\subfigure[$\eta_{\textrm{V}}=\eta_{\textrm{M}}/(1-\mu)=2.4$]{
		\includegraphics[width=0.31\textwidth]{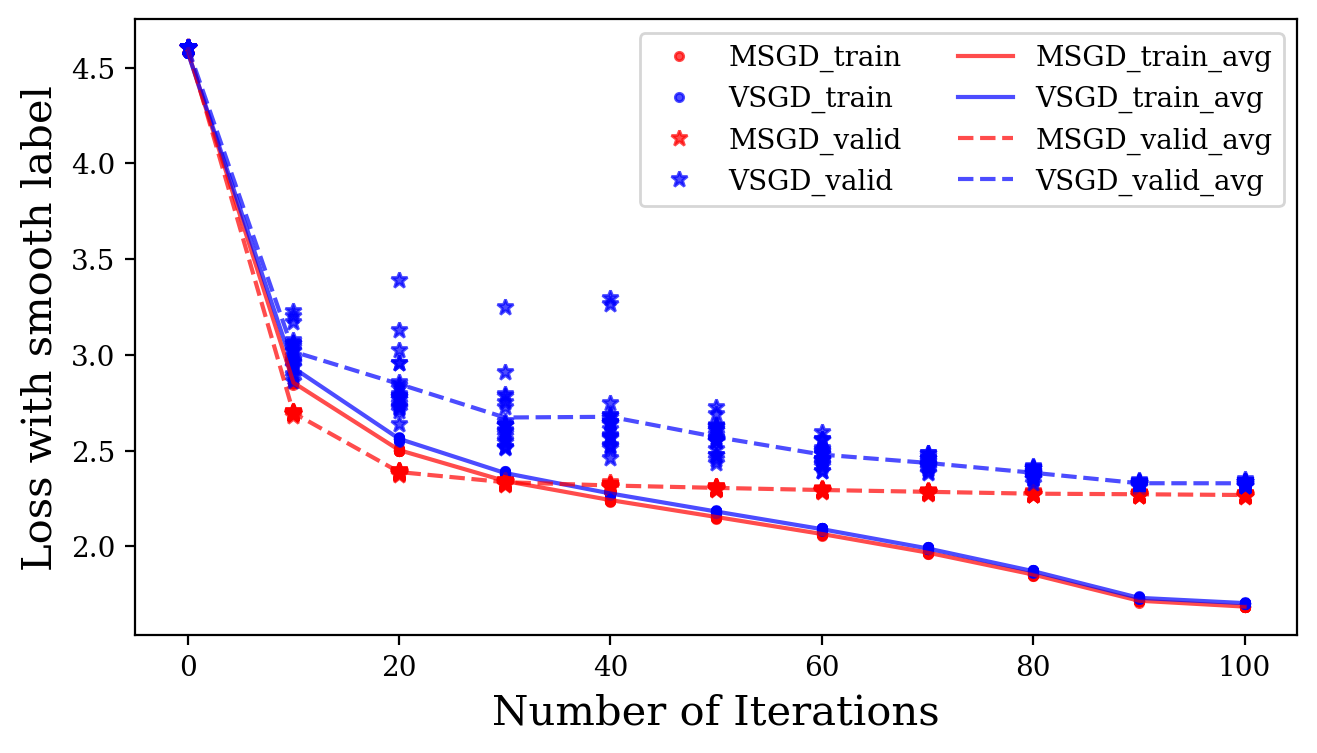}\label{fig:100_1.2}
	}\\
		\subfigure[$\eta_{\textrm{V}}=\eta_{\textrm{M}}/(1-\mu)=2.8$]{
		\includegraphics[width=0.31\textwidth]{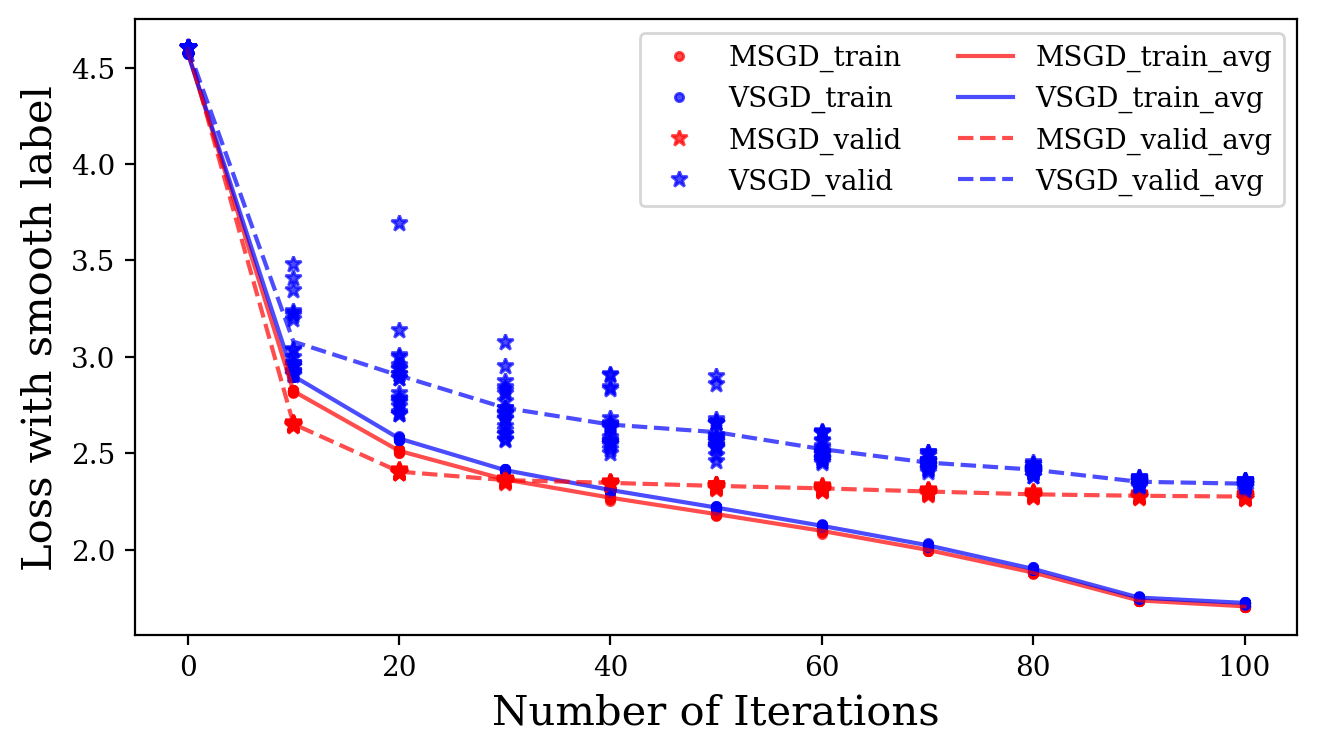}\label{fig:100_1.6}
	}
	\subfigure[$\eta_{\textrm{V}}=\eta_{\textrm{M}}/(1-\mu)=3.2$]{
		\includegraphics[width=0.31\textwidth]{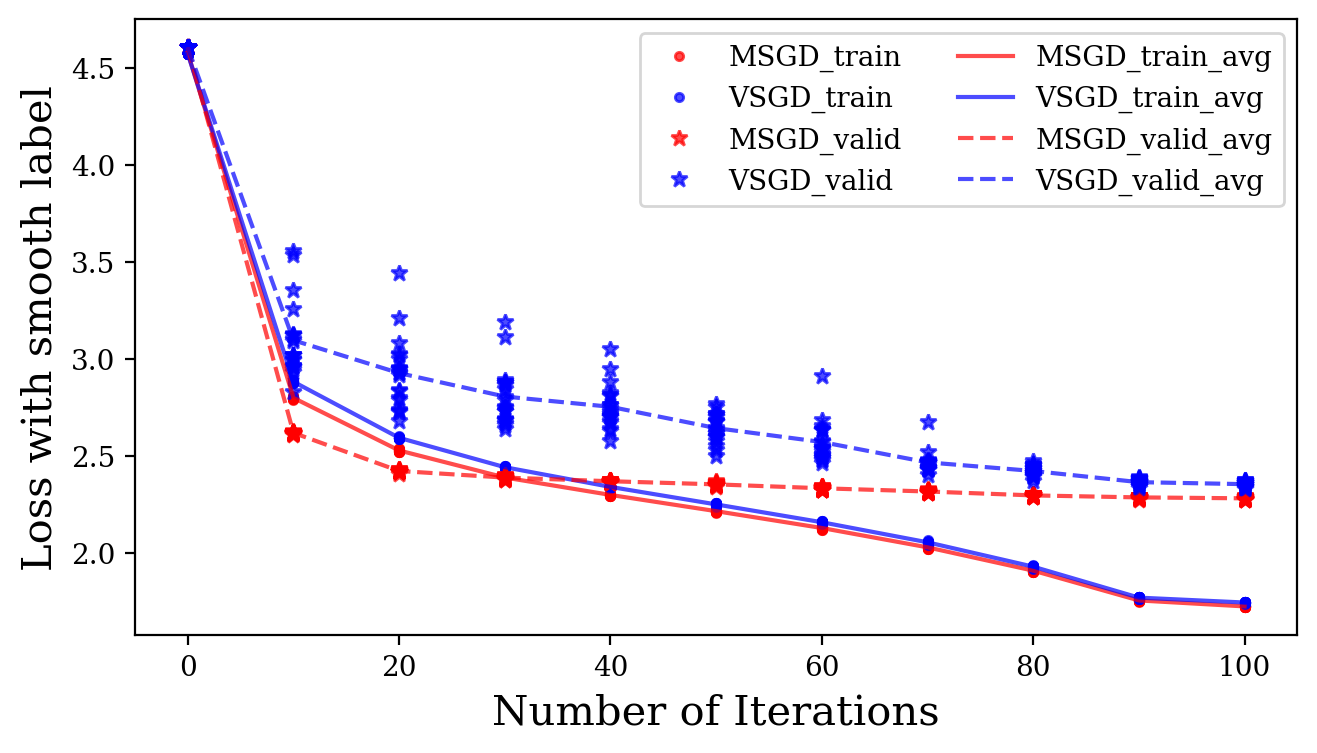}\label{fig:100_2}
	}
	\subfigure[$\eta_{\textrm{V}}=\eta_{\textrm{M}}/(1-\mu)=3.6$]{
		\includegraphics[width=0.31\textwidth]{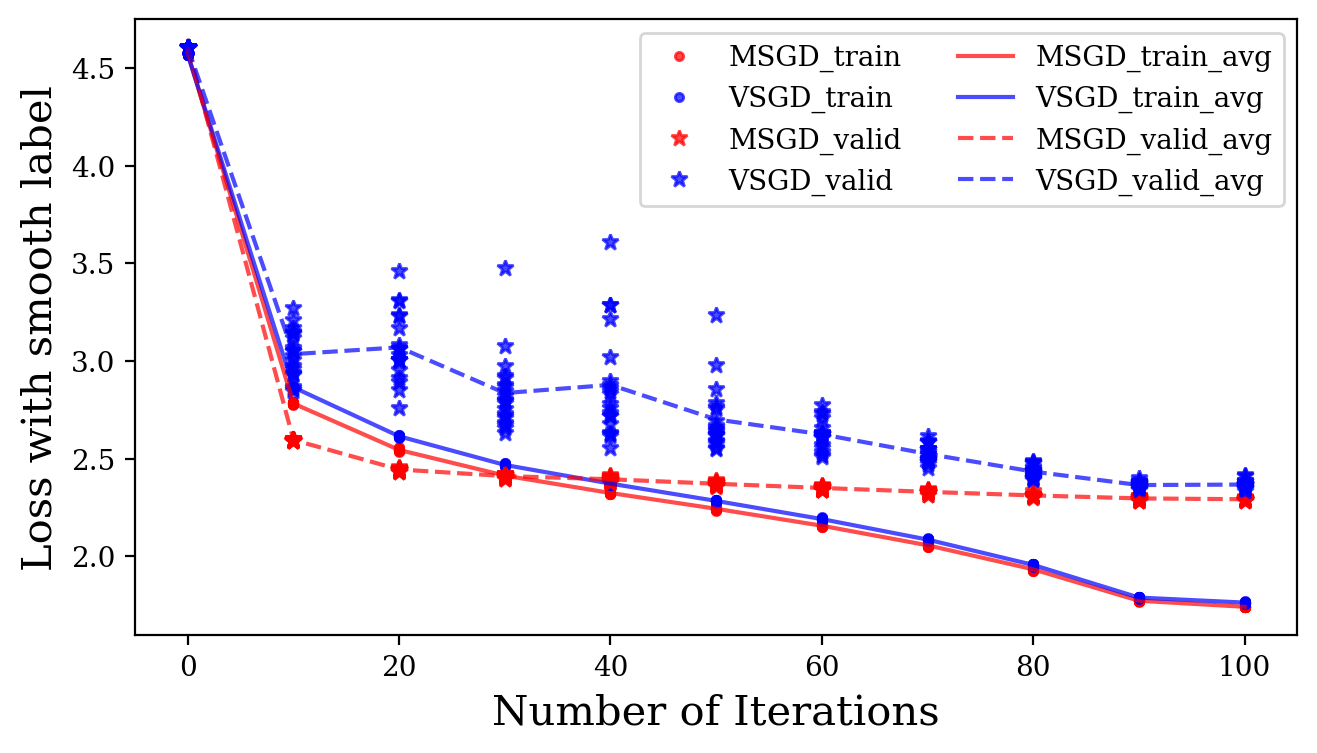}\label{fig:100_2.4}
	}\\
		\subfigure[$\eta_{\textrm{V}}=\eta_{\textrm{M}}/(1-\mu)=4$]{
		\includegraphics[width=0.31\textwidth]{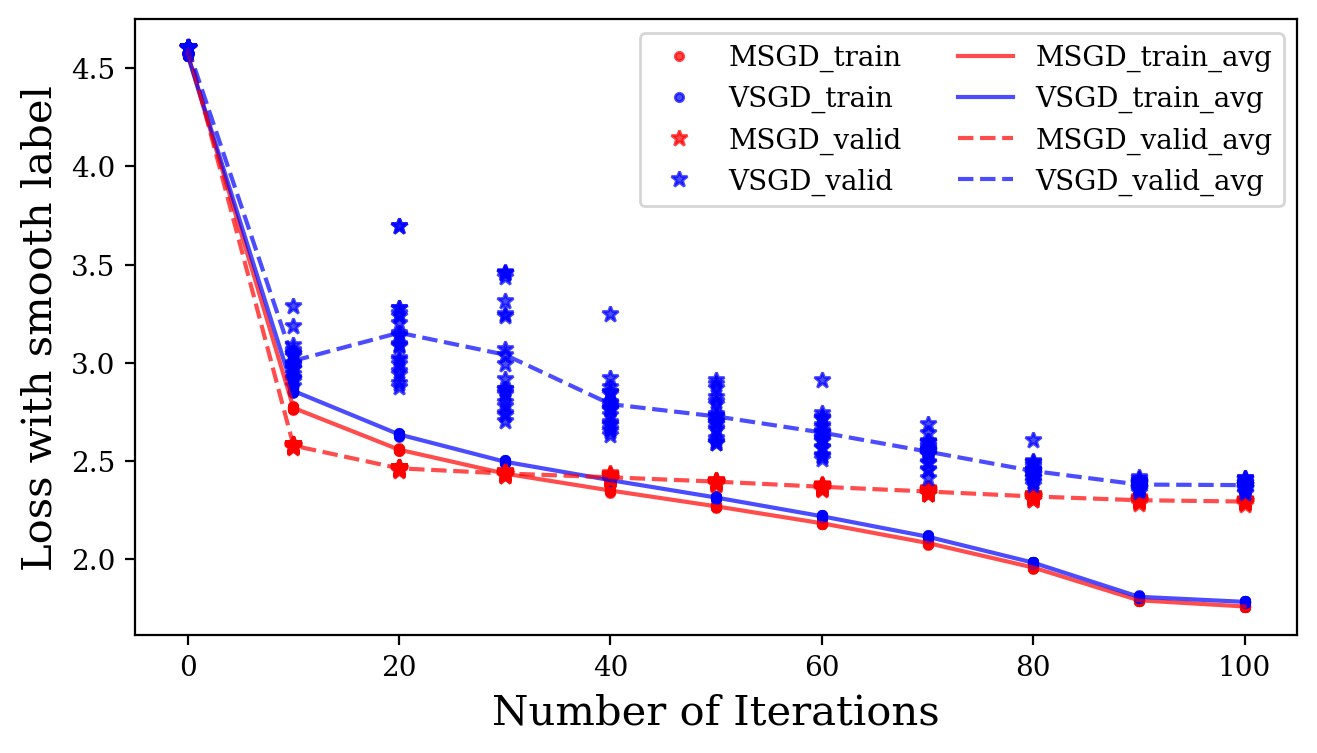}\label{fig:100_2.8}
	}
		\subfigure[$\eta_{\textrm{V}}=\eta_{\textrm{M}}/(1-\mu)=4.4$]{
		\includegraphics[width=0.31\textwidth]{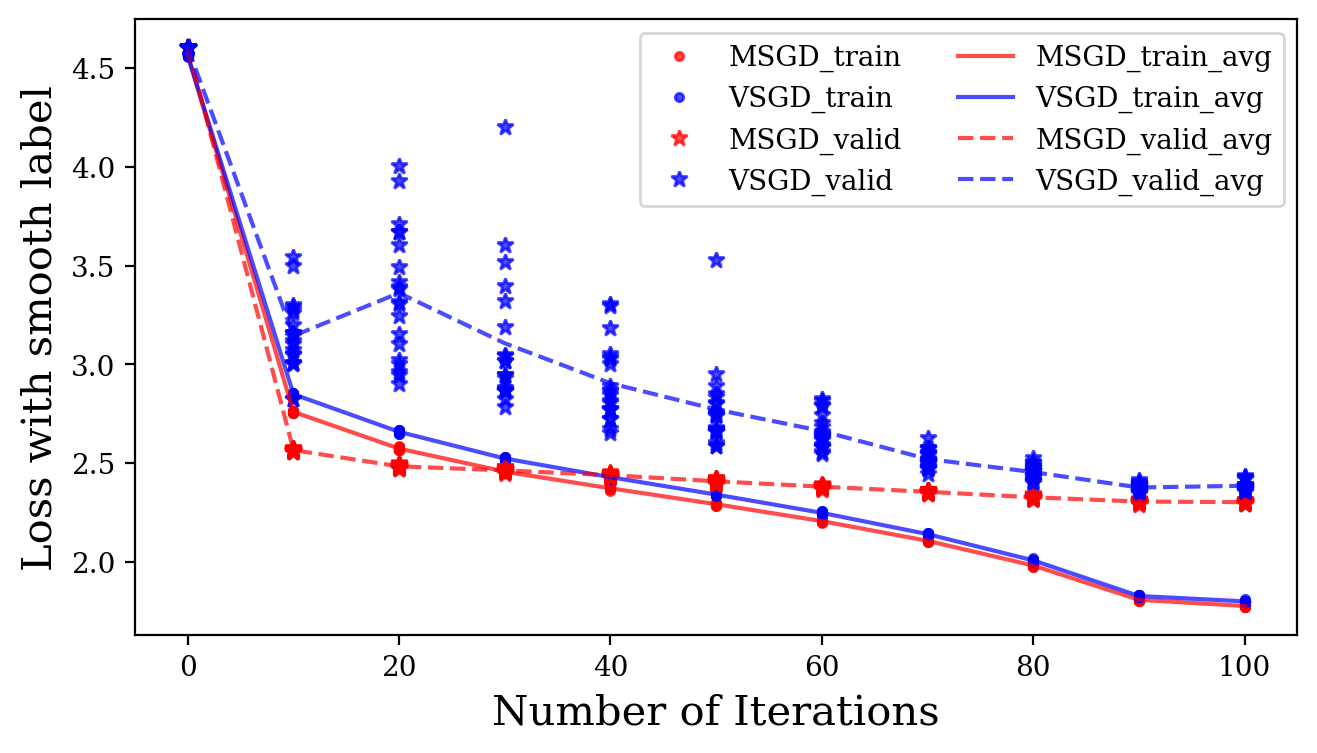}\label{fig:100_3.2}
	}
		\subfigure[$\eta_{\textrm{V}}=\eta_{\textrm{M}}/(1-\mu)=4.8$]{
		\includegraphics[width=0.31\textwidth]{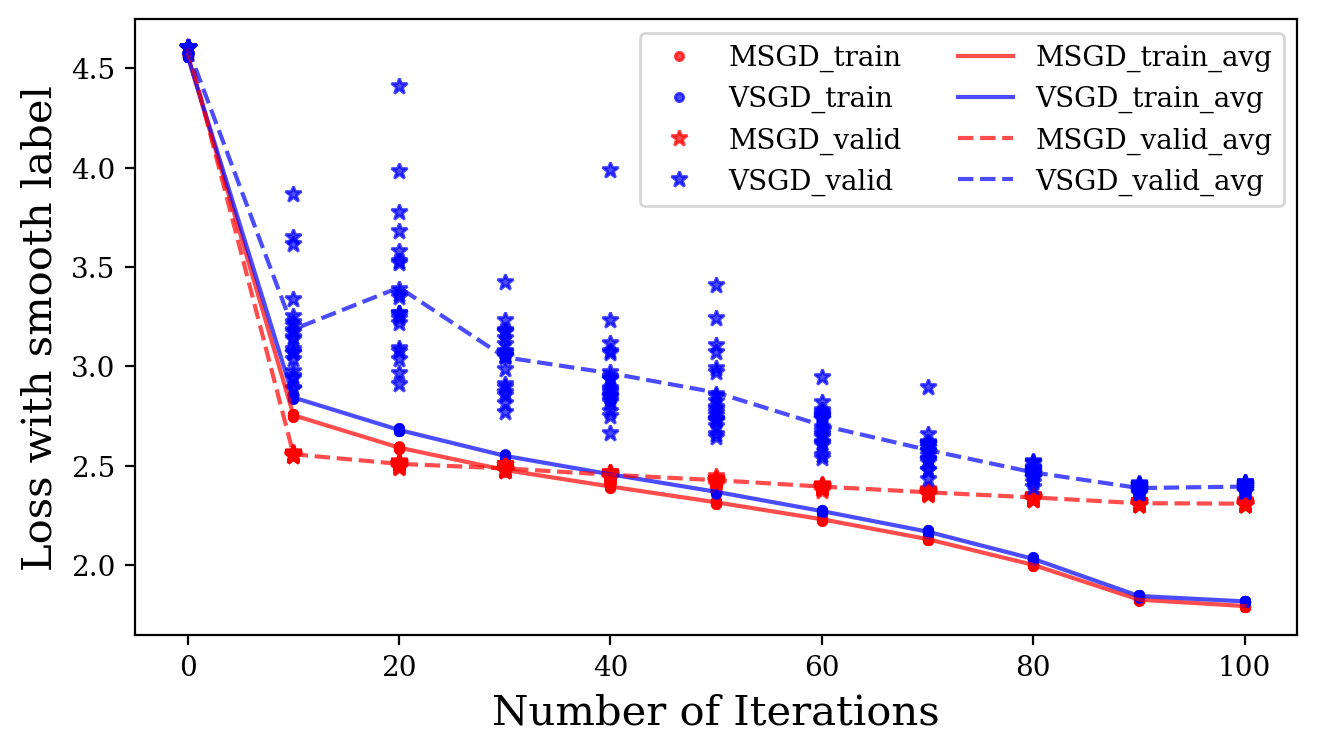}\label{fig:100_3.6}
	}\\
	\subfigure[$\eta_{\textrm{V}}=\eta_{\textrm{M}}/(1-\mu)=5.2$]{
		\includegraphics[width=0.31\textwidth]{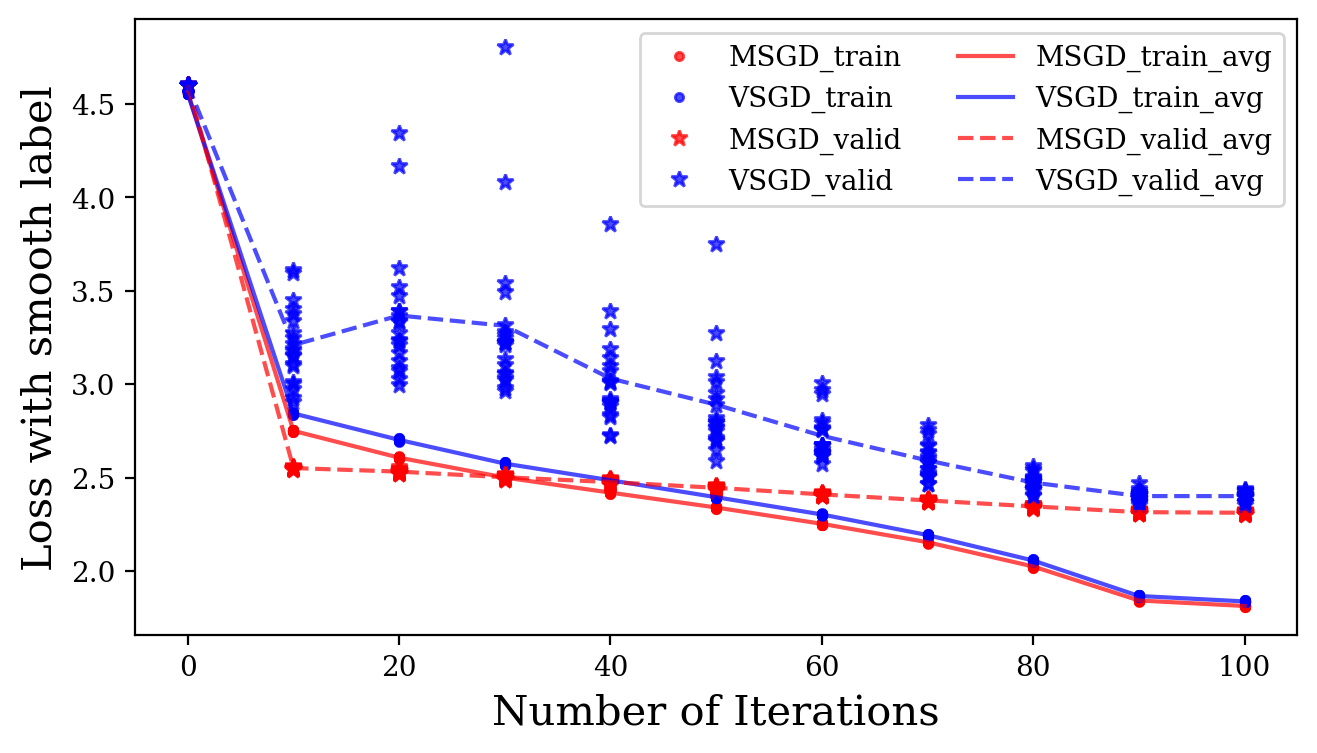}\label{fig:100_3.6}
	}
	\subfigure[$\eta_{\textrm{V}}=\eta_{\textrm{M}}/(1-\mu)=5.6$]{
		\includegraphics[width=0.31\textwidth]{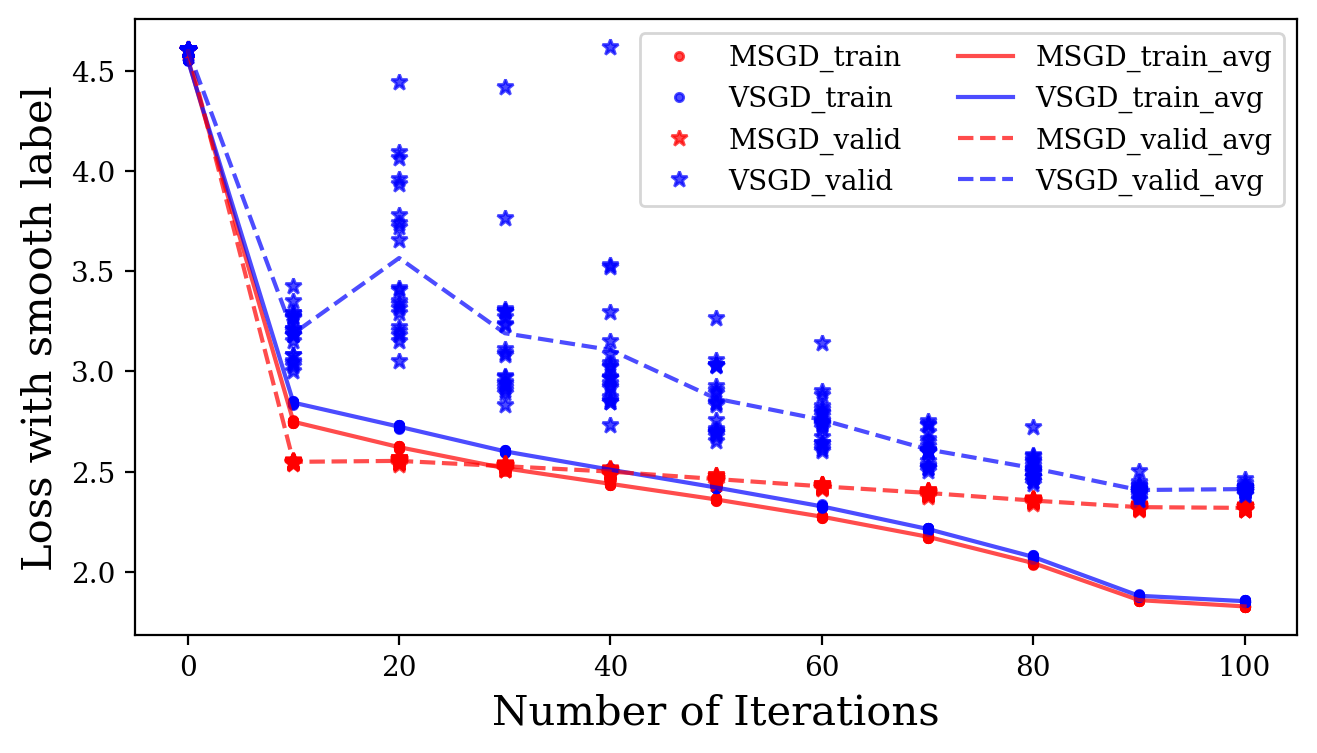}
	}
	\subfigure[$\eta_{\textrm{V}}=\eta_{\textrm{M}}/(1-\mu)=6$]{
		\includegraphics[width=0.31\textwidth]{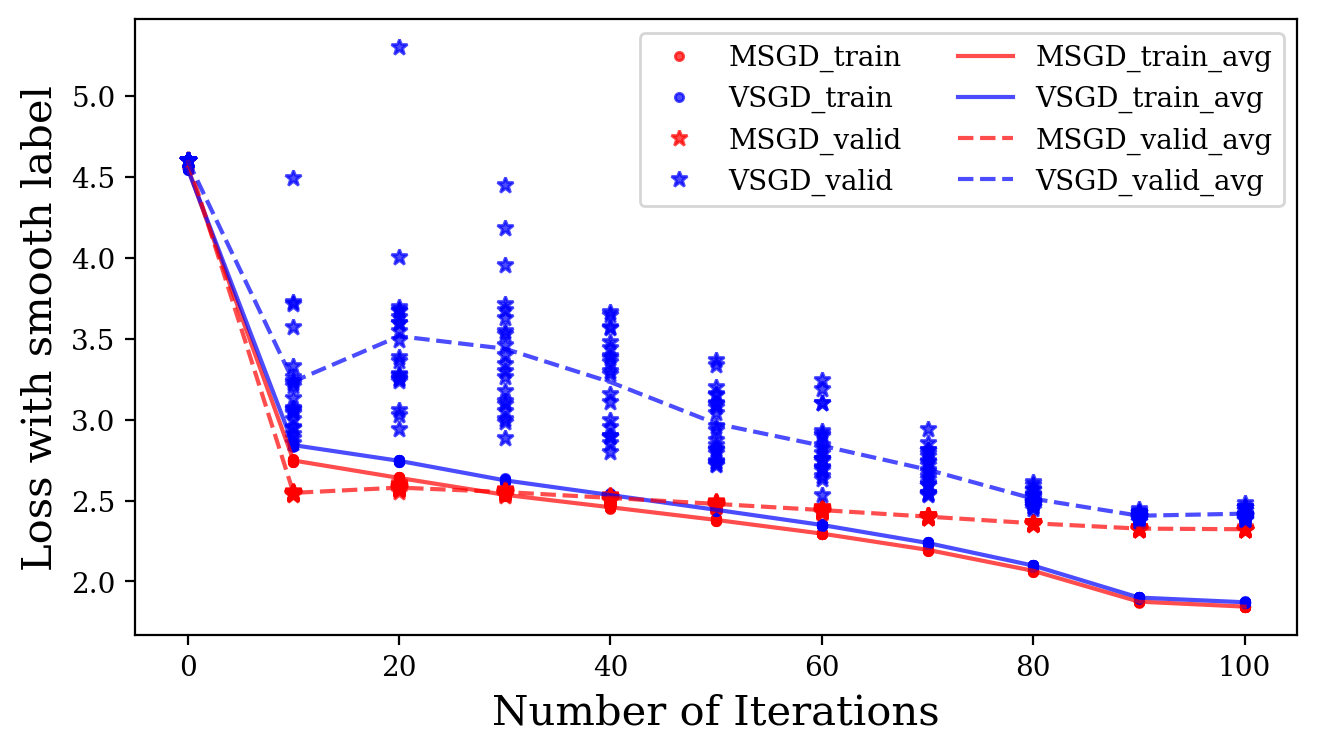}
	}
\caption{Experimental Results of  ResNet-9 on CIFAR-100. VSGD uses the Equivalent Step Sizes of MSGD.}
 
				\end{figure}

\end{document}